\documentclass{article}

\usepackage{microtype}
\usepackage{graphicx}
\usepackage{subcaption}
\usepackage{booktabs} % for professional tables

\usepackage{hyperref}

\usepackage[accepted]{icml2026}

\usepackage{amsmath, amsthm, amssymb}
\usepackage{amsfonts} % For \mathbb
\usepackage{booktabs}
\usepackage{multirow}
\usepackage{color}
\usepackage[table]{xcolor}
\usepackage{makecell}
\usepackage{xcolor}
\usepackage{xspace}
\usepackage{graphicx}
\usepackage{algorithm}
\usepackage{algorithmic}
\usepackage{tcolorbox}
\usepackage{wrapfig}
\usepackage{stfloats}

\newcommand{\SPECIAL}{\texttt{SPECIAL}\xspace} 
\newcommand{\SPECIALC}{\texttt{SPECIAL-C}\xspace} 

\usepackage[capitalize,noabbrev]{cleveref}

\theoremstyle{plain}
\newtheorem{theorem}{Theorem}[section]

\newtheorem{lemma}[theorem]{Lemma}
\newtheorem{corollary}[theorem]{Corollary}
\theoremstyle{definition}

\newtheorem{assumption}[theorem]{Assumption}
\theoremstyle{remark}
\newtheorem{remark}[theorem]{Remark}

\newcommand{\heatmaplevel}[1]{%
  \ifnum#1=1 \cellcolor{orange!0}\fi
  \ifnum#1=2 \cellcolor{orange!5}\fi
  \ifnum#1=3 \cellcolor{orange!10}\fi
  \ifnum#1=4 \cellcolor{orange!15}\fi
  \ifnum#1=5 \cellcolor{orange!20}\fi
  \ifnum#1=6 \cellcolor{orange!25}\fi
  \ifnum#1=7 \cellcolor{orange!30}\fi
  \ifnum#1=8 \cellcolor{orange!35}\fi
  \ifnum#1=9 \cellcolor{orange!40}\fi
}

\usepackage[textsize=tiny]{todonotes}

\icmltitlerunning{Server-Proximal Aggregation for FDIL under Partial Participation: Task-Uniform Convergence and Backward Transfer}

\begin{document}

\twocolumn[
  %\icmltitle{Task-Uniform Convergence and Backward Transfer \\in Federated Domain-Incremental Learning with Partial Participation}
   \icmltitle{Server-Proximal Aggregation for Federated Domain-Incremental Learning under Partial Participation: Task-Uniform Convergence and Backward Transfer}

  % It is OKAY to include author information, even for blind submissions: the
  % style file will automatically remove it for you unless you've provided
  % the [accepted] option to the icml2026 package.

  % List of affiliations: The first argument should be a (short) identifier you
  % will use later to specify author affiliations Academic affiliations
  % should list Department, University, City, Region, Country Industry
  % affiliations should list Company, City, Region, Country

  % You can specify symbols, otherwise they are numbered in order. Ideally, you
  % should not use this facility. Affiliations will be numbered in order of
  % appearance and this is the preferred way.
  \icmlsetsymbol{equal}{*}

  \begin{icmlauthorlist}
    \icmlauthor{Longtao Xu}{sbu}
    \icmlauthor{Jian Li}{sbu}
    % \icmlauthor{Firstname3 Lastname3}{comp}
    % \icmlauthor{Firstname4 Lastname4}{sch}
    % \icmlauthor{Firstname5 Lastname5}{yyy}
    % \icmlauthor{Firstname6 Lastname6}{sch,yyy,comp}
    % \icmlauthor{Firstname7 Lastname7}{comp}
    % %\icmlauthor{}{sch}
    % \icmlauthor{Firstname8 Lastname8}{sch}
    % \icmlauthor{Firstname8 Lastname8}{yyy,comp}
    %\icmlauthor{}{sch}
    %\icmlauthor{}{sch}
  \end{icmlauthorlist}

  \icmlaffiliation{sbu}{Stony Brook University, New York}
  % \icmlaffiliation{comp}{Company Name, Location, Country}
  % \icmlaffiliation{sch}{School of ZZZ, Institute of WWW, Location, Country}

  \icmlcorrespondingauthor{Longtao Xu}{longtao.xu@stonybrook.edu}
  \icmlcorrespondingauthor{Jian Li}{jian.li.3@stonybrook.edu}

  % You may provide any keywords that you find helpful for describing your
  % paper; these are used to populate the "keywords" metadata in the PDF but
  % will not be shown in the document
  \icmlkeywords{Machine Learning, ICML}

  \vskip 0.3in
]

% this must go after the closing bracket ] following \twocolumn[ ...

% This command actually creates the footnote in the first column listing the
% affiliations and the copyright notice. The command takes one argument, which
% is text to display at the start of the footnote. The \icmlEqualContribution
% command is standard text for equal contribution. Remove it (just {}) if you
% do not need this facility.

% Use ONE of the following lines. DO NOT remove the command.
% If you have no special notice, KEEP empty braces:
\printAffiliationsAndNotice{}  % no special notice (required even if empty)
% Or, if applicable, use the standard equal contribution text:
% \printAffiliationsAndNotice{\icmlEqualContribution}

\begin{abstract}

Real-world federated systems seldom operate on static data: input distributions drift while privacy rules forbid raw data sharing. We study Federated Domain-Incremental Learning (FDIL), where (i) clients are heterogeneous, (ii) tasks arrive sequentially with shifting domains, and (iii) the label space remains fixed. Two theoretical pillars remain missing for FDIL under partial participation: a guarantee of backward knowledge transfer (BKT) and a convergence rate that holds \emph{uniformly across the task sequence}. We introduce \SPECIAL (Server-Proximal Efficient Continual Aggregation for Learning), a simple, replay-free FDIL algorithm that adds a single server-side ``anchor'' to FedAvg: in each round, the server aggregates updates from a uniformly sampled subset of clients and then blends the result with the previous global model via a lightweight proximal step. This anchor curbs cumulative drift without replay buffers, synthetic data, or task-specific heads, leaving communication cost and model size unchanged. Our theory shows that \SPECIAL (i) \emph{preserves earlier tasks}: a BKT bound caps any increase in earlier-task loss by a drift-controlled term that shrinks with more rounds, local epochs, and participating clients; and (ii) \emph{achieves task-uniform, communication-efficient convergence} for non-convex FDIL with partial participation: $\mathcal{O}\!\big(\sqrt{E/(NT)}\big)$ in expected gradient norm, with $E$ local epochs, $T$ rounds, and $N$ participating clients, while explicitly separating optimization variance from inter-task drift. Experiments on standard FDIL benchmarks corroborate the theory.
\end{abstract}

%!TEX root =main.tex
\section{Introduction}\label{sec:intro}

Modern learning systems face \emph{temporal non-stationarity}: data arrive as a sequence of tasks whose distributions evolve over time. Continual learning (CL), also called incremental learning (IL), therefore develops algorithms that acquire new task knowledge efficiently, retain earlier performance to mitigate catastrophic forgetting, and exploit forward and backward transfer \citep{LopezPazRanzato2017GEM,chen2018lifelong}. In parallel, Federated Learning (FL) enables collaborative training across clients that hold heterogeneous data without sharing raw samples \citep{McMahan2017FedAvg}. FL introduces its own difficulties such as client heterogeneity, limited communication, and privacy constraints. When the \emph{temporal} shifts of CL meet the \emph{spatial} heterogeneity of FL, naive combinations fail: centralized CL violates privacy, while standard FL presumes stationarity  and offers no retention guarantees. Federated Continual Learning (FCL) \citep{yoon2021federated} emerges as the principled fusion addressing \emph{both} kinds of non-stationarity.

Most FCL studies implicitly target class-incremental scenarios (FCIL) in which new classes appear. Yet many real deployments exhibit \emph{domain shifts under a fixed label set}, namely the federated domain-incremental learning (FDIL) regime. For example,  a hospital consortium refines a diagnostic model while scanners, protocols, or demographics drift; labels stay fixed and raw images remain private. FDIL is therefore not a niche corner case but a common practical regime that demands methods tailored to domain drift \emph{and} federated privacy constraints.

Despite a flurry of algorithms such as memory replay \citep{Rebuffi2017iCaRL,Hou2019LUCIR,Wu2019LargeScaleIncremental}, regularization/proximal \citep{Zenke2017SI,LiHoiem2017LwF}, parameter isolation/expansion \citep{JavedWhite2019MetaRepCL}, and meta-learning \citep{Finn2017MAML,Riemer2019MetaTransferInterference}, \textit{two theoretical gaps persist in FDIL under simultaneous client heterogeneity and temporal domain shift}: (i) a \emph{backward knowledge transfer} (BKT) guarantee that quantifies when training on later tasks improves or at least preserves earlier tasks without storing past data; and
(ii) a \emph{task-uniform} convergence rate that holds across the entire sequence of tasks rather than only the last one, while accounting for stochastic noise, intra- and inter-client variance, and cumulative domain drift. These gaps are amplified under \emph{partial participation}, the prevailing deployment mode in which only a subset of clients is selected each round. To close these theoretical gaps, the primary question we seek to address is

%\vspace{-0.09in}
\begin{tcolorbox}[colback=white!5!white,colframe=white!75!white]
\textit{Can we design a simple FDIL framework that handles spatial heterogeneity, temporal non-stationarity, and partial participation, yet guarantees BKT and an all-task convergence rate?}
\end{tcolorbox}
%\vspace{-0.12in}
We answer this question with \SPECIAL (\textit{Server-Proximal Efficient Continual Aggregation for Learning}), a lightweight, replay-free framework for FDIL with partial participation. In each communication round, the server uniformly samples $N$ of $M$ clients (without replacement), aggregates only their updates, and then applies a single \emph{server-side proximal anchor} that blends the aggregated model with the previous global model via a small quadratic step. This one-line change to FedAvg curbs cumulative parameter drift, letting the model adapt to new domains while implicitly preserving earlier knowledge, \emph{without} replay buffers, synthetic data, or task-specific heads, so both communication cost and model size remain unchanged. Unlike replay- or expansion-based FDIL methods \citep{li2024sr,qi2023better}, \SPECIAL keeps the footprint constant; unlike client-side regularizers \citep{li2024personalized}, %such as FedProx \citep{li2020federated}, 
the proximal interaction occurs \emph{only at the server}, and our analysis targets continual, \emph{task-uniform} guarantees.
We deliver two key theoretical contributions:
%\begin{itemize}

$\bullet$ \textbf{Backward knowledge transfer bound under partial participation.} After partially or fully training on a new task, the loss on any earlier task is provably no greater than its previous value plus a drift-controlled term that \emph{shrinks with more rounds, more local epochs, and larger per-round participation $N$}. The bound explicitly captures stochastic variance and client heterogeneity.

$\bullet$ \textbf{First task-uniform convergence rate for FDIL with partial participation.} 
We establish, to our knowledge, the \textbf{first} task-uniform non-convex convergence guarantee for FDIL under partial participation.
With suitable local and global learning rates, the expected gradient norm of \SPECIAL decays at
$\mathcal{O}(\sqrt{E/(N T)}),$ where $E$ is the number of local epochs, $N$ the number of participating clients per round, and $T$ the number of rounds. The rate matches the communication efficiency of FedAvg on a single stationary task when expressed per participating update, holds simultaneously across all tasks, %for the \emph{entire sequence} of drifting tasks and 
is independent of the total client pool size $M$, % (for fixed $N$). 
and includes an explicit inter-task drift term that quantifies how domain shift accumulates and how it can be suppressed by tuning the proximal weight and the round/epoch trade-off.
%\end{itemize}

Together, these results connect continual-learning retention theory with federated optimization under realistic sampling, demonstrating that a single server-side proximal anchor can simultaneously curb cumulative drift and enable \emph{provably efficient, task-uniform} learning over long sequences of evolving domains with partial participation.

\section{Problem Formulation}
\subsection{Federated Domain-Incremental Learning}

In the CL setting, a model observes a sequence of tasks $\mathcal{T}=\{\mathcal{T}_1,\ldots,\mathcal{T}_K\}.$  Task $\mathcal{T}_i=\{(x_n^{i},y_n^{i})\}_{n=1}^{N_i}$ contains $N_i$ data samples with inputs $x_n^{i}\in\mathcal{X}^i$ and labels $y_n^{i}\in\mathcal{Y}^i$.  In domain-incremental learning (DIL), all tasks share the same label space ($\mathcal{Y}^i=\mathcal{Y}$ for every task $i$), while their input domains differ ($\mathcal{X}^i\neq\mathcal{X}^j$ for some $i\neq j$).
We extend DIL to a federated setting with a central server and $M$ clients. Client $m$ can access only its own local data for each task.  At task $K$, the goal of FDIL is to find parameters $\theta_K\in\mathbb{R}^d$ that minimize the cumulative loss across all $K$ tasks:
\begin{equation}\label{global objective for FDIL}
    \min_{\theta_K\in\mathbb{R}^d} f_{1:K}\left(\theta_K\right) \!=\! \sum_{i=1}^K f_i\left(\theta_K\right) \!=\! \frac{1}{M}\sum_{i=1}^K\sum_{m=1}^Mf_{i,m}\left(\theta_K\right),
\end{equation}
where $f_{i,m}\left(\theta\right) = \mathbb{E}_{\left(\boldsymbol{x},y\right)\sim \mathcal{T}_{i,m}}\left[f_{i,m}\left(\theta;\boldsymbol{x},y\right)\right]$ is the possibly non-convex loss of client $m$ on task $i$. Figure~\ref{Fig: structure of FDIL}  illustrates the FDIL workflow.  
Eq.~(\ref{global objective for FDIL}) can be decomposed into the loss $f_{K}$ on the current task $K$ and the cumulative loss $f_{1:K-1}$ on previous tasks:
\begin{equation}\label{global objective for FDIL expansion}
    f_{1:K}\left(\theta_K\right) = \underbrace{f_K\left(\theta_K\right)}_{\text{current}} + \underbrace{f_{1:K-1}\left(\theta_K\right)}_{\text{previous}}.
\end{equation}

\begin{figure}[t]
    \centering
    \includegraphics[width=.9\linewidth]{structure.jpg} 
    \caption{Illustration of FDIL on the PACS dataset.
    }
    \label{Fig: structure of FDIL}
\end{figure}

Because privacy regulations and storage limits prevent the server or clients from keeping raw data from earlier tasks, $f_{1:K-1}(\theta_K)$ cannot be evaluated exactly.  It can, however, be approximated without full access to past data samples.  Experience-replay approaches retain a small, representative subset of earlier data to stand in for the missing objective.  In our setting, \textit{we instead treat the previous global model $\theta_{K-1}$, the parameter vector obtained after finishing task $K-1$, as a compact summary of prior knowledge.}  Although it stores no raw samples, \textit{$\theta_{K-1}$ encodes enough information to regularize learning on the new task and protect performance on earlier ones.}

At the start of task $K$, we initialize the model with the parameters learned on task $K{-}1$, that is, $\theta_K^{0}=\theta_{K-1}$.  Inspired by regularization-based continual-learning methods, we add a proximal term to the objective to discourage large departures from $\theta_{K-1}$.  This lightweight anchor curbs catastrophic forgetting by keeping optimization centered on the previous solution while still allowing the model to absorb new information from task $K$. 
Consequently, we optimize the following regularized surrogate for Eq.~(\ref{global objective for FDIL}):
\begin{equation}
    \theta_K = \underset{\theta\in\mathbb{R}^d}{\arg\min} \big\{f_{1:K}(\theta) = \underbrace{f_K(\theta)}_{\text{current}} + \underbrace{\lambda \Vert\theta - \theta_{K-1}\Vert^2}_{\text{previous}}\big\}.
\end{equation}

\subsection{Client- vs. Server-Side Proximal Terms}\label{sec: C vs S}

Most regularization-based approaches keep prior knowledge in play during local training \citep{li2020federated,li2024personalized,zhang2023target}. In the training process on task $i, i
\geq 2$, each client $m$ solves 
\begin{equation}
    \theta_{i,m} = \underset{\theta\in\mathbb{R}^d}{\arg\min} \left\{f_{i,m}\left(\theta\right) + \lambda\left\Vert\theta-\theta_{i-1}\right\Vert^2\right\},
\end{equation}
so every local update is pulled towards the previous global model $\theta_{i-1}$ in each local epoch.  While this continual ``review'' helps preserve earlier knowledge, it can also hinder adaptation to the new task, known as the classic stability-plasticity dilemma \citep{parisi2019continual,grossberg2013adaptive}.

\begin{figure}[t]
    \centering
    \includegraphics[width=1.0\linewidth]{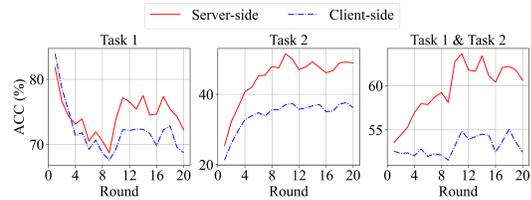}  
    %\vspace{-0.2in}
    \caption{Test accuracy on Task $1$, Task $2$, and their union for Digit-10 under client-side vs. server-side proximal terms.}
    \label{Fig: comparison experiment}
    %\vspace{-0.1in}
\end{figure}

We illustrate the effect on the first two tasks of Digit-10 (see details in Appendix~\ref{sec: comparison experiment}).  As Figure~\ref{Fig: comparison experiment} shows, the client-side proximal term protects task $1$ accuracy during the initial rounds of task $2$ training but learns task $2$ more slowly.  In contrast, a server-side proximal term lets clients update freely and applies the anchor only at aggregation, striking a better balance: it reaches higher accuracy on the new task without extra forgetting and even exhibits backward transfers, with task $1$ accuracy increasing after $10$ rounds on task $2$, yielding the best overall performance. 

This observation motivates our server-side regularization design, which reconciles stability and plasticity more effectively than client-side regularizers (see Section~\ref{sec:exp} for numerical evaluations).

\section{The \SPECIAL Algorithm} %: Server-Proximal Efficient Continual Aggregation for Learning}
\label{sec:special}

\SPECIAL augments {FedAvg} with a \textit{server-side proximal anchor}  that links each round's update to the global model obtained at the end of the \textit{previous} task.   The anchor functions as a compact memory: it curbs catastrophic drift without changing the client workflow or enlarging communication.

\begin{algorithm}[tb]
    \caption{The \SPECIAL\ Algorithm}
\label{alg:special}
\begin{algorithmic}[1]
\STATE Initialize $\theta_0$
\FOR{task $i = 1$ to $K$}
    \STATE Set task initial point: $\theta_i^0 = \theta_{i-1}$
    \FOR{round $t = 0$ to $T-1$}
        \STATE The server randomly selects a set $\mathcal{S}_i^t$ of $N$ clients
        \FOR{client $m\in\mathcal{S}_i^t$ \textbf{in parallel}}
            \STATE Download from server: $\theta_{i,m}^{t,0} = \theta_i^t$
            \FOR{epoch $e = 0$ to $E-1$}
                \STATE Calculate: $g_{i,m}^{t,e} = \nabla f_{i,m}\left(\theta_{i,m}^{t,e}, \xi_{i,m}^{t,e}\right)$
                \STATE Local update: $\theta_{i,m}^{t,e+1} = \theta_{i,m}^{t,e} - \gamma_L g_{i,m}^{t,e}$
            \ENDFOR
            \STATE $\Delta_{i,m}^t = \theta_{i,m}^{t,E} - \theta_{i,m}^{t,0} = -\gamma_L \sum_{e=0}^{E-1} g_{i,m}^{t,e}$
            \STATE Send $\Delta_{i,m}^t$ to server
        \ENDFOR
        \STATE Server receives all $\Delta_{i,m}^t$
        \STATE Compute: $\Delta_i^t = \frac{1}{N} \sum_{m\in\mathcal{S}_i^t} \Delta_{i,m}^t$
        \STATE Intermediate update: $\bar{\theta}_i^{t+1} = \theta_i^t + \gamma_G \Delta_i^t$
        \STATE Server update according to Eq.~(\ref{eq:server-update}) 
    \ENDFOR
    \STATE Set final model of task $i$: $\theta_i = \theta_i^T$
\ENDFOR
  \end{algorithmic}
\end{algorithm}

\textbf{Federated skeleton.} At the start of each communication round for task~$i$, the server broadcasts the current global model~$\theta_i^{t}$ to a subset of $N$ out of $M$ clients. We denote the selected index set by $\mathcal{S}_i^t, i\in[K], t\in[T]$.  Each selected client downloads the model and performs $E$ epochs of stochastic-gradient descent (SGD) on \textit{only} its own data from task $i$. At the $e$-th epoch, we assume the gradient of client $m$ estimated on the local data sample $\xi_{i,m}^{t,e}$ is an unbiased estimator, which is denoted as $g_{i,m}^{t,e} = \nabla f_{i,m}\left(\theta_{i,m}^{t,e}, \xi_{i,m}^{t,e}\right)$. After local optimization, the client transmits the update vector $\Delta_{i,m}^{t}=\theta_{i,m}^{t,E}-\theta_i^{t}$ rather than the full model back to the server.  This vector summarizes all information the client has extracted from its previous data during the round. 
After collecting the difference vectors $\Delta_{i,m}^t$ from all participating clients, the server computes their average, exactly the aggregation step used by {FedAvg}.  Applying this average, scaled by a global learning rate $\gamma_G$, yields an intermediate model $\bar{\theta}_{i}^{t+1}\;=\;\theta_i^{t}+\gamma_G\,\Delta_i^{t}$ that reflects what the current task has taught the federation so far, {and $\bar{\theta}_{i}^{t+1}$ is the aggregation result for round $t$ when the current task is the first task of the sequence.  Otherwise, the server blends $\bar{\theta}_{i}^{t+1}$ with the final model from the previous task,~$\theta_{i-1}$, via a lightweight proximal step before the round ends (detailed below)}.  The resulting model~$\theta_i^{t+1}$ is broadcast at the start of next round.

\textbf{Server-side proximal anchor.} Because clients optimize only the current-task loss, naively applying {FedAvg} may rapidly forget features useful to earlier tasks. To counter this, the server solves the quadratic problem
\begin{equation}\label{Eq: global update rule}
    \theta_i^{t+1} = \underset{u\in\mathbb{R}^d}{\arg\min}\left\{\left\Vert u-\bar{\theta}_i^{t+1}\right\Vert^2 + \lambda\left\Vert u-\theta_{i-1}\right\Vert^2\right\},
\end{equation}
whose closed-form solution is
\begin{equation}\label{eq:server-update}
    \theta_i^{t+1}=\frac{1}{1+\lambda}\bar{\theta}_i^{t+1}+\frac{ \lambda}{1+\lambda}\theta_{i-1}.
\end{equation}

Here, $\bar{\theta}_{i}^{t+1}$ captures information from the current task, whereas $\theta_{i-1}$ embodies all knowledge accumulated thus far.  Their weighted average preserves earlier skills while still adapting to the new domain.  The coefficient $\lambda>0$ is \SPECIAL’s \textit{only} new hyperparameter and governs the stability–plasticity trade-off: $\lambda=0$ reduces to plain {FedAvg}, whereas larger $\lambda$ assigns more weight to retention.
The complete procedure is summarized in Algorithm~\ref{alg:special}.

\section{Theoretical Analysis}\label{sec:theory}

We first state technical assumptions used throughout. We then present two main results: (i) a BKT guarantee showing that training on a new task improves or at least preserves the loss on each earlier task, and (ii) a task-uniform, communication-efficient convergence rate for \SPECIAL that holds for non-convex objectives and non-IID data under partial participation. Full proofs appear in Appendix~\ref{app:proofs}.

\subsection{Assumption}

\begin{assumption}[Bounded Gradients]\label{Assumption Bounded Gradients}
    For every task $i\in[K]$ and client $m\in[M]$, the stochastic gradient is uniformly bounded, i.e.,  $\left\Vert\nabla f_{i,m}\left(\theta, \xi\right)\right\Vert\leq B$, for $\theta\in\mathbb{R}^d$, $\xi\in\mathcal{T}_{i,m}$, and a constant $B$.
\end{assumption}

\begin{assumption}[$L$-Smooth]\label{Assumption L-Smooth}
    For every task $i\in[K]$, client $m\in[M]$, and all parameter vectors $\theta, \theta'\in \mathbb{R}^d$, the local objective $f_{i,m}\left(\theta\right)$ is $L$-Smooth, i.e., $\left\Vert\nabla f_{i,m}\left(\theta\right) - \nabla f_{i,m}\left(\theta'\right)\right\Vert\leq L\left\Vert\theta-\theta'\right\Vert$.
\end{assumption}

\begin{assumption}[Unbiased Local Gradient estimator]\label{Assumption Unbiased Local Gradient estimator}
    For every task $i\in[K]$, client $m\in[M]$, and communication round $t\in[T]$ ,the local stochastic gradient computed on a random sample $\xi_{i,m}^t$ is an unbiased estimate of the true local gradient, i.e., $\mathbb{E}\left[\nabla f_{i,m}\left(\theta_{i,m}^t, \xi_{i,m}^t\right)\right] = \nabla f_{i,m}\left(\theta_{i,m}^t\right)$.
\end{assumption}

\begin{assumption}[Bounded Local Gradient Difference]\label{Assumption Bounded Local Gradient Difference}
    There exists a constant $\sigma_L>0$, such that for every task $i\in[K]$, client $m\in[M]$, and  round $t\in[T]$, the variance of stochastic gradients in client $m$ at task $i$ is bounded by $\mathbb{E}\left\Vert\nabla f_{i,m}\left(\theta_{i,m}^t, \xi_{i,m}^t\right) - \nabla f_{i,m}\left(\theta_{i,m}^t\right)\right\Vert^2\leq \sigma_L^2.$
\end{assumption}

\begin{assumption}[Bounded Intra-task Gradient Difference]\label{Assumption Bounded Intra-task Gradient Difference}
    There exists a constant $\sigma_G>0$, such that for every task $i\in[K]$, client $m\in[M]$, and round $t\in[T]$, the global variability of local gradient from each client's loss function can be bounded by $\left\Vert\nabla f_{i,m}\left(\theta_i^t\right) - \nabla f_{i}\left(\theta_i^t\right)\right\Vert^2\leq\sigma_G^2.$
\end{assumption}

\begin{assumption}[Bounded Inter-task Gradient Difference]\label{Assumption Bounded Inter-task Gradient Difference}
    There exists a constant $\sigma_T>0$, such that for every pair of tasks $i,j\in[K]$ and every parameter vector $\theta\in\mathbb{R}^d$, the difference of their global gradients can be bounded by $\left\Vert\nabla f_{i}\left(\theta\right) - \nabla f_{j}\left(\theta\right)\right\Vert^2\leq \sigma_T^2.$
    % \begin{equation}
    %     \left\Vert\nabla f_{i}\left(\theta\right) - \nabla f_{j}\left(\theta\right)\right\Vert^2\leq \sigma_T^2.
    % \end{equation}
\end{assumption}

Assumption~\ref{Assumption Bounded Gradients} is routinely used to control magnitude of stochastic gradients in distributed optimization~\citep{lin2022beyond,li2019convergence,reddi2020adaptive}.
Assumption~\ref{Assumption L-Smooth} is standard for non-convex analysis.
Assumptions~\ref{Assumption Unbiased Local Gradient estimator} and~\ref{Assumption Bounded Local Gradient Difference} bound, respectively, the sampling bias and variance introduced by per-client minibatches. Assumption~\ref{Assumption Bounded Intra-task Gradient Difference} is the usual way to quantify \emph{client heterogeneity} in FL. For inter-task drift we additionally assume a bounded gradient discrepancy across tasks (Assumption~\ref{Assumption Bounded Inter-task Gradient Difference}); this \emph{does not} require tasks to be similar. The constant plays the role of a variation/drift budget common in non-stationary optimization and continual learning: our bounds degrade monotonically with the task-drift level, recovering the stationary single-task setting when the drift is zero. This mirrors ``positive correlation'' or path-length style conditions used to formalize task relatedness in CL~\citep{lin2022beyond}. A formal connection is given in Corollary~\ref{corollary} in Appendix~\ref{Corollary of the Assumption}.

%Assumption~\ref{Assumption Bounded Inter-task Gradient Difference} upper-bounds the \emph{worst-case} discrepancy of global gradients across tasks. Tasks may differ arbitrarily; the constant $\sigma_T$ plays the role of a \emph{drift/variation budget} that is standard in non-stationary analysis. Our bounds degrade monotonically with $\sigma_T$: when $\sigma_T=0$ the setting collapses to stationary FL and the rates match single-task FedAvg, whereas larger $\sigma_T$ reflects harder regimes with greater domain shift. Thus the assumption quantifies, rather than rules out, task heterogeneity. 

%Although this condition is unnecessary in conventional FL where the objective is fixed, it parallels variation-budget or path-length conditions commonly used in non-stationary optimization and continual learning. It is also conceptually related to the \emph{positive correlation} notion used to model task similarity in CL theory \citep{lin2022beyond}. In the supplementary material (Corollary~\ref{corollary}) we formalize this connection. 

%It is worth noting that Assumption \ref{Assumption Bounded Inter-task Gradient Difference} is uncommon in traditional FL settings, where the task remains fixed throughout the learning process. However, a similar assumption—known as positive correlation—has been used to quantify task heterogeneity in the context of continual learning \cite{lin2022beyond},  and we can even build a bridge to connect them as the Corollary \ref{corollary} shown in supplement materials \ref{Corollary of the Assumption}.

\subsection{Backward Knowledge Transfer Analysis}

We first analyze backward knowledge transfer, namely the phenomenon that training on a new task can \textit{improve} performance on earlier tasks \citep{benavides2022theory}. % Theorem~\ref{Theorem 1} formalizes this effect for \SPECIAL, and its full proof appears in Appendix~\ref{Proof of Theorem 1}.

\begin{theorem}[BKT under partial participation]\label{Theorem 1}
    Let $\theta_K^{t}$ denote the global model after round $t$ of task $K$, and set $\theta_K^{0}=\theta_{K-1}$. In each round $\tau$, the server samples a subset $\mathcal S_K^\tau\subseteq[M]$ of size $N$ uniformly without replacement and aggregates only those clients. Suppose for every $\tau\in [t]$, local epoch
$e\in[E]$, and client $m\in[M]$, the gradients are $\epsilon$-aligned with earlier-task gradient at task-$K$ start, i.e., 
$\left\langle\nabla f_{1:K-1}\left(\theta_K^0\right),\nabla f_{K,m}\left(\theta_{K,m}^{\tau,e}\right)\right\rangle\geq \epsilon\left\Vert\nabla f_{1:K-1}\left(\theta_K^0\right)\right\Vert\cdot\left\Vert\nabla f_{K,m}\left(\theta_{K,m}^{\tau,e}\right)\right\Vert, \epsilon\in(0,1)$, and local and global learning rates satisfy $\gamma_L\leq\frac{2\epsilon\left\Vert\nabla f_{1:K-1}\left(\theta_K^0\right)\right\Vert}{BLEt\sqrt{ME\cdot\left(\frac{1}{\lambda^2+2\lambda}\right)}}$ and $\gamma_G\leq\frac{1}{ \sqrt{N}\left(K-1\right)}$. Then, for every $t\ge1$, the server-updated model $\theta_K^t$ generated by \SPECIAL satisfies:
        \begin{align}\label{T1}
            \mathbb{E}_t\left[f_{1:K-1}\left(\theta_K^t\right)\right]\leq &f_{1:K-1}\left(\theta_K^0\right) \nonumber\allowdisplaybreaks\\
            &+ \frac{2\epsilon^2\sigma_L^2\left\Vert\nabla f_{1:K-1}\left(\theta_K^0\right)\right\Vert^2}{\left(K-1\right)tEMNLB^2},
        \end{align}
    where the expectation is over client sampling and data stochasticity.
\end{theorem}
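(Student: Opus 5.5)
The plan is to unroll the server recursion over the rounds of task $K$ and then apply a single descent-lemma step for $F:=f_{1:K-1}$ around the anchor $\theta_K^0=\theta_{K-1}$. Write $\alpha:=\frac{1}{1+\lambda}$. Equation~(\ref{eq:server-update}) with $\theta_{K-1}=\theta_K^0$ gives
\[
\theta_K^{\tau+1}-\theta_K^0=\alpha\big(\theta_K^\tau-\theta_K^0\big)+\alpha\gamma_G\Delta_K^\tau .
\]
Iterating this yields
\[
\theta_K^t-\theta_K^0=\gamma_G\sum_{\tau=0}^{t-1}\alpha^{t-\tau}\Delta_K^\tau,\qquad \Delta_K^\tau=-\frac{\gamma_L}{N}\sum_{m\in\mathcal S_K^\tau}\sum_{e=0}^{E-1}g_{K,m}^{\tau,e}.
\]
So the total displacement is a geometrically weighted sum of sampled stochastic gradients. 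Note that $\sum_\tau\alpha^{2(t-\tau)}\le\frac{\alpha^2}{1-\alpha^2}=\frac{1}{\lambda^2+2\lambda}$, which is exactly the factor appearing in the $\gamma_L$ condition. By $L$-smoothness (Assumption~\ref{Assumption L-Smooth}, summed over $K-1$ tasks, so $F$ is $(K-1)L$-smooth),
\[
F(\theta_K^t)\le F(\theta_K^0)+\langle\nabla F(\theta_K^0),\theta_K^t-\theta_K^0\rangle+\tfrac{(K-1)L}{2}\|\theta_K^t-\theta_K^0\|^2 .
\]

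\textbf{Linear term.} I take expectations over the data (Assumption~\ref{Assumption Unbiased Local Gradient estimator}) and over uniform sampling without replacement, using $\mathbb E[\frac1N\sum_{m\in\mathcal S}x_m]=\frac1M\sum_m x_m$. This replaces each $g$ by its true gradient $\nabla f_{K,m}(\theta_{K,m}^{\tau,e})$. The $\epsilon$-alignment hypothesis then makes every inner product $\ge\epsilon\|\nabla F(\theta_K^0)\|\,\|\nabla f_{K,m}(\cdot)\|\ge0$. The linear term therefore contributes
\[
-\gamma_G\gamma_L\,\epsilon\,\|\nabla F(\theta_K^0)\|\sum_\tau\alpha^{t-\tau}\frac1M\sum_{m,e}\|\nabla f_{K,m}\|,
\]
which is a negative (helpful) amount.

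\textbf{Quadratic term.} I split each $g$ into its mean plus noise. The noise part is controlled by $\sigma_L^2$ via Assumption~\ref{Assumption Bounded Local Gradient Difference}. Independence across clients and epochs, together with the sampling-without-replacement variance factor, gives a $\frac{1}{N}$ (and $\frac{E}{M}$-type) scaling, and the cross-round sum is bounded by $\frac{1}{\lambda^2+2\lambda}$ after Cauchy–Schwarz. The mean part I bound with Assumption~\ref{Assumption Bounded Gradients}, giving a $B^2E^2t$-type term. The result is
\[
\mathbb E\|\theta_K^t-\theta_K^0\|^2\lesssim \gamma_G^2\gamma_L^2\Big(\tfrac{E\sigma_L^2}{N}+\text{(mean part)}\Big)\tfrac{1}{\lambda^2+2\lambda}.
\]

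\textbf{Balancing.} I use $\gamma_G\le\frac{1}{\sqrt N(K-1)}$ to cancel the $(K-1)L$ smoothness factor and the $N$ in the variance, so the leftover is of order $\frac{\gamma_L^2E\sigma_L^2}{N(K-1)}\cdot\frac{1}{\lambda^2+2\lambda}$ times $L$. The mean part of the quadratic term should be dominated by the negative linear term. The $\gamma_L$ condition $\gamma_L\le\frac{2\epsilon\|\nabla F(\theta_K^0)\|}{BLEt\sqrt{ME/(\lambda^2+2\lambda)}}$ is designed precisely to give that domination, via $\|\nabla f_{K,m}\|\le B$ and a completing-the-square or AM–GM step. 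After dropping the nonpositive leftover of the linear-plus-mean combination, I substitute the largest admissible $\gamma_L^2=\frac{4\epsilon^2\|\nabla F\|^2(\lambda^2+2\lambda)}{B^2L^2E^3t^2M}$ into the remaining variance term. The $(\lambda^2+2\lambda)$ cancels and the expression reduces to the stated $\frac{2\epsilon^2\sigma_L^2\|\nabla F(\theta_K^0)\|^2}{(K-1)tEMNLB^2}$, up to tracking the exact constants $t$, $E$, $M$.

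\textbf{Main obstacle.} The hard step is making the mean-gradient part of $\|\theta_K^t-\theta_K^0\|^2$ be absorbed by the linear alignment term. The linear term is first order in $\|\nabla f_{K,m}\|$ with weights $\alpha^{t-\tau}$, while the quadratic term is second order with squared weights summed over $tE$ terms. I would first try bounding the square by Cauchy–Schwarz as $tE\cdot\frac{1}{\lambda^2+2\lambda}\sum\|\nabla f\|^2$ and using $\|\nabla f\|^2\le B\|\nabla f\|$, which turns it into a first-order quantity comparable to the linear term. The $\gamma_L$ bound then ensures the net coefficient is nonpositive. The weights $\alpha^{t-\tau}$ versus $\alpha^{2(t-\tau)}$ may mismatch; if so, I would lower-bound $\alpha^{t-\tau}\ge\alpha^{2(t-\tau)}$, using $\alpha<1$, to compare termwise.
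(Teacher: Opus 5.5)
Your skeleton matches the paper's: unrolling the anchor recursion (your weights $\alpha^{t-\tau}$ are the correct ones), the $(K-1)L$-smooth descent step at $\theta_K^0$, the linear term via $\mathbb{P}\{m\in\mathcal S_K^\tau\}=N/M$ plus $\epsilon$-alignment, and the final substitution of the extremal $\gamma_L,\gamma_G$. The gap is in the absorption step, which you flag as the main obstacle, and it is quantitative. The $\gamma_L$ condition is calibrated so that the mean part of the quadratic term cancels the linear term exactly (for $N=1$ there is no constant to spare). Your plan loses constants in two places.

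\emph{The doubling.} You split mean and noise inside $\|\theta_K^t-\theta_K^0\|^2$, before Cauchy--Schwarz. Later local gradients depend on earlier noise, so the cross terms do not vanish and you pay $\|a+b\|^2\le 2\|a\|^2+2\|b\|^2$. This doubles the mean part, while the factor $2$ in the numerator of the $\gamma_L$ condition only offsets the $\tfrac12$ of the descent lemma. Even with the paper's inequality below, you would then need $(K-1)\gamma_G\le\tfrac12$, which $\gamma_G\le\frac{1}{\sqrt N(K-1)}$ guarantees only for $N\ge 4$.

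\emph{The lossy comparison.} You keep the weights $\alpha^{2(t-\tau)}$ and then use $\alpha^{2(t-\tau)}\le\alpha^{t-\tau}$ and $\|\nabla f_{K,m}\|^2\le B\|\nabla f_{K,m}\|$. This requires $(K-1)\gamma_G\cdot\tfrac12\gamma_L LEtB\le\epsilon\|\nabla f_{1:K-1}(\theta_K^0)\|$. The hypotheses only bound that left side by $\epsilon\|\nabla f_{1:K-1}(\theta_K^0)\|\sqrt{(\lambda^2+2\lambda)/(MNE)}$. So the step closes only when $\lambda^2+2\lambda\le MNE$ (only when $4(\lambda^2+2\lambda)\le MNE$ once the doubling is included), and it fails for large $\lambda$. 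The tell is that your argument never uses the factor $\sqrt{ME/(\lambda^2+2\lambda)}$ in the hypothesis. Your other variant, an unweighted sum times $\frac{1}{\lambda^2+2\lambda}$, cannot be compared termwise with the $\alpha^{t-\tau}$-weighted linear term at all, because the weight ratio $(1+\lambda)^{t-\tau}$ grows exponentially.

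The paper avoids both losses by ordering the steps differently. It applies Cauchy--Schwarz to the whole weighted sum of stochastic gradients first. After the sampling expectation this gives $\frac{(K-1)L}{2}\cdot\frac{\gamma_G^2\gamma_L^2Et}{M}\sum_{\tau,m,e}\alpha^{2(t-\tau)}\mathbb{E}\|g_{K,m}^{\tau,e}\|^2$. Only then does it use the exact conditional identity $\mathbb{E}\|g\|^2=\|\nabla f_{K,m}\|^2+\mathbb{E}\|g-\nabla f_{K,m}\|^2$ termwise, so the mean part keeps coefficient $1$.

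For the mean part, set $v=\big(\alpha^{t-\tau}\|\nabla f_{K,m}(\theta_{K,m}^{\tau,e})\|\big)_{\tau,m,e}$. The paper uses $\|v\|_2^2\le\|v\|_2\|v\|_1$ and $\|v\|_2\le B\sqrt{ME/(\lambda^2+2\lambda)}$. With $(K-1)\gamma_G\le 1$, the $\gamma_L$ condition then bounds the mean part by exactly the magnitude of the negative linear term, so the two cancel.

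Your termwise idea also works if you do not throw away a factor $\alpha$. Since $t-\tau\ge1$, you have $\alpha^{2(t-\tau)}\|\nabla f_{K,m}\|^2\le\frac{B}{1+\lambda}\alpha^{t-\tau}\|\nabla f_{K,m}\|$. Because $\sqrt{\lambda^2+2\lambda}<(1+\lambda)\sqrt{ME}$, the stated condition is then sufficient.

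With the paper's ordering, the noise part is $\frac{(K-1)L}{2}\gamma_G^2\gamma_L^2\frac{E^2t\sigma_L^2}{\lambda^2+2\lambda}$. Substituting $\gamma_G^2\le\frac{1}{N(K-1)^2}$ and the extremal $\gamma_L^2$ gives the stated bound exactly. Its $1/(tE)$ is an artifact of this crude Cauchy--Schwarz, and its $1/N$ comes solely from $\gamma_G$.

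Your martingale variance bound for the noise is sharper, about $\frac{2}{NEt}$ times the stated term. So it does not reduce to the stated expression, but it implies it when $NEt\ge 2$. However, using it is exactly what forces the pre-Cauchy--Schwarz split and the doubling problem above.
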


\textbf{Implications.} The bound has two parts. The first term is the \emph{initial sub-optimality} on the earlier tasks at the start of task $K$ (since $\theta_K^0=\theta_{K-1}$).  It reflects how well the model already performed on the earlier tasks before seeing any data from task $K$. The second term is a \emph{vanishing correction} that decays with more rounds $t$, more local epochs $E$, and larger per-round participation $N$. Its magnitude scales with $\|\nabla f_{1:K-1}(\theta_K^0)\|^2$ (harder when the earlier-task gradient is large) and with the local variance $\sigma_L^2$, and it is modulated by the server-side proximal weight $\lambda$ via the step-size constraint. When $N=M$, we recover the full-participation case as a corollary (see Corollary \ref{Col: full participation of BKT} in Appendix \ref{Corollary of the Assumption}); when $N$ decreases, the $1/N$ factor makes the decay slower, matching intuition under partial participation.

\begin{remark}[\textbf{Positioning relative to prior BKT theory}]
\citet{lin2022beyond} establish backward transfer in a \emph{centralized} CL setting, showing that under a cosine-alignment condition, training on a new task can reduce earlier-task loss at any epoch. In practice, large distribution gaps at task boundaries often trigger an initial spike in the global loss, which obscures early-epoch improvements. Our bound in Eq.~(\ref{T1}) makes this behavior explicit via the local-data variance term $\sigma_L^2$: it expresses the expected earlier-task loss at $\theta_K^t$ as the starting loss plus a \emph{vanishing correction} that contracts with more communication rounds, more local epochs, and broader effective participation. This quantifies when and how fast loss recovery occurs after a sharp shift. The structure parallels Lemma~3.2 of \citet{shi2023unified}, which separates retained loss from a prediction/shift error, but our analysis is \emph{federated}, allows non-IID clients and round-by-round subsampling, and relies on a \emph{server-side} proximal anchor rather than replay or architectural expansion. The alignment parameter $\epsilon$ in Eq.~(\ref{T1}) plays the role of a cosine-similarity lower bound: better alignment between gradients of the new task and the aggregate of earlier tasks yields a faster decay of the correction term. The constants attached to $\sigma_L^2$ make the dependence on stochasticity and heterogeneity transparent. Taken together, Eq.~(\ref{T1}) extends centralized BKT theory to FDIL under partial participation {without replay buffers or task-growing memory,} %and no-memory constraints, 
and complements our task-uniform convergence result by certifying \emph{loss-space} retention while the model continues to optimize on the new domain.
\end{remark}

%%%%%%%%%%%%%%%%%%%%%%%%%%%%%%%%%%%%%%%%%%

\subsection{Task-Uniform Convergence of \SPECIAL}

Most prior FCL analyses establish convergence only on the \emph{final} task \citep{keshri2025convergence,han2023convergence}, which is insufficient for FDIL: with heterogeneous tasks, converging on $f_K$ need not imply progress on $f_{1:K-1}$. We instead target the \emph{global} objective $f_{1:K}\triangleq\sum_{i=1}^K f_i$ and prove a task-uniform rate under partial participation. The proof has two ingredients: (i) Lemma~\ref{Lemma diff between t and 0} gives a \emph{uniform} within-task drift bound $\|\theta_i^t-\theta_i^0\|^2\!\le\!\gamma_G^2\gamma_L^2E^2B^2/\lambda^2$ that is independent of round $t$ and task $i$; and (ii) Theorem~\ref{Theorem 2} converts this control into a convergence guarantee for $\min_{t\in[T]}\mathbb{E}\|\nabla f_{1:K}(\theta_K^t)\|^2$, yielding the communication-efficient rate $\mathcal{O}\!(\sqrt{E/(NT)})$ with explicit separation of stochastic noise, client heterogeneity, and inter-task drift $\sigma_T^2$. The result holds for non-convex, non-IID FDIL and is independent of the total client pool size $M$, 
highlighting the role of server-side proximal anchor and step-size choice in stabilizing progress across the entire task sequence.

Compared with a centralized setting, convergence in the federated setting is complicated by \emph{client drift}: local updates amplify stochastic gradient noise and repeated rounds can inflate parameter variance. The difficulty is sharper in FDIL, where each task starts from a checkpoint that already encodes prior tasks. If the deviation $\|\theta_i^t-\theta_i^0\|^2$ grows unchecked, progress on the joint objective slows or can even stall. \SPECIAL limits this growth via a server-side proximal anchor, leading to the following drift bound.

\begin{lemma}[Uniform within-task drift]\label{Lemma diff between t and 0}
Assume the bounded-gradient condition in Assumption~\ref{Assumption Bounded Gradients}.
For every task $i\in[K]$ and round $t\in[T]$, the sequence $\{\theta_i^t\}$ produced by
\SPECIAL satisfies
    \begin{equation}
        \mathbb{E}_t\left\Vert\theta_i^t - \theta_i^0\right\Vert^2\leq\frac{\gamma_G^2\gamma_L^2E^2B^2}{\lambda^2}.
    \end{equation}
\end{lemma}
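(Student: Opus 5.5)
The plan is to prove a deterministic, pathwise bound on $\|\theta_i^t-\theta_i^0\|$; the expectation then follows trivially. The key observation is that the server update in Eq.~(\ref{eq:server-update}) is a contraction toward the anchor $\theta_{i-1}$, and the anchor coincides with the task initial point, $\theta_i^0=\theta_{i-1}$. I take Algorithm~\ref{alg:special} literally, so the blend is applied for every task, with $\theta_0$ as the anchor when $i=1$.

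\textbf{Step 1 (recursion).} Write $D_t\triangleq\theta_i^t-\theta_i^0$. Subtract $\theta_{i-1}=\theta_i^0$ from both sides of Eq.~(\ref{eq:server-update}). Using $\bar\theta_i^{t+1}=\theta_i^t+\gamma_G\Delta_i^t$ gives the linear recursion
\begin{equation*}
D_{t+1}=\frac{1}{1+\lambda}\bigl(D_t+\gamma_G\Delta_i^t\bigr),\qquad D_0=0.
\end{equation*}

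\textbf{Step 2 (bound the aggregated update).} By Assumption~\ref{Assumption Bounded Gradients}, each local stochastic gradient satisfies $\|g_{i,m}^{t,e}\|\le B$. Hence
\begin{equation*}
\|\Delta_{i,m}^t\|=\gamma_L\Bigl\|\sum_{e=0}^{E-1}g_{i,m}^{t,e}\Bigr\|\le\gamma_L E B .
\end{equation*}
By convexity of the norm, the average over the sampled set $\mathcal S_i^t$ obeys the same bound, $\|\Delta_i^t\|\le\gamma_L E B$. This holds for every realization of client sampling and minibatch noise, so partial participation plays no role here.

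\textbf{Step 3 (unroll and sum).} Unrolling the recursion gives
\begin{equation*}
D_t=\gamma_G\sum_{s=0}^{t-1}(1+\lambda)^{-(t-s)}\Delta_i^s .
\end{equation*}
By the triangle inequality and Step 2,
\begin{equation*}
\|D_t\|\le\gamma_G\gamma_L E B\sum_{k=1}^{t}(1+\lambda)^{-k}\le\gamma_G\gamma_L E B\cdot\frac{1/(1+\lambda)}{1-1/(1+\lambda)}=\frac{\gamma_G\gamma_L E B}{\lambda}.
\end{equation*}
Alternatively, one can run a direct induction on the hypothesis $\|D_t\|\le\gamma_G\gamma_LEB/\lambda$. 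Squaring and taking $\mathbb{E}_t$ of this almost-sure bound yields the claim. The bound is uniform in $t$ and $i$ because the geometric series is summed to infinity.

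There is no real obstacle. The only point needing care is to bound the norm first and square afterwards. Squaring inside the sum, for example via Jensen or Cauchy--Schwarz with a factor $t$, would introduce a spurious dependence on $t$. Keeping the triangle-inequality form gives the clean $1/\lambda^2$ constant.
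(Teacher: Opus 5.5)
Your proof is correct and follows essentially the same route as the paper: unroll the contraction $\theta_i^{t+1}-\theta_i^0=\frac{1}{1+\lambda}\bigl(\theta_i^t-\theta_i^0+\gamma_G\Delta_i^t\bigr)$, apply the triangle inequality with $\|\Delta_i^\tau\|\le\gamma_L E B$, and sum the geometric series to $1/\lambda$. Your presentation is in fact cleaner on three points: your bound is pathwise, so squaring is legitimate, whereas the paper bounds $\mathbb{E}_t\|\theta_i^t-\theta_i^0\|$ and then squares, which works only because the bound actually holds almost surely; your weights $(1+\lambda)^{-(t-s)}$ are indexed correctly, while the paper's are reversed, harmlessly for this bound; and your convention that the blend is also applied at $i=1$ with anchor $\theta_0$ is what the statement needs in order to hold for every $i\in[K]$.
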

\textbf{Implications.}  The squared distance from 
the task-initial point is governed jointly by the global and local learning rates ($\gamma_G$,$\gamma_L$), the number of local epochs ($E$), the gradient bound ($B$), and crucially the proximal weight $\lambda$. A larger $\lambda$ tightens the anchor to $\theta_i^0$ and suppresses drift.
The bound is independent of the round $t$ and task $i$, implying a uniform deviation cap across all rounds and tasks.

We are now ready to state a convergence guarantee for the cumulative objective $f_{1:K}$ that holds uniformly across the entire task sequence.

\begin{theorem}[{Task-uniform} convergence of \SPECIAL]\label{Theorem 2}
Let the local learning rate $\gamma_L$ and global learning rate $\gamma_G$ satisfy $\gamma_G\leq \frac{1}{K-1}$, $\gamma_L\leq\frac{1}{8EL}$ and $\gamma_G\gamma_L\leq\frac{1+ \lambda}{3EL}$.
In each round, the server samples $N$ of $M$ clients uniformly without replacement and aggregates only their updates. Then the sequence $\left\{\theta_K^t\right\}$
generated by \SPECIAL satisfies:
    \begin{equation}\label{eq:all-convergence}
        \min_{t\in[T]}\mathbb{E}\left\Vert\nabla f_{1:K}\left(\theta_K^t\right)\right\Vert^2\leq \frac{f_{1:K}^0-f_{1:K}^*}{\frac{1-\frac{1}{K}}{2\left(1+ \lambda\right)}E\gamma_G\gamma_LT} + \Psi,
    \end{equation}
    where $f_{1:K}^0\triangleq f_{1:K}\left(\theta_K^0\right)$, $f_{1:K}^*\triangleq f_{1:K}\left(\theta_K^*\right)$, $\theta_K^*$ is an optimal point for task sequence $1:K$, the expectation is over client sampling and data stochasticity, and 
    \begin{align*}
           \Psi =& \frac{2}{1-\frac{1}{K}}\left[\frac{12\gamma_G\gamma_L\left(M - N\right)EKL\sigma_G^2}{\left(1+\lambda\right)N\left(M - 1\right)} \right.\allowdisplaybreaks\nonumber\\
           &+ \left(\frac{\gamma_L^2E^2L^2}{\lambda^2}+K + \frac{3\gamma_G\gamma_L\left(M - N\right)EKL}{\left(1+\lambda\right)N\left(M - 1\right)}\right)B^2\allowdisplaybreaks\nonumber\\
            &+\left(5\gamma_L^2KEL^2 + \frac{60\gamma_G\gamma_L^3\left(M - N\right)E^2KL^3}{\left(1+\lambda\right)N\left(M - 1\right)}\right)\allowdisplaybreaks\nonumber\\
            &\cdot \left(\sigma^2_L + 6E\sigma_G^2\right) + \frac{3\gamma_G\gamma_LL\sigma_L^2}{2N\left(1+ \lambda\right)}  + \left\Vert\nabla f_{1:K-1}\left(\theta_K^0\right)\right\Vert^2\allowdisplaybreaks\nonumber\\
            &\left.+ \frac{\left(K - 1\right)^2E}{K}\cdot\frac{3\gamma_G\gamma_LL}{1+\lambda}\left(\frac{1}{2} + \frac{4\left(M - N\right)}{N\left(M - 1\right)}\right)\sigma_T^2\right].
       \end{align*}
\end{theorem}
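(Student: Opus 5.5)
The plan is to run a standard one-step descent argument for FedAvg with partial participation on the cumulative objective $f_{1:K}=f_K+f_{1:K-1}$, treating the server-proximal step as a contraction of the effective step size. The starting point is to rewrite the server update in Eq.~(\ref{eq:server-update}) as
\[
\theta_K^{t+1}-\theta_K^t=\tfrac{\gamma_G}{1+\lambda}\Delta_K^t-\tfrac{\lambda}{1+\lambda}\bigl(\theta_K^t-\theta_K^0\bigr),
\]
using $\theta_{K-1}=\theta_K^0$. The first term is a FedAvg step with effective global rate $\gamma_G/(1+\lambda)$; this is where the factor $E\gamma_G\gamma_L/(1+\lambda)$ in the denominator comes from. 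The second term is an anchor pull, and Lemma~\ref{Lemma diff between t and 0} controls its size uniformly by $\gamma_G^2\gamma_L^2E^2B^2/\lambda^2$.

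Next, apply $L$-smoothness of $f_{1:K}$, with constant $KL$ by Assumption~\ref{Assumption L-Smooth}:
\[
\mathbb{E}f_{1:K}(\theta_K^{t+1})\le f_{1:K}(\theta_K^t)+\mathbb{E}\langle\nabla f_{1:K}(\theta_K^t),\theta_K^{t+1}-\theta_K^t\rangle+\tfrac{KL}{2}\mathbb{E}\|\theta_K^{t+1}-\theta_K^t\|^2.
\]
The inner product splits into four pieces.

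1. \emph{Current-task part.} Write $\Delta_K^t=-\gamma_L\frac1N\sum_{m\in\mathcal S}\sum_e g_{K,m}^{t,e}$. By unbiased sampling without replacement (Assumption~\ref{Assumption Unbiased Local Gradient estimator}), its expectation is $-\gamma_L\sum_e\frac1M\sum_m\nabla f_{K,m}(\theta_{K,m}^{t,e})$. The polarization identity $-\langle a,b\rangle=-\frac12\|a\|^2-\frac12\|b\|^2+\frac12\|a-b\|^2$ then produces the $-\frac{E\gamma_G\gamma_L}{2(1+\lambda)}\|\nabla f_K(\theta_K^t)\|^2$ descent term plus a client-drift term $\frac1M\sum_m\|\nabla f_{K,m}(\theta_{K,m}^{t,e})-\nabla f_{K,m}(\theta_K^t)\|^2$. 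I would bound the client drift by the standard local-SGD lemma under $\gamma_L\le 1/(8EL)$, giving roughly $5E\gamma_L^2(\sigma_L^2+6E\sigma_G^2)+30E^2\gamma_L^2\|\nabla f_K\|^2$. This is the source of the $5\gamma_L^2KEL^2(\sigma_L^2+6E\sigma_G^2)$ term.

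2. \emph{Previous-task gradient.} The cross term $\langle\nabla f_{1:K-1}(\theta_K^t),\cdot\rangle$ has to be converted into something proportional to $\|\nabla f_{1:K}\|^2$. I would write $\nabla f_{1:K-1}=(K-1)\nabla f_K+\sum_{i<K}(\nabla f_i-\nabla f_K)$ and use Assumption~\ref{Assumption Bounded Inter-task Gradient Difference}, which gives an error of order $(K-1)^2\sigma_T^2$. Expressing $\nabla f_{1:K}\approx K\nabla f_K$ yields the factor $1-\frac1K$ and the $(K-1)^2E\sigma_T^2/K$ term. The condition $\gamma_G\le 1/(K-1)$ keeps the $(K-1)$-amplified step from overshooting.

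3. \emph{Anchor term.} Bound $\langle\nabla f_{1:K}(\theta_K^t),\theta_K^t-\theta_K^0\rangle$ by Young's inequality as $\frac12\|\nabla f_{1:K}\|^2$-type terms plus $\|\theta_K^t-\theta_K^0\|^2$. Together with the gradient bound $B$, this gives the $(\gamma_L^2E^2L^2/\lambda^2+K)B^2$ contribution, via Lemma~\ref{Lemma diff between t and 0}. The residual $\|\nabla f_{1:K-1}(\theta_K^0)\|^2$ appears when comparing gradients at $\theta_K^t$ and $\theta_K^0$.

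4. \emph{Second-order term.} Bound $\mathbb{E}\|\theta_K^{t+1}-\theta_K^t\|^2$ using the variance of sampling without replacement. The variance of the sample mean of $N$ out of $M$ vectors equals $\frac{M-N}{N(M-1)}$ times the population variance. Splitting into the stochastic part ($\sigma_L^2/N$), the heterogeneity part ($\sigma_G^2$, $B^2$, $\sigma_T^2$) and the client-drift part yields the $\frac{(M-N)}{N(M-1)}$ terms in $\Psi$ and the term $\frac{3\gamma_G\gamma_LL\sigma_L^2}{2N(1+\lambda)}$. The condition $\gamma_G\gamma_L\le(1+\lambda)/(3EL)$ is used to absorb the resulting $\|\nabla f_K\|^2$ contributions into the descent term, leaving a net coefficient of at least $\frac{1-1/K}{2(1+\lambda)}E\gamma_G\gamma_L$ on $\|\nabla f_{1:K}(\theta_K^t)\|^2$.

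Finally, sum over $t=0,\dots,T-1$, telescope $f_{1:K}(\theta_K^0)-f_{1:K}(\theta_K^T)\le f^0_{1:K}-f^*_{1:K}$, divide by $T$ times the coefficient, and lower-bound the average by the minimum. This gives Eq.~(\ref{eq:all-convergence}).

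The main obstacle is step 2: obtaining a genuine descent direction for the \emph{cumulative} objective when clients only compute current-task gradients. The comparison between $\nabla f_{1:K-1}$ and $(K-1)\nabla f_K$ must be done carefully so that the constant on $\|\nabla f_{1:K}\|^2$ stays positive, i.e. the $1-\frac1K$ factor, while all the mismatch is pushed into the $\sigma_T^2$ and $\|\nabla f_{1:K-1}(\theta_K^0)\|^2$ terms. I would first try the decomposition $\nabla f_{1:K}=K\nabla f_K+\sum_{i<K}(\nabla f_i-\nabla f_K)$ together with $\|a+b\|^2\ge\frac12\|a\|^2-\|b\|^2$, and then tune the Young constants to match the constants in $\Psi$.
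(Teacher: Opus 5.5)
Steps 1, 3 and 4 of your plan use the same ingredients as the paper: the uniform drift bound, the local-drift lemma of Yang et al., and the variance of sampling without replacement. The problem is step 2, which you correctly flag as the crux; it does not work, and tuning Young constants cannot repair it.

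\textbf{Why step 2 fails.} Write $\nabla f_{1:K}(\theta_K^t)=K\nabla f_K(\theta_K^t)+e$ with $\|e\|^2\le(K-1)^2\sigma_T^2$. Then exactly $\langle\nabla f_{1:K},\nabla f_K\rangle=\frac1K\bigl(\|\nabla f_{1:K}\|^2-\langle\nabla f_{1:K},e\rangle\bigr)\ge\frac{1-\delta/2}{K}\|\nabla f_{1:K}\|^2-\frac{(K-1)^2}{2\delta K}\sigma_T^2$. Two things follow.
\begin{itemize}
\item The coefficient on $\|\nabla f_{1:K}\|^2$ is at most of order $1/K$, not the stated $\frac12(1-\frac1K)$.
\item More decisively, the $\sigma_T^2$ error now sits in the first-order term, with the same prefactor $\frac{E\gamma_G\gamma_L}{1+\lambda}$ as the descent. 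After normalization it leaves a floor of order $(K-1)^2\sigma_T^2$ with no step-size factor. Every $\sigma_T^2$ term in $\Psi$ carries $\gamma_G\gamma_L L$.
\end{itemize}
Your plan also never compares $\nabla f_{1:K-1}(\theta_K^t)$ with $\nabla f_{1:K-1}(\theta_K^0)$. So the residual $\|\nabla f_{1:K-1}(\theta_K^0)\|^2$ that you attribute to step 3 has no source: Young's inequality on $\langle\nabla f_{1:K}(\theta_K^t),\theta_K^t-\theta_K^0\rangle$ produces only $\|\nabla f_{1:K}\|^2$ and $\|\theta_K^t-\theta_K^0\|^2$.

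\textbf{The missing idea.} The earlier tasks should be controlled through the anchor, not through $\sigma_T$. The paper proceeds as follows.
\begin{itemize}
\item It adds and subtracts $\frac{E\gamma_G\gamma_L}{1+\lambda}\nabla f_K(\theta_K^t)$ inside the first-order term.
\item It applies the exact identity $\langle\nabla f_{1:K-1},\nabla f_K\rangle=\frac12\bigl(\|\nabla f_{1:K}\|^2-\|\nabla f_{1:K-1}\|^2-\|\nabla f_K\|^2\bigr)$ at $\theta_K^t$. This gives $-\frac{E\gamma_G\gamma_L}{2(1+\lambda)}\|\nabla f_{1:K}(\theta_K^t)\|^2$ with an $O(1)$ constant.
\item It pays for the $-\frac12\|\nabla f_{1:K-1}(\theta_K^t)\|^2$ piece via $\|\nabla f_{1:K-1}(\theta_K^t)\|^2\le2(K-1)^2L^2\|\theta_K^t-\theta_K^0\|^2+2\|\nabla f_{1:K-1}(\theta_K^0)\|^2$ and Lemma~\ref{Lemma diff between t and 0}.
\end{itemize}
This is exactly where $\|\nabla f_{1:K-1}(\theta_K^0)\|^2$ comes from. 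The term $\frac{\gamma_L^2E^2L^2}{\lambda^2}B^2$ also comes from here, after $\gamma_G\le\frac1{K-1}$ is used to cancel $(K-1)^2\gamma_G^2$; that is the actual role of $\gamma_G\le\frac1{K-1}$.

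\textbf{Where the other pieces of $\Psi$ come from.}
\begin{itemize}
\item The factor $1-\frac1K$ arises because the remainder (client drift plus anchor pull) is handled by a Young split with weight $E\gamma_G\gamma_L/K$. This costs $+\frac{E\gamma_G\gamma_L}{2K(1+\lambda)}\|\nabla f_{1:K}\|^2$.
\item The same split produces the $KB^2$ and $5\gamma_L^2KEL^2(\sigma_L^2+6E\sigma_G^2)$ entries of $\Psi$, and leaves a negative square that is retained.
\item $\sigma_T^2$ enters only through $\mathbb{E}\|\theta_K^{t+1}-\theta_K^t\|^2$. There $\nabla f_i(\theta_K^t)$ is inserted so that the sampled remainder matches the retained negative square, using Lemmas~\ref{Lemma sum expansion}, \ref{Lemma sum expansion with probability} and \ref{Lemma sum subtraction}.
\item The condition $\gamma_G\gamma_L\le\frac{1+\lambda}{3EL}$ is what lets that negative square absorb the full-participation part, leaving only the $\frac{M-N}{N(M-1)}$ residuals.
\end{itemize}
That step-size condition does not remove the leftover $\|\nabla f_K\|^2$ term, contrary to your step 4. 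The paper discards that term under a separate coefficient condition stated at its final step.
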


\textbf{Implications.} The right-hand side of Eq.~(\ref{eq:all-convergence}) consists of a \textit{vanishing} term and a \textit{constant} term $\Psi$.  Under the step-size conditions of Theorem \ref{Theorem 2}, the vanishing term scales like {$\frac{f_{1:K}^0-f_{1:K}^*}{c\left(1-\frac{1}{K}\right)ET}$}, for a positive constant $c$. Thus the rate accelerates with more rounds $T$ and more local work $E$, but slows as $K$ grows, reflecting that longer histories require more communication to reach the same stationarity level.

The residual $\Psi$ is scaled by $\frac{2}{1-\frac{1}{K}}$, 

which approaches $2$ as $K$ increases, and separates five effects: (i) the within-task drift term from Lemma~\ref{Lemma diff between t and 0}, which grows linearly with $K$;
(ii) additional client-drift terms induced by partial participation, which vanish as $N$ approaches $M$;
(iii) accumulated stochasticity $(\sigma_L^2,\sigma_G^2)$ across $K$ tasks, suggesting $\gamma_L = \mathcal{O}(1/(\sqrt{K}E))$ to keep this component small;
(iv) an explicit task-heterogeneity penalty $\sigma_T^2$, amplified roughly by $\tfrac{(K-1)^2}{K}$, which motivates $\gamma_G = \mathcal{O}(1/(K-1))$;
and (v) the initial gradient norm $\|\nabla f_{1:K-1}(\theta_K^0)\|^2$, which tightens when earlier tasks are already well fit, echoing Theorem~\ref{Theorem 1}.
Setting $N=M$ recovers the full-participation case as a corollary (see Corollary~\ref{Col: full participation} in Appendix \ref{Corollary of the Assumption}).

\begin{remark}[\textbf{Novelty and relation to prior analyses}]
Partial-participation FL results for non-convex objectives \citep{yang2021achieving} control stationarity for a \emph{single}, stationary task and do not model task-to-task drift, so they cannot yield guarantees for $\sum_{i=1}^{K} f_i$. Measures of task relatedness based on gradient inner products \citep{pmlr-v216-han23a} or sufficient projection/positive correlation \citep{lin2022beyond} provide useful notions of similarity but stop short of a \emph{task-uniform} rate under non-IID clients with round-wise subsampling. Decentralized continual-learning analyses such as \citet{choudhary2023codec} study different communication graphs and often assume IID data across clients, making them non-transferable to our federated, non-IID setting. Theorem~\ref{Theorem 2} closes this gap for FDIL with partial participation: it establishes a task-uniform stationarity bound in the presence of client heterogeneity, and it introduces an explicit inter-task drift term $\sigma_T^2$ that quantifies how domain shift scales with $K$. The server-side proximal anchor is essential, via the uniform drift bound in Lemma~\ref{Lemma diff between t and 0}, to control within-task deviation and propagate stability across tasks \emph{without} replay buffers or architectural expansion.
\end{remark}

\begin{corollary}\label{Corollary 1}
    Let $\gamma_L = \frac{\lambda}{\sqrt{KT}EL}$, $\gamma_G = \frac{\sqrt{NE}}{\left(K-1\right)\lambda L}$, and $\left\Vert\nabla f_{1:K-1}\left(\theta_K^0\right)\right\Vert^2\leq C$, where $C$ is a constant. While training on task $K$, the {task-uniform} convergence rate of \SPECIAL satisfies
    \begin{equation*}
        {\min_{t\in[T]}\mathbb{E}\left\Vert\nabla f_{1:K}\left(\theta_K^t\right)\right\Vert^2 = \mathcal{O}\left(
        \sqrt{{E}/{(NT)}}\right)}.
    \end{equation*}
\end{corollary}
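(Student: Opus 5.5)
The plan is to specialise Theorem~\ref{Theorem 2} to the step sizes of the corollary and then bound each resulting term by a multiple of $\sqrt{E/(NT)}$. Constants such as $L,\lambda,B,\sigma_L,\sigma_G,\sigma_T,K$ and $C$ are treated as absorbed in $\mathcal{O}(\cdot)$, and we use $N\le M$, $E\ge 1$ and $T\ge 1$ freely.

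\textbf{Step 1: step-size conditions and the product $\gamma_G\gamma_L$.} First check that $\gamma_L=\frac{\lambda}{\sqrt{KT}EL}$ and $\gamma_G=\frac{\sqrt{NE}}{(K-1)\lambda L}$ satisfy the hypotheses of Theorem~\ref{Theorem 2}:
\begin{itemize}
\item $\gamma_L\le \frac{1}{8EL}$ holds once $\sqrt{KT}\ge 8\lambda$;
\item $\gamma_G\le\frac{1}{K-1}$ holds for $L\lambda\ge\sqrt{NE}$, or after rescaling the constant in $\gamma_G$;
\item $\gamma_G\gamma_L\le\frac{1+\lambda}{3EL}$ holds for $T$ large.
\end{itemize}
Hence the corollary is stated for $T$ beyond a constant threshold. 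The key quantity is
\[
\gamma_G\gamma_L=\frac{\sqrt{N}}{(K-1)L^2\sqrt{KTE}}.
\]

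\textbf{Step 2: the leading term.} Substituting into the leading term of Eq.~(\ref{eq:all-convergence}) gives
\[
\frac{2(1+\lambda)(f^0_{1:K}-f^*_{1:K})}{(1-\frac1K)E\gamma_G\gamma_L T}
=\mathcal{O}\!\Big(\frac{\sqrt{KE}}{E\sqrt{NT}}\Big),
\]
which is $\mathcal{O}(\sqrt{E/(NT)})$ since $E\ge1$.

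\textbf{Step 3: the optimisation-noise part of $\Psi$.} Go through $\Psi$ term by term, using $\frac{M-N}{N(M-1)}\le\frac1N$.
\begin{itemize}
\item The two $\sigma_G^2$ and $B^2$ partial-participation terms scale like $\gamma_G\gamma_L EK/N=\mathcal{O}(\sqrt{E}/\sqrt{NT})$.
\item The $\frac{\gamma_G\gamma_L L\sigma_L^2}{N(1+\lambda)}$ term is $\mathcal{O}(1/\sqrt{NTE})$.
\item The terms $\frac{\gamma_L^2E^2L^2}{\lambda^2}B^2=\frac{B^2}{KT}$ and $5\gamma_L^2KEL^2(\sigma_L^2+6E\sigma_G^2)=\mathcal{O}(1/T)$ are of order at most $1/\sqrt{NT}$ once $T\ge N$.
\item The cubic term $\gamma_G\gamma_L^3E^2KL^3/N$ is of smaller order still.
\end{itemize}

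\textbf{Step 4: the non-vanishing terms (main obstacle).} The difficulty is the three entries of $\Psi$ that do not carry a step-size factor, or carry one too weak:
\begin{itemize}
\item the $KB^2$ term;
\item the $\|\nabla f_{1:K-1}(\theta_K^0)\|^2\le C$ term;
\item the $\sigma_T^2$ term, which scales like $(K-1)E\gamma_G\gamma_L\sim\sqrt{EN/T}$ rather than $\sqrt{E/(NT)}$.
\end{itemize}
Plugging Theorem~\ref{Theorem 2} in verbatim does not give the stated rate. My plan is to return to the last step of the proof of Theorem~\ref{Theorem 2}, where the one-round descent inequality is divided by $\frac{1-1/K}{2(1+\lambda)}E\gamma_G\gamma_L$, and track these terms before that division.

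The $KB^2$ and $C$ contributions arise from the cross term $\langle\nabla f_{1:K-1}(\theta_K^t),\theta_K^{t+1}-\theta_K^t\rangle$. There, Lemma~\ref{Lemma diff between t and 0} and smoothness bound $\|\nabla f_{1:K-1}(\theta_K^t)\|$ by $C^{1/2}+L\gamma_G\gamma_L EB/\lambda$. I expect these contributions to enter the per-round inequality multiplied by $(\gamma_G\gamma_L E)^2$, so that after division they carry a factor $\gamma_G\gamma_LE$. That factor is $\mathcal{O}(\sqrt{E/(NT)})$ up to the $N$-dependence, which the $(K-1)^{-1}$ in $\gamma_G$ and the $1/N$ sampling variance should offset.

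For the $\sigma_T^2$ term, I would apply the partial-participation variance bound, which contributes $\frac{M-N}{N(M-1)}$, to the inter-task piece as well, rather than the crude factor $\frac12$. This should give $\gamma_G\gamma_LE K\sigma_T^2/N=\mathcal{O}(\sqrt{E/(NT)})$.

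If this bookkeeping goes through, summing the bounds from Steps 2--4 yields the claimed $\mathcal{O}(\sqrt{E/(NT)})$. This re-derivation of the pre-division coefficients is the step I expect to be the hardest, and it is where I would start.
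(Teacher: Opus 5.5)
Steps~1--3 are the route the paper intends. The paper gives no separate argument for this corollary; it is meant to follow by substituting the step sizes into Theorem~\ref{Theorem 2}. Your diagnosis in Step~4 is also correct. Substitution leaves the residual $\frac{2}{1-1/K}\left(KB^2+\Vert\nabla f_{1:K-1}(\theta_K^0)\Vert^2\right)$, which does not decay in $T$, and the $\frac12\sigma_T^2$ part of $\Psi$ is of order $\sqrt{NE/T}$.

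The gap is that your repair cannot work, because these terms are not loose bookkeeping: the statement is false under its hypotheses. The bound of Lemma~\ref{Lemma diff between t and 0} holds for every realisation, since each $\Vert\Delta_K^\tau\Vert\le\gamma_L EB$. With the prescribed step sizes the anchor therefore pins the iterate: $\Vert\theta_K^t-\theta_{K-1}\Vert\le\gamma_G\gamma_LEB/\lambda=\frac{\sqrt{NE}\,B}{(K-1)\lambda L^2\sqrt{KT}}$. By $KL$-smoothness, for all $t$,
\[
\Vert\nabla f_{1:K}(\theta_K^t)\Vert\ \ge\ \Vert\nabla f_{1:K}(\theta_{K-1})\Vert-\frac{\sqrt{KNE}\,B}{(K-1)\lambda L\sqrt{T}} .
\]
Nothing in the hypotheses makes $\nabla f_{1:K}(\theta_{K-1})$ small. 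Here is a concrete instance:
\begin{itemize}
\item $d=1$, $K=2$, $M=2$ identical clients with exact gradients, $\theta_0=0$, and $N=E=\lambda=1$;
\item $f_1(\theta)=\sqrt{1+\theta^2}$ and $f_2(\theta)=\sqrt{1+(\theta-1)^2}$, so $L=B=1$ and $\sigma_T^2=4$;
\item task~1 never moves, so $\theta_1=0$ and $C=0$ is admissible;
\item $\gamma_G=1$ and $\gamma_L=1/\sqrt{2T}$ satisfy every condition of Theorem~\ref{Theorem 2} once $T\ge 32$.
\end{itemize}
Yet $\min_{t\in[T]}\vert\nabla f_{1:2}(\theta_2^t)\vert^2\ge\left(\frac{1}{\sqrt2}-\sqrt{2/T}\right)^2\to\frac12$.

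Your specific repair ideas also fail on their own terms.
\begin{itemize}
\item The $KB^2$ term comes from the anchor displacement $\lambda(\theta_K^0-\theta_K^t)$ inside $T_5$, and the $C$ term from the cross term $T_4$. Both are linear in $\gamma_G\gamma_LE$, the same order as the descent term. This is because the expected server step is built from gradients of $f_K$ plus a pull toward $\theta_{K-1}$, never from $-\nabla f_{1:K}$, so there is no hidden second-order structure to uncover.
\item Even a contribution of order $(\gamma_G\gamma_LE)^2$ would leave, after dividing, a factor $\gamma_G\gamma_LE=\Theta(\sqrt{NE/T})$, not $\sqrt{E/(NT)}$. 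The $(K-1)^{-1}$ in $\gamma_G$ does nothing for $N$, and these deterministic terms have no sampling variance to supply a $1/N$.
\item The $\frac12\sigma_T^2$ piece comes from $T_{63}$, whose summand $\nabla f_K(\theta_K^t)-\nabla f_i(\theta_K^t)$ does not depend on $m$. Its average over $\mathcal{S}_K^t$ is therefore not random, and no $\frac{M-N}{N(M-1)}$ factor can be extracted.
\item The condition $\gamma_G\le\frac{1}{K-1}$ forces $\sqrt{NE}\le\lambda L$. Rescaling the constant in $\gamma_G$ would change exactly the $N$-dependence being claimed.
\end{itemize}

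What Steps~1--3 actually establish is a bound of the form $\mathcal{O}(\sqrt{NE/T})+\frac{2}{1-1/K}(KB^2+C)$. That is convergence only to a non-vanishing neighbourhood, and the example shows that some such residual is unavoidable.
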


\begin{remark}[Trade-off between computation and communication] The rate improves with larger $T$ and $N$, matching intuition from standard FL, and it highlights the classic tension between local computation and communication
\citep{stich2018local,li2019convergence,yang2021achieving}. If $E$ is too large, each client over-optimizes its own data and the global iterate $\theta_K^t$ drifts toward the minimizer of the last task $f_K$ instead of the joint objective $f_{1:K}$. If $E$ is too small, many more rounds are required to reach comparable stationarity. 
\end{remark}

\subsection{Intuitions and Proof Sketch}
We now highlight key ideas and challenges behind the \textbf{{task-uniform}} convergence of \SPECIAL. 
Compared with convergence proofs in FL, the key distinction lies in characterizing the relation across tasks. Following Assumption \ref{Assumption L-Smooth} and decomposing $\nabla f_{1:K}$, the loss function can be expanded as:
\begin{equation*}\label{Eq: proof sketch of theorem 2}
    \begin{split}
    &\mathbb{E}_t\left[f_{1:K}\left(\theta_K^{t+1}\right)\right]\leq f_{1:K}\left(\theta_K^t\right) - \frac{\gamma_G\gamma_LE}{1+ \lambda}\underbrace{\left\Vert\nabla f_K\left(\theta_K^t\right)\right\Vert^2}_{B_1}\\
    &-\frac{\gamma_G\gamma_LE}{1+ \lambda}\underbrace{\left\langle\nabla f_{1:K-1}\left(\theta_K^t\right), \nabla f_K\left(\theta_K^t\right)\right\rangle}_{B_2} \\%\displaybreak\nonumber\\
    &+ \underbrace{\left\langle\nabla f_{1:K}\left(\theta_K^t\right), \mathbb{E}_t\left[\theta_K^{t+1} - \theta_K^{t}\right] + \frac{\gamma_G\gamma_LE}{1+ \lambda}\nabla f_K\left(\theta_K^t\right)\right\rangle}_{B_3} \\
    &+ \frac{KL}{2}\underbrace{\mathbb{E}_t\left\Vert\theta_K^{t+1}-\theta_K^t\right\Vert^2}_{B_4},
\end{split}
\end{equation*}
where $B_1$ represents the convergence rate of the single task, $B_2$ measures the alignment of $\nabla f_{1:K-1}$ and $\nabla f_{K}$, and the last two terms characterize the magnitude of the model update in a single communication round. Below, we highlight the key differences: 1) \emph{\textbf{Multiple gradients.}} The expansion involves two types of gradients: the single-task gradient $\left\Vert\nabla f_K\left(\theta_K^t\right)\right\Vert^2$ and {task-uniform} gradient $\left\Vert\nabla f_{1:K}\left(\theta_K^t\right)\right\Vert^2$, where $\left\Vert\nabla f_K\left(\theta_K^t\right)\right\Vert^2$ should be canceled out based on its relation with $\left\Vert\nabla f_{1:K}\left(\theta_K^t\right)\right\Vert^2$ to address the {task-uniform} convergence rate. 2) \emph{\textbf{Partial participation.}} Terms $B_3$ and $B_4$ both involve the one-round deviation $\theta_K^{t+1} - \theta_K^t$, and the deviation in $B_3$ is equivalent to the deviation under full participation in expectation, while $B_4$ is analyzed under partial participation since the deviation is embedded within the $\ell_2$-norm, so we introduced Lemma \ref{Lemma sum expansion} and Lemma \ref{Lemma sum expansion with probability} to bridge the two terms.

\begin{table*}[!t]
    \centering
    \caption{\textbf{Main FDIL results under partial participation.} Average accuracy (higher is better) and backward transfer (less negative/higher is better) on Digit-10, VLCS, PACS, and DN4IL with $M=8$ clients and $N=4$ sampled per round. We compare standard-FL, replay-based, and replay-free FCL baselines. \SPECIAL achieves the best ACC on all datasets while maintaining competitive BWT among replay-free methods. Within each column, darker cell colors indicate better performance.}
    \label{tab: main result}
    \resizebox{1\textwidth}{!}{%
    \begin{tabular}{clcccccccc}
    \hline
        \multirow{2}{*}{Type} & \multirow{2}{*}{Method}
        & \multicolumn{2}{c}{Digit-10}
        & \multicolumn{2}{c}{VLCS}
        & \multicolumn{2}{c}{PACS}
        & \multicolumn{2}{c}{DN4IL} \\
        \cmidrule(lr){3-4} \cmidrule(lr){5-6} \cmidrule(lr){7-8} \cmidrule(lr){9-10}
        &  & ACC (\%) $\uparrow$ & BWT (\%) $\uparrow$
           & ACC (\%) $\uparrow$ & BWT (\%) $\uparrow$
           & ACC (\%) $\uparrow$ & BWT (\%) $\uparrow$
           & ACC (\%) $\uparrow$ & BWT (\%) $\uparrow$ \\
        \hline
        \multirow{2}{*}{Standard-FL}
        & FedAvg
        & \heatmaplevel{5}$58.69\pm2.06$ & \heatmaplevel{1}$-40.44\pm2.07$
        & \heatmaplevel{6}$51.22\pm1.65$ & \heatmaplevel{2}$-9.22\pm3.78$
        & \heatmaplevel{3}$37.79\pm0.81$ & \heatmaplevel{1}$-22.32\pm3.48$
        & \heatmaplevel{3}$16.47\pm0.16$ & \heatmaplevel{1}$-33.91\pm0.11$ \\

        & FedProx
        & \heatmaplevel{6}$58.99\pm2.51$ & \heatmaplevel{4}$-30.19\pm2.07$
        & \heatmaplevel{2}$48.24\pm1.53$ & \heatmaplevel{6}$-3.86\pm1.11$
        & \heatmaplevel{5}$42.99\pm0.97$ & \heatmaplevel{3}$-19.48\pm2.01$
        & \heatmaplevel{6}$21.17\pm0.21$ & \heatmaplevel{6}$-21.40\pm0.34$ \\
        \hline

        \multirow{3}{*}{\shortstack{Replay-\\based}}
        & FedCIL
        & \heatmaplevel{4}$50.95\pm1.66$ & \heatmaplevel{5}$-27.56\pm4.97$
        & \heatmaplevel{3}$48.84\pm0.69$ & \heatmaplevel{4}$-8.88\pm3.55$
        & \heatmaplevel{2}$35.11\pm0.77$ & \heatmaplevel{8}$-10.30\pm2.08$
        & \heatmaplevel{2}$14.29\pm0.41$ & \heatmaplevel{4}$-23.01\pm0.15$ \\

        & MFCL
        & \heatmaplevel{7}$61.15\pm1.43$ & \heatmaplevel{8}$-19.58\pm2.02$
        & \heatmaplevel{1}$42.89\pm5.95$ & \heatmaplevel{1}$-11.03\pm1.99$
        & \heatmaplevel{1}$34.68\pm1.05$ & \heatmaplevel{6}$-14.84\pm4.21$
        & \heatmaplevel{4}$19.00\pm1.06$ & \heatmaplevel{5}$-21.78\pm1.20$ \\

        & SR-FDIL
        & \heatmaplevel{8}$61.63\pm1.47$ & \heatmaplevel{2}$-36.38\pm2.64$
        & \heatmaplevel{4}$49.38\pm2.47$ & \heatmaplevel{7}$-3.22\pm7.26$
        & \heatmaplevel{8}$44.92\pm0.92$ & \heatmaplevel{5}$-16.05\pm1.58$
        & \heatmaplevel{7}$21.40\pm0.25$ & \heatmaplevel{2}$-33.38\pm0.57$ \\
        \hline

        \multirow{4}{*}{\shortstack{Replay-\\free}}
        & pFedDIL
        & \heatmaplevel{1}$46.17\pm2.16$ & \heatmaplevel{7}$-20.21\pm0.31$
        & \heatmaplevel{7}$52.44\pm1.93$ & \heatmaplevel{5}$-8.71\pm5.87$
        & \heatmaplevel{4}$39.76\pm1.98$ & \heatmaplevel{2}$-21.85\pm0.86$
        & \heatmaplevel{1}$12.47\pm0.30$ & \heatmaplevel{9}$-\textbf{4.63}\pm\textbf{0.14}$ \\

        & FLwF-2T
        & \heatmaplevel{2}$47.39\pm5.35$ & \heatmaplevel{3}$-33.02\pm4.94$
        & \heatmaplevel{8}$53.24\pm1.43$ & \heatmaplevel{8}$-0.32\pm6.25$
        & \heatmaplevel{6}$43.04\pm2.49$ & \heatmaplevel{9}$-\textbf{8.12}\pm\textbf{5.24}$
        & \heatmaplevel{8}$23.21\pm0.49$ & \heatmaplevel{8}$-16.29\pm0.15$ \\

        & \SPECIALC
        & \heatmaplevel{3}$50.61\pm2.89$ & \heatmaplevel{9}$-\textbf{16.42}\pm\textbf{0.57}$
        & \heatmaplevel{5}$51.17\pm0.44$ & \heatmaplevel{3}$-8.98\pm4.13$
        & \heatmaplevel{7}$43.30\pm0.84$ & \heatmaplevel{4}$-16.40\pm1.06$
        & \heatmaplevel{5}$19.88\pm0.34$ & \heatmaplevel{3}$-23.21\pm0.43$ \\

        & \textbf{\SPECIAL (ours)}
        & \heatmaplevel{9}$\textbf{62.12}\pm\textbf{0.18}$ & \heatmaplevel{6}$-21.78\pm2.94$
        & \heatmaplevel{9}$\textbf{54.92}\pm\textbf{1.60}$ & \heatmaplevel{9}$-\textbf{0.11}\pm\textbf{2.51}$
        & \heatmaplevel{9}$\textbf{45.29}\pm\textbf{0.92}$ & \heatmaplevel{7}$-11.66\pm1.13$
        & \heatmaplevel{9}$\textbf{24.30}\pm\textbf{0.15}$ & \heatmaplevel{7}$-19.50\pm0.41$ \\
        \hline
    \end{tabular}}
\end{table*}

\section{Experiments}\label{sec:exp}

We now evaluate \SPECIAL. Full setup and hyperparameters appear in Appendix~\ref{sec: hyperparameters}.

\textbf{Datasets and model.} We use four domain-shift benchmarks: Digit-10, VLCS \citep{torralba2011unbiased}, PACS \citep{li2017deeper}, and DN4IL \citep{gowda2023dual}, all with a ResNet-18 backbone \citep{he2016deep}.

\textbf{Baselines.}
We compare against two FCL families:
(i) \emph{Replay-based}: FedCIL \citep{qi2023better}, MFCL \citep{babakniya2023a}, SR-FDIL \citep{li2024sr}.
(ii) \emph{Replay-free}: pFedDIL \citep{li2024personalized}, FLwF-2T \citep{usmanova2022federated}, and \SPECIALC\ (client-side variant).
We also include \emph{FedAvg} \cite{McMahan2017FedAvg} and \emph{FedProx} \cite{Li2020FedProx} as conventional FL anchors.

\textbf{Configurations.}  Unless noted, local epochs $E=5$. We run $T=20$ rounds on Digit-10 and VLCS, and $T=30$ on PACS and DN4IL. Non-IID partitions use Dirichlet sampling with $\alpha=0.1$ (Digit-10), $\alpha=0.3$ (VLCS, PACS), and $\alpha=5$ (DN4IL). Local batch size is $32$ with learning rate $10^{-3}$; the global step decays with task index, $\gamma_G=1/i$. We use $M=8$ clients and sample $N=4$ per round. Implementations are in PyTorch (Python 3) on two NVIDIA Quadro RTX 8000 GPUs. Results are averaged over seeds $\{25, 225, 2025\}$. See Appendix~\ref{sec: hyperparameters} for method-specific grids and selected values.

\textbf{Evaluation metrics.}
Following \citet{lin2022beyond}, we report
\begin{equation}\label{Eq: ACC and BWT}
   \!\!\!\text{ACC}\!=\!\frac{1}{K}\!\!\sum_{i=1}^K\!A_{K,i},~ \text{BWT}\!=\!\frac{1}{K\!-\!1} \!\!\sum_{i=1}^{K\!-\!1}\!\left(A_{K,i}\!-\!A_{i,i}\right).
\end{equation}
Here $A_{i,j}$ is the test accuracy on task $j$ after finishing task $i$.

\textbf{Main results.}
Table~\ref{tab: main result} shows that \SPECIAL attains the highest ACC on all datasets and competitive BWT among \emph{replay-free} methods.
Replay-based approaches (SR-FDIL, FedCIL, MFCL) can approach \SPECIAL on ACC but require replay or generators. Note that BWT is \emph{relative} by Eq.~(\ref{Eq: ACC and BWT}): methods that underfit current tasks can appear to forget less (less negative BWT) because $A_{i,i}$ may already be low %is larger 
and $A_{K,i}$ changes little, reflecting stability at the expense of plasticity. \SPECIAL pairs strong ACC with comparatively high BWT among high-ACC baselines, indicating effective forgetting mitigation without impeding adaptation.

\textit{Where the proximal term is applied matters:} Table~\ref{tab: main result} and Figure~\ref{Fig: comparison experiment} together show that the proximal anchor's location  is critical. On Digit-10, whose tasks are most correlated, \SPECIALC (client-side proximal) performs ``repeated reviewing" at every local step, yielding slightly lower ACC but higher BWT than server-side \SPECIAL, which helps preserve earlier knowledge during the first rounds of next task. When task correlation weakens (VLCS, PACS, DN4IL), continual client-side reviewing slows adaptation to the new domain and does not improve backward transfer; \SPECIALC trails \SPECIAL on both ACC and BWT.
This pattern is consistent with our theory: the server-side anchor controls \emph{between-task} drift without throttling \emph{within-task} plasticity.

\begin{table}[t]
\centering

\caption{Task-wise final accuracies on VLCS ($\alpha=0.3$), comparing \SPECIAL to the best-performing baseline on VLCS (FLwF-2T). Higher is better.}
    \label{tab: per task}
\resizebox{0.45\textwidth}{!}{
\begin{tabular}{c|cccc|c}
\toprule
Method        & Caltech101   & LabelMe   & SUN09    & VOC2007 & ACC    \\ \midrule
FLwF-2T (best baseline) & 0.6933       & 0.5229    & 0.4532   & 0.4604  & 0.5324 \\ 
\textbf{\SPECIAL (ours)}       & \textbf{0.7056}       & \textbf{0.5385}    & \textbf{0.4813}   & \textbf{0.4715}  & \textbf{0.5492} \\\bottomrule
\end{tabular}}
\end{table}

{
\textit{Why FedAvg can look strong:} It is expected that \emph{FedAvg} occasionally matches or exceeds some continual baselines on ACC in short horizons because it directly optimizes the current task objective $f_K$ with no mechanism to control inter-task drift. Such wins typically coincide with more negative backward transfer and larger drops on earlier tasks in the per-task matrix $A_{K,i}$ (Appendix~\ref{sec: detailed task specific performance curves}); for completeness we also report the worst earlier-task drop $\min_{i\le K-1}(A_{K,i}-A_{i,i})$ in Table \ref{tab: worst earlier-task}.
}

\begin{figure}[!t]
    \centering
    \includegraphics[width=0.95\linewidth]{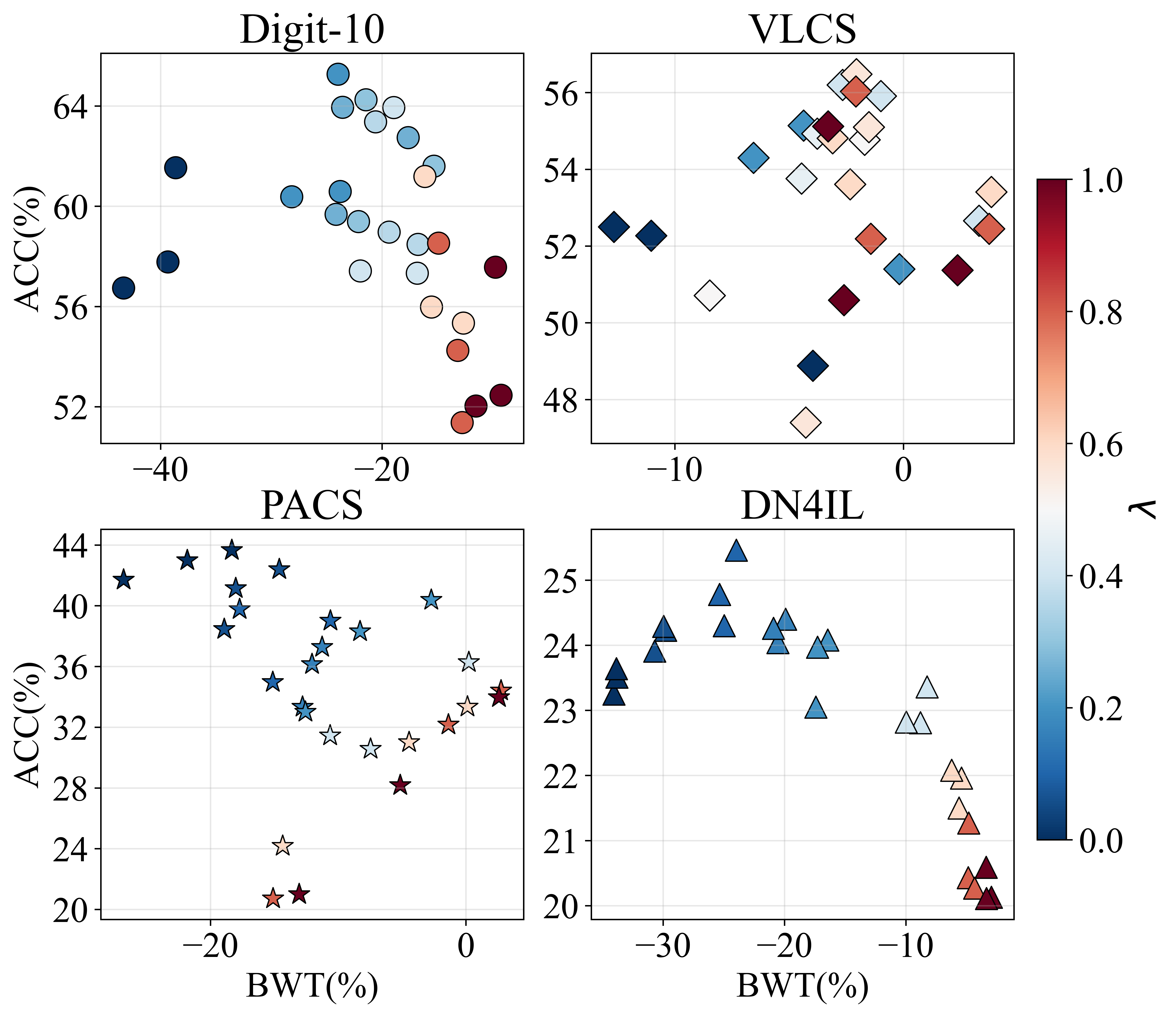} 
    \caption{\textbf{Plasticity-stability trade-off controlled by the server proximal weight $\lambda$.}
Each point shows (ACC, BWT) for a fixed $\lambda$; markers denote datasets and color encodes $\lambda$ (low $\rightarrow$ high).
Larger $\lambda$ increases stability (improving BWT) but over-anchoring can reduce plasticity (lowering ACC), yielding best joint performance at intermediate $\lambda$.}
    \label{Fig: regularization-term}
\end{figure}

\textbf{Per-task performance.} Table~\ref{tab: per task} reports task-wise final accuracies on VLCS, comparing \SPECIAL to the strongest baseline on this dataset (FLwF-2T). \SPECIAL improves accuracy on each domain. Full per-task results (all datasets and baselines) are provided in Appendix~\ref{sec: detailed task specific performance curves}.

\begin{table*}[t]
    \centering
     \caption{\textbf{Communication and runtime cost.} \emph{Rounds-to-best} reports the sum (over tasks) of the number of communication rounds needed to reach each task's best performance; \emph{Time/round} is the average wall-clock time per round across tasks. Lower is better for both metrics. \SPECIAL matches a FedAvg-like footprint (no extra client training or communication) while reaching best performance in fewer rounds.}
    \label{tab: complexity and communication cost}
    \resizebox{1\textwidth}{!}{%
    \begin{tabular}{clcccccccc}
    \hline
    \multirow{2}{*}{} & \multirow{2}{*}{Method} & \multicolumn{2}{c}{Digit-10} & \multicolumn{2}{c}{VLCS} & \multicolumn{2}{c}{PACS} & \multicolumn{2}{c}{DN4IL} \\
    \cmidrule(lr){3-4} \cmidrule(lr){5-6} \cmidrule(lr){7-8} \cmidrule(lr){9-10}
     &  & Rounds(r) & Time(s) & Rounds(r) & Time(s) & Rounds(r) & Time(s) & Rounds(r) & Time(min) \\
    \hline
    \multirow{2}{*}{Standard-FL}& FedAvg & $83$ & $\textbf{191.27}$ & $77$ & $\textbf{267.46}$ & $84$ & $\textbf{25.44}$ & $171$ & $\textbf{15.87}$\\
    & FedProx & $72$ & $483.19$ & $68$ & $290.42$ & 77 & $55.28$ & $157$ & $17.04$\\\hline
    \multirow{3}{*}{\shortstack{Replay- \\ based}} 
     & FedCIL  & $68$ & $523.16$ & $65$ & $382.20$ & $81$ & $38.24$ & $165$ & $17.85$\\
     & MFCL  & $77$  & $969.01$ &  $71$  &  $491.46$   & $83$ & $129.13$ &  $160$ &  $38.77$\\
     & SR-FDIL  & $71$ & $357.34$ & $62$  & $313.21$  & $76$ & $50.64$ & $161$ & $18.42$ \\
    \hline 
     & pFedDIL & $76$ & $699.43$ & $69$ & $519.31$ & $84$ & $75.69
     $ & $167$ & $30.71$\\
      Replay- & FLwF-2T  & $\underline{67}$ & $247.90$ & $\underline{59}$ & $291.40$ & $76$ & $43.43$ & $\underline{156}$ & $19.33$\\
     free & \SPECIALC & $74$ & $518.26$ & $61$ & $\underline{283.86}$ & $\underline{75}$ & $91.98$ &  $159$ & $19.43$\\
     & \textbf{\SPECIAL (ours)} & $\textbf{63}$ & $\underline{200.66}$ & $\textbf{55}$ & $294.76$ & $\textbf{67}$ & $\underline{26.93}$ &  $\textbf{145}$ & $\underline{16.23}$\\ 
    \hline
    \end{tabular}}
\end{table*}

\textbf{Influence of the proximal weight $\lambda$.}
Figure~\ref{Fig: regularization-term} visualizes the ACC-BWT trade-off as $\lambda$ varies (color encodes $\lambda$, low to high). Recall that $\lambda$ balances \emph{current-task plasticity} against \emph{past-task stability} when the server combines the intermediate model with the anchor from the previous task.
Three consistent patterns emerge. 
(i) \emph{BWT increases with $\lambda$.} Larger $\lambda$ strengthens the server anchor and reduces between-task drift, so forgetting decreases. This matches Lemma~\ref{Lemma diff between t and 0}, where the within-task deviation bound scales as $1/\lambda^{2}$.
(ii) \emph{ACC is unimodal.} Very small $\lambda$ favors plasticity and yields high adaptation on the current task but severe forgetting; very large $\lambda$ over-anchors to the previous model and suppresses learning on the new domain, which depresses ACC. Theorem~\ref{Theorem 2} reflects this balance: the leading descent term contains $(1+\lambda)^{-1}E\gamma_G\gamma_LT$, so increasing $\lambda$ tightens stability but effectively slows progress on $f_K$ when step sizes are fixed.
(iii) \emph{Best joint performance occurs at mid-range $\lambda$.} Points near the upper-right of the plot correspond to intermediate anchors that balance stability and plasticity, consistent with the backward-forward transfer trade-off discussed in \citet{zhang2023target}.

{In practice, the optimal $\lambda$ is data dependent. We select it by a small sweep and then fix the value across all tasks and seeds for a dataset (Appendix~\ref{sec: hyperparameters}). The chosen values are $\lambda=0.25$ (Digit-10), $0.40$ (VLCS), $0.05$ (PACS), and $0.10$ (DN4IL). Although $\lambda$ is selected per dataset, the performance is not always knife-edge sensitive. For example, on Digit-10, setting $\lambda=0.2$ degrades ACC by only $0.06\%$ compared with the selected value, while $\lambda=0.3$ leads to only a $0.6\%$ degradation. This suggests that strong performance can be obtained over a local band of mid-range $\lambda$ values.}

\textbf{Computational complexity and communication cost.}
Table~\ref{tab: complexity and communication cost} reports the total rounds-to-best (summed over tasks) and the average wall-clock time per round.
Across all four datasets, \SPECIAL reaches its best performance in the fewest rounds. { Compared with methods with knowledge-transfer mechanism,} it also achieves the best per-round time on three datasets and the second-best on VLCS, showing that \SPECIAL is both communication-efficient and runtime-competitive.
These trends align with algorithmic overheads. Replay-based methods incur extra cost beyond local SGD: FedCIL/MFCL train an additional generative model (on clients or the server), and SR-FDIL performs additional scoring/selection to maintain an exemplar subset. Among replay-free baselines, pFedDIL introduces an auxiliary classifier and extra optimization objectives, while other regularization-based approaches typically add penalties to the \emph{client-side} loss, which can slow or complicate local optimization. In contrast, \SPECIAL keeps client training identical to FedAvg (plain SGD) and applies the proximal anchor \emph{only at the server} via a closed-form blend with the previous global model.
Thus, it adds negligible overhead per round and no extra communication, matching a FedAvg-like footprint while achieving faster communication progress, consistent with Corollary~\ref{Corollary 1}. { More discussions and numerical comparisons about the efficiency are available in Appendix \ref{sec:additional numerical efficiency}.}

\textbf{Performance on language tasks.}
{To evaluate the adaptability of \SPECIAL beyond vision tasks, we use a pre-trained DistilBERT-based model on a GLUE task sequence consisting of SST-2, QQP, and QNLI. The configuration is kept consistent with the vision experiments, except that we set $\alpha=1.0$ for the non-IID partition and use AdamW for local optimization. We set $\lambda=0.45$ to balance plasticity and stability. Since the three replay-based methods are primarily designed for vision-based continual learning and rely on image-oriented mechanisms, we compare against the remaining applicable baselines. Table~4 reports the results. SPECIAL achieves the best ACC and strongest BWT among these baselines, suggesting that the proposed server-side anchoring mechanism can extend beyond vision tasks.}

\begin{table}[t]
\centering
\caption{Results of DistilBERT-based model on the NLP task sequence ($\alpha = 1.0$).}
\label{tab: NLP}
\begin{tabular}{lcc}
\toprule
Method & ACC (\%) $\uparrow$ & BWT (\%) $\uparrow$ \\
\midrule
FedAvg & $65.19\pm0.26$ & $-7.00\pm0.62$ \\
FedProx & $58.91\pm1.31$ & $-16.80\pm0.98$ \\
pFedDIL & $66.13\pm3.24$ & $-6.80\pm2.06$ \\
FLwF-2T & $69.15\pm0.52$ & $-5.50\pm0.68$ \\
\SPECIALC & $61.37\pm1.89$ & $-6.64\pm1.25$ \\
\textbf{\SPECIAL} & $\textbf{74.70}\pm\textbf{0.78}$ & $-\textbf{4.03}\pm\textbf{2.93}$ \\
\bottomrule
\end{tabular}
\end{table}

\section{Conclusion and Limitations}

We proposed \SPECIAL as a lightweight replay-free algorithm for FDIL under partial participation. 
The method introduces a server-side proximal anchor that preserves a FedAvg-like client workflow while controlling inter-task drift. 
We established two theoretical results: a backward knowledge transfer guarantee and a task-uniform convergence guarantee for non-convex FDIL under partial participation. 
Our theory assumes bounded gradients, $L$-smoothness, and synchronous partial participation; it does not model stragglers, stale updates, or other system-level constraints. 
Empirically, we evaluated vision FDIL with a fixed ResNet-18 backbone and language FDIL with a pre-trained DistilBERT-based backbone. % broader modalities, architectures, and very large-scale deployments are not covered. 
The proximal weight $\lambda$ was tuned per dataset. 
Future work will relax the technical assumptions, extend \SPECIAL to asynchronous and drop-tolerant aggregation, develop self-tuning rules for $\lambda$, and broaden evaluation to richer continual streams.

\section*{Acknowledgements} 
This work was supported in part by the National Science Foundation (NSF) grants 2148309, 2315614 and 2337914, and the National Institutes of Health (NIH) grant 1R01HL184139-01, and was supported in part by funds from OUSD R\&E, NIST, and industry partners as specified in the Resilient \& Intelligent NextG Systems (RINGS) program. Any opinions, findings, and conclusions or recommendations expressed in this material are those of the authors and do not necessarily reflect the views of the funding agencies.

\section*{Impact Statement}

% Authors are \textbf{required} to include a statement of the potential broader
% impact of their work, including its ethical aspects and future societal
% consequences. This statement should be in an unnumbered section at the end of
% the paper (co-located with Acknowledgements -- the two may appear in either
% order, but both must be before References), and does not count toward the paper
% page limit. In many cases, where the ethical impacts and expected societal
% implications are those that are well established when advancing the field of
% Machine Learning, substantial discussion is not required, and a simple
% statement such as the following will suffice:

This paper presents work whose goal is to advance the field of Machine
Learning. There are many potential societal consequences of our work, none
which we feel must be specifically highlighted here.

% The above statement can be used verbatim in such cases, but we encourage
% authors to think about whether there is content which does warrant further
% discussion, as this statement will be apparent if the paper is later flagged
% for ethics review.

% In the unusual situation where you want a paper to appear in the
% references without citing it in the main text, use \nocite
% \nocite{langley00}

\bibliography{special}
\bibliographystyle{icml2026}

%%%%%%%%%%%%%%%%%%%%%%%%%%%%%%%%%%%%%%%%%%%%%%%%%%%%%%%%%%%%%%%%%%%%%%%%%%%%%%%
%%%%%%%%%%%%%%%%%%%%%%%%%%%%%%%%%%%%%%%%%%%%%%%%%%%%%%%%%%%%%%%%%%%%%%%%%%%%%%%
% APPENDIX
%%%%%%%%%%%%%%%%%%%%%%%%%%%%%%%%%%%%%%%%%%%%%%%%%%%%%%%%%%%%%%%%%%%%%%%%%%%%%%%
%%%%%%%%%%%%%%%%%%%%%%%%%%%%%%%%%%%%%%%%%%%%%%%%%%%%%%%%%%%%%%%%%%%%%%%%%%%%%%%
\newpage
\appendix
\onecolumn
\onecolumn
\section{Related work}\label{sec:related}
\subsection{Continual Learning}

Continual learning aims to train a model on a sequence of tasks under restriction to access to previous data with the goal of mitigating catastrophic forgetting \citep{wang2024comprehensive, chen2018lifelong, schwarz2018progress,li2019learn}. CL methods are typically categorized into two main types: (i) experience replay methods retain or generate synthetic samples from previous tasks to be combined with current task data to be trained \citep{Rebuffi2017iCaRL,wu2018memory}; (ii) regularization-based methods constrain parameter updates by introducing proximal terms or distillation terms into the loss function to preserver previous information \citep{li2017learning,aich2021elastic,aljundi2018memory}. From the theoretical perspective, \citet{lin2022beyond} analyzed backward transfer and inter-task relations with convergence guarantees, while \citet{han2023convergence} introduced a decomposition of the loss to separately study overfitting and forgetting under non-convex settings. In this work, we extend CL to the federated setting and build on existing FL theoretical frameworks to establish stronger convergence guarantees for Federated Continual Learning (FCL).

\subsection{Federated Continual Learning}
The essence of federated continual learning lies in enabling a model to sequentially learn from distributed tasks across clients while mitigating catastrophic forgetting in a decentralized setting. Studies like \citet{zhou2022towards,li2024sr} adopt experience replay by storing samples with higher importance calculated based on their behaviors during training, but the overhead of calculating and storing sufficient previous samples makes these methods eclipsed compared with generating synthetic data \citep{babakniya2023don,qi2023better,wuerkaixi2025accurate}, such as TARGET \citep{zhang2023target}, MFCL\citep{babakniya2023data} and AF-FCL\citep{wuerkaixi2025accurate}. Other methods, such as FedWeIT \citep{yoon2021federated} and FLwF-2T \citep{usmanova2021distillation}, achieve knowledge transfer between tasks via parameter partitioning and knowledge distillation which can be regarded as two main directions of regularization-based methods. However, most FCL methods lack theoretical guarantees. \citet{keshri2025convergence} extends the proof in CL setting proposed by \citet{pmlr-v216-han23a} to FCL setting, but they research on the convergence rate of loss function only on the last task, while the target of FCL is to ensure convergence across all tasks. In addition, \citet{marfoq2023federated} studies convergence under streaming data and provides the theoretical support, but their assumptions are too restrictive. In our work, we establish a solid theoretical guarantee on the convergence rate and performance improvement under reasonable and practically applicable assumptions.

\subsection{Federated Domain-Incremental Learning}
Federated Domain-Incremental Learning (FDIL) aims to learn from a sequence of tasks with different domains, which differs from Federated Class-Incremental Learning (FCIL). Existing FDIL methods focus on extracting inter-domain similarities or caching important previous data to mitigate the catastrophic forgetting. \citet{li2024personalized} proposed pFedDIL, which selects previous models to be the initial model for the new task based on knowledge intensity and performs knowledge migration by adding the Knowledge Migration (KM) loss. RefFiL \citep{sun2025rehearsal} addresses FDIL by learning domain-invariant knowledge and incorporates domain-specific prompts to alleviate catastrophic forgetting. Similar to ReFed \citep{li2024towards}, SR-FDIL \citep{li2024sr} enables clients to cache the important samples which can be reuse in the further tasks, and the importance is calculated based on the domain-representative score and cross-client collaborative score. Unlike prior FDIL methods that require significant computational time to prepare for the next arrival task between tasks, our approach transfers knowledge by storing the previous global model, reducing preparation time and preserving privacy.

\section{Detailed Notation}\label{sec:notation}
The detailed notation is listed in Table \ref{Table: notation}.
\begin{table}[ht]
    \centering
    \caption{Notations and Terminologies.}
    \begin{tabular}{|c|c|}
    \hline
    \textbf{Notation} & \textbf{Description} \\
    \hline
    $M$ & Number of all clients \\
    \hline
    $N$ & Number of participated clients per round\\
    \hline
    $T$ & Number of global epochs \\
    \hline
    $E$ & Number of local updates \\
    \hline
    $K$ & Number of tasks \\
    \hline
    $[n]$ & Set of integers $\left\{1,\ldots,n\right\}$\\
    \hline
    $\theta_i$ & Global model after completing training on task $i$ \\
    \hline
    $\theta_i^t$ & Global model at round $t$ in task $i$\\
    \hline
    $\theta_{i,m}^t$ & Local model on client $m$ at round $t$ in task $i$\\
    \hline
    $\theta_{i,m}^{t,e}$ & Local model on client $m$ at epoch $e$ of round $t$ in task $i$\\
    \hline
    $\Delta_{i,m}^t$ & Update vector on client $m$ of round $t$ in task $i$\\
    \hline
    $f_{i}$ & Loss function of task $i$\\
    \hline
    $f_{1:K}$ & Loss function over task $1$ to task $K$\\
    \hline
    $\nabla f_{i}$ & the gradient of $f_i$\\
    \hline
    $g_{i,m}^{t,e}$ & Gradient calculated on client m at epoch $e$ of round $t$ in task $i$\\
    \hline
    $\mathcal{T}$ & Set of all tasks, i.e., $\{\mathcal{T}_1,\ldots, \mathcal{T}_K\}$ \\
    \hline
    $\mathcal{T}_i$ & Dataset of the task $i$\\
    \hline
    $\mathcal{T}_{i,m}$ & Dataset of client $m$ on task $i$ \\
    \hline
    $\lambda$ & Regularization coefficient\\
    \hline
    $\left\Vert\cdot\right\Vert^2$ & $\ell_2$-norms\\
    \hline
    \end{tabular}
    \label{Table: notation}
    \end{table}

\section{Lemma}
In the followings, we list a series of lemmas that we use in the theoretical analysis.

\begin{lemma}\label{Lemma multi-L-Smooth}
    Suppose function $f_1,\ldots, f_{K}$ are $L$-smooth, then $f_{1:K} = \sum_{i=1}^Kf_i$ is $KL$-smooth.
\end{lemma}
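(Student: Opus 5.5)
The plan is to work directly from the definition of $L$-smoothness in Assumption~\ref{Assumption L-Smooth}, namely Lipschitz continuity of the gradient, and use linearity of differentiation together with the triangle inequality. Since $f_{1:K}=\sum_{i=1}^K f_i$ is a finite sum of differentiable functions, it is differentiable with $\nabla f_{1:K}(\theta)=\sum_{i=1}^K\nabla f_i(\theta)$. If the lemma is to be applied to task-level objectives $f_i=\frac{1}{M}\sum_{m=1}^M f_{i,m}$, a preliminary step is to note that each $f_i$ inherits $L$-smoothness from the $f_{i,m}$. This holds because
\[
\left\Vert\nabla f_i(\theta)-\nabla f_i(\theta')\right\Vert\le\frac{1}{M}\sum_{m=1}^M\left\Vert\nabla f_{i,m}(\theta)-\nabla f_{i,m}(\theta')\right\Vert\le L\Vert\theta-\theta'\Vert .
\]
This is the same argument with weights $1/M$ in place of $1$.

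The main step is a single chain of inequalities. For arbitrary $\theta,\theta'\in\mathbb{R}^d$,
\[
\left\Vert\nabla f_{1:K}(\theta)-\nabla f_{1:K}(\theta')\right\Vert=\Bigl\Vert\sum_{i=1}^K\bigl(\nabla f_i(\theta)-\nabla f_i(\theta')\bigr)\Bigr\Vert\le\sum_{i=1}^K\left\Vert\nabla f_i(\theta)-\nabla f_i(\theta')\right\Vert\le KL\Vert\theta-\theta'\Vert .
\]
This is exactly $KL$-smoothness of $f_{1:K}$.

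To finish, I will record the consequence that the later analysis actually uses, the quadratic upper bound
\[
f_{1:K}(y)\le f_{1:K}(x)+\langle\nabla f_{1:K}(x),y-x\rangle+\frac{KL}{2}\Vert y-x\Vert^2 .
\]
This appears in the proof sketch of Theorem~\ref{Theorem 2} with the factor $\frac{KL}{2}$. It follows from the standard integral argument: write $f_{1:K}(y)-f_{1:K}(x)=\int_0^1\langle\nabla f_{1:K}(x+s(y-x)),y-x\rangle\,ds$, subtract $\langle\nabla f_{1:K}(x),y-x\rangle$, and bound the remaining integrand using the Lipschitz property just established.

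There is no real obstacle here. The only point needing care is to use a consistent definition of smoothness: the gradient-Lipschitz form, not a Hessian bound, since twice differentiability is not assumed. The constant $KL$ is not tight in general (a sum of $K$ smooth functions can have much smaller Lipschitz constant if the gradients cancel), but it is the worst-case bound that the theorems use.
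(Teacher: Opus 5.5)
Your proof is correct. Linearity of the gradient plus the triangle inequality gives $\Vert\nabla f_{1:K}(\theta)-\nabla f_{1:K}(\theta')\Vert\le KL\Vert\theta-\theta'\Vert$, which is the standard argument the paper leaves implicit, since it states Lemma~\ref{Lemma multi-L-Smooth} without proof. Your additional remarks are also correct and fill in steps the paper takes for granted: each $f_i=\frac{1}{M}\sum_{m}f_{i,m}$ inherits $L$-smoothness from Assumption~\ref{Assumption L-Smooth}, and the descent inequality with constant $\frac{KL}{2}$ follows (the proofs of Theorems~\ref{Theorem 1} and~\ref{Theorem 2} use it, along with the $(K-1)L$ gradient-Lipschitz bound for $f_{1:K-1}$).
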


\begin{lemma}\label{Lemma multi-triangle inequality}
    For $x_1,\ldots,x_n\in\mathbb{R}^d$ and $a_1,\ldots, a_n\geq 0$, we have:
    \begin{equation}
        \left\Vert a_1x_1 + \cdots + a_nx_n\right\Vert\leq \sum_{i=1}^n\left\Vert a_ix_i\right\Vert = \sum_{i=1}^na_i\left\Vert x_i\right\Vert.
    \end{equation}
\end{lemma}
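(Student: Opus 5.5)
The statement to prove is Lemma~\ref{Lemma multi-triangle inequality}, the multi-term triangle inequality with nonnegative weights. I will argue by induction on $n$, using only the two defining properties of a norm: the triangle inequality $\Vert u+v\Vert\le\Vert u\Vert+\Vert v\Vert$ and absolute homogeneity $\Vert a x\Vert=|a|\,\Vert x\Vert$.

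\textbf{Base cases.} For $n=1$ the inequality holds with equality. For $n=2$ it is exactly the ordinary triangle inequality applied to $u=a_1x_1$ and $v=a_2x_2$.

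\textbf{Inductive step.} Suppose the claim holds for $n-1$ vectors. Write
\[
a_1x_1+\cdots+a_nx_n = y + a_nx_n, \qquad y = \sum_{i=1}^{n-1}a_ix_i .
\]
The two-term triangle inequality gives
\[
\Vert y + a_nx_n\Vert \le \Vert y\Vert + \Vert a_nx_n\Vert .
\]
The inductive hypothesis bounds $\Vert y\Vert$ by $\sum_{i=1}^{n-1}\Vert a_ix_i\Vert$. Combining the two bounds yields the first inequality of the lemma.

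\textbf{The equality on the right.} This follows from homogeneity, $\Vert a_ix_i\Vert=|a_i|\,\Vert x_i\Vert$, together with the assumption $a_i\ge 0$, so that $|a_i|=a_i$. This is the only place the sign condition is used; for real weights the left inequality still holds but with $|a_i|$ on the right.

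There is no real obstacle here. The only point requiring care is to state where nonnegativity enters, since the paper presumably applies this lemma to weighted sums such as the convex combination in Eq.~(\ref{eq:server-update}) and the averaged updates $\Delta_i^t$.
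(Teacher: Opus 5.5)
Your induction is correct and complete: the two-term triangle inequality plus the inductive hypothesis gives the first inequality, and absolute homogeneity with $a_i\ge 0$ gives the equality. The paper states this lemma without proof as a standard fact, and your argument is the standard one it relies on; it is used in the proof of Lemma~\ref{Lemma diff between t and 0} to bound the norm of the geometrically weighted sum of the updates $\gamma_G\Delta_i^\tau$.
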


\begin{lemma}\label{Lemma expansion of sum variables}
    For random variables $x_1,\ldots,x_n$, we have:
    \begin{equation}
        \mathbb{E}\left\Vert x_1+\cdots + x_n\right\Vert^2\leq n\sum_{i=1}^n\mathbb{E}\left\Vert x_i\right\Vert^2.
    \end{equation}
\end{lemma}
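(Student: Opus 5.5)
This is the standard Cauchy--Schwarz/Jensen bound. The plan is to apply Jensen's inequality to the convex function $\|\cdot\|^2$, or equivalently to expand the square and bound the cross terms.

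Write
\[
\Big\|\sum_{i=1}^n x_i\Big\|^2=\sum_{i,j}\langle x_i,x_j\rangle .
\]
Bound each cross term by $\langle x_i,x_j\rangle\le \tfrac12(\|x_i\|^2+\|x_j\|^2)$, which follows from expanding $\|x_i-x_j\|^2\ge 0$. Summing over all $n^2$ ordered pairs $(i,j)$ gives
\[
\Big\|\sum_{i=1}^n x_i\Big\|^2\le \tfrac12\Big(n\sum_i\|x_i\|^2+n\sum_j\|x_j\|^2\Big)=n\sum_{i=1}^n\|x_i\|^2 .
\]
This holds pointwise for every realization of the random vectors. Taking expectations and using linearity of expectation then yields the claim.

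A second route is to write $\sum_i x_i = n\cdot\frac1n\sum_i x_i$ and apply convexity of $\|\cdot\|^2$, which gives $\|\frac1n\sum_i x_i\|^2\le\frac1n\sum_i\|x_i\|^2$. Multiplying by $n^2$ gives the same bound. One could also combine the triangle inequality of Lemma~\ref{Lemma multi-triangle inequality} with Cauchy--Schwarz on $(1,\dots,1)\cdot(\|x_1\|,\dots,\|x_n\|)$.

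There is no real obstacle. The only thing to check is that the expectations are finite, or else interpret them in $[0,\infty]$; in that case the inequality holds trivially, since the bound is pointwise and expectation is monotone.
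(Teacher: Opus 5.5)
Your argument is correct. The pointwise bound $\bigl\|\sum_{i=1}^n x_i\bigr\|^2\le n\sum_{i=1}^n\|x_i\|^2$, obtained from $\langle x_i,x_j\rangle\le\tfrac12(\|x_i\|^2+\|x_j\|^2)$ or from Jensen, followed by monotonicity and linearity of expectation, is the standard justification; the paper lists this lemma among its auxiliary facts without giving a proof, so there is nothing further to compare.
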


\begin{lemma}\label{Lemma expansion of sum variables with mean 0}
    For random variables $x_1,\ldots,x_n$ with $\mathbb{E}\left[x_i\right] = 0, \forall i\in[n]$, we have:
    \begin{equation}
        \mathbb{E}\left\Vert x_1+\cdots + x_n\right\Vert^2\leq \sum_{i=1}^n\mathbb{E}\left\Vert x_i\right\Vert^2.
    \end{equation}
\end{lemma}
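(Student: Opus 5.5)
The statement to prove is Lemma~\ref{Lemma expansion of sum variables with mean 0}, and I would first fix what it can actually mean. Taken literally, with only $\mathbb{E}[x_i]=0$ assumed, it is false. With $n=2$ and $x_2=x_1$ the left side is $4\mathbb{E}\Vert x_1\Vert^2$, while the right side is only $2\mathbb{E}\Vert x_1\Vert^2$.

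The intended use is for stochastic-gradient noise terms such as $g_{i,m}^{t,e}-\nabla f_{i,m}(\theta_{i,m}^{t,e})$. These are independent across clients and conditionally mean-zero across epochs, that is, they form a martingale difference sequence. I would therefore prove the lemma under the standing interpretation that the $x_i$ are mutually independent with mean zero, or more generally that $\mathbb{E}[x_j\mid x_1,\ldots,x_{j-1}]=0$ for every $j$. Under either hypothesis the inequality in fact holds with equality.

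The proof then has three steps.

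\textbf{Expand the square.} Write
\begin{equation*}
\mathbb{E}\Big\Vert \sum_{i=1}^n x_i\Big\Vert^2=\sum_{i=1}^n\mathbb{E}\Vert x_i\Vert^2+2\sum_{i<j}\mathbb{E}\langle x_i,x_j\rangle,
\end{equation*}
using linearity of expectation. This is legitimate whenever each $\mathbb{E}\Vert x_i\Vert^2$ is finite; if some second moment is infinite, the right side of the lemma is $+\infty$ and there is nothing to prove.

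\textbf{Kill the cross terms.} Fix $i<j$.
\begin{itemize}
\item In the independent case, $\mathbb{E}\langle x_i,x_j\rangle=\langle \mathbb{E}x_i,\mathbb{E}x_j\rangle=0$.
\item In the martingale case, apply the tower property:
\begin{equation*}
\mathbb{E}\langle x_i,x_j\rangle=\mathbb{E}\big\langle x_i,\mathbb{E}[x_j\mid x_1,\ldots,x_{j-1}]\big\rangle=0,
\end{equation*}
since $x_i$ is measurable with respect to the conditioning $\sigma$-algebra.
\end{itemize}
Integrability of $\langle x_i,x_j\rangle$ follows from Cauchy--Schwarz, $\mathbb{E}|\langle x_i,x_j\rangle|\le(\mathbb{E}\Vert x_i\Vert^2\,\mathbb{E}\Vert x_j\Vert^2)^{1/2}$, which also justifies pulling $x_i$ inside the conditional expectation.

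\textbf{Conclude.} With every cross term zero, the expansion gives equality, and hence the stated inequality.

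The main obstacle is not computational. It is making sure the lemma is applied only where the cross terms genuinely vanish. Wherever it is invoked later, for example on sums over clients in $\mathcal{S}_i^t$ or over local epochs, I would check that the summands are independent or conditionally mean-zero given the past. Where they are not, one must fall back on Lemma~\ref{Lemma expansion of sum variables}, which carries the extra factor $n$.
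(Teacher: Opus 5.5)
The paper gives no proof of this lemma; it just lists it among the auxiliary facts as standard. Your reading is correct. With only $\mathbb{E}[x_i]=0$ the claim is false: your example $x_2=x_1$, with $x_1$ nondegenerate, makes the left side twice the right side. So a decorrelation hypothesis has to be added.

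Your argument under independence or the martingale-difference condition is the standard one and is complete. You expand the square, kill each cross term by independence or the tower property, and use Cauchy--Schwarz for integrability, which gives equality. The only thing the expansion really needs is $\sum_{i<j}\mathbb{E}\langle x_i,x_j\rangle\le 0$. Independence or the martingale property, together with zero means, is simply the natural way to make each $\mathbb{E}\langle x_i,x_j\rangle$ vanish.

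Your closing caution is not hypothetical. The paper never cites the lemma by name, but the bound on $T_{61}$ in the proof of Theorem~\ref{Theorem 2} relies on it implicitly. There the summand $g_{K,m}^{t,e}-\nabla f_{K,m}(\theta_{K,m}^{t,e})$ does not depend on the task index $i$, so the $K$ terms summed over $i$ are identical. That is exactly your counterexample.

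Applying the lemma correctly, over clients and local steps only, gives $T_{61}\le K^2\gamma_G^2\gamma_L^2E\sigma_L^2/N$ instead of the stated $K\gamma_G^2\gamma_L^2E\sigma_L^2/N$. As a result, the $\sigma_L^2/N$ term in $\Psi$ should carry an extra factor of $K$.
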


\begin{lemma}[Lemma 2 in \cite{yang2021achieving}]\label{Lemma old lemma}
    For any local learning rate satisfying $\gamma_L\leq\frac{1}{8LE}$, we can have the following results:
    \begin{equation}
        \frac{1}{M}\sum_{i=1}^M\mathbb{E}\left[\left\Vert\theta_{i,m}^{t,e} - \theta_{i}^{t}\right\Vert^2\right]\leq 5\gamma_L^2E\left(\sigma_L^2 + 6E\sigma_G^2\right) + 30\gamma_L^2E^2\left\Vert \nabla f_{i}\left(\theta_{i}^{t} \right)\right\Vert^2.
    \end{equation}
\end{lemma}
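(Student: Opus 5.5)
The plan is the standard one-step recursion on the local drift within a single round, unrolled over the $E$ local steps. This is the argument behind Lemma~2 of \citet{yang2021achieving}.

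\textbf{Setup.} Fix task $i$, round $t$, and client $m$. Write
\[
a_e \triangleq \mathbb{E}\left\Vert\theta_{i,m}^{t,e}-\theta_i^t\right\Vert^2, \qquad a_0=0 .
\]
From the local update $\theta_{i,m}^{t,e+1}=\theta_{i,m}^{t,e}-\gamma_L g_{i,m}^{t,e}$, I split the stochastic gradient into four pieces:
\[
g_{i,m}^{t,e}=\underbrace{\big(g_{i,m}^{t,e}-\nabla f_{i,m}(\theta_{i,m}^{t,e})\big)}_{\text{noise}}+\big(\nabla f_{i,m}(\theta_{i,m}^{t,e})-\nabla f_{i,m}(\theta_i^t)\big)+\big(\nabla f_{i,m}(\theta_i^t)-\nabla f_i(\theta_i^t)\big)+\nabla f_i(\theta_i^t).
\]
The noise piece has conditional mean zero given $\theta_{i,m}^{t,e}$ (Assumption~\ref{Assumption Unbiased Local Gradient estimator}). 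Hence it decouples from everything else in the square. It contributes exactly $\gamma_L^2\sigma_L^2$ through Assumption~\ref{Assumption Bounded Local Gradient Difference}.

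\textbf{One-step inequality.} For the remaining deterministic part I use Young's inequality $\Vert x+y\Vert^2\le(1+\tfrac{1}{2E-1})\Vert x\Vert^2+2E\Vert y\Vert^2$, followed by Lemma~\ref{Lemma expansion of sum variables} with $n=3$ on $y$. The three terms are handled as follows:
\begin{itemize}
\item the first is bounded by $L^2 a_e$, using Assumption~\ref{Assumption L-Smooth};
\item the second is bounded by $\sigma_G^2$, using Assumption~\ref{Assumption Bounded Intra-task Gradient Difference}, which is evaluated exactly at $\theta_i^t$ as needed;
\item the third is $\Vert\nabla f_i(\theta_i^t)\Vert^2$.
\end{itemize}
This gives
\[
a_{e+1}\le\Big(1+\tfrac{1}{2E-1}+6E\gamma_L^2L^2\Big)a_e+\gamma_L^2\sigma_L^2+6E\gamma_L^2\sigma_G^2+6E\gamma_L^2\Vert\nabla f_i(\theta_i^t)\Vert^2 .
\]
The condition $\gamma_L\le\frac{1}{8LE}$ gives $6E\gamma_L^2L^2\le\frac{3}{32E}$. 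So the contraction factor is at most $1+\frac{c}{E}$ for an absolute constant $c$ (for instance, $1+\frac{1}{E-1}$ when $E\ge2$). The case $E=1$ is trivial, since only $a_0=0$ is needed.

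\textbf{Unrolling and averaging.} I unroll the recursion from $a_0=0$. This gives $a_e\le\sum_{p=0}^{e-1}(1+\frac{c}{E})^p\cdot(\text{additive term})$. Using $(1+\frac{c}{E})^{E}\le e^{c}$, the geometric sum is at most $\frac{E}{c}(e^c-1)$, which is $\le 5E$ once $c$ is tuned. Multiplying out:
\begin{itemize}
\item $5E\cdot\gamma_L^2(\sigma_L^2+6E\sigma_G^2)$ gives the first term of the lemma;
\item $5E\cdot 6E\gamma_L^2\Vert\nabla f_i\Vert^2$ gives the $30\gamma_L^2E^2\Vert\nabla f_i(\theta_i^t)\Vert^2$ term.
\end{itemize}
The bound is uniform in $m$, so averaging over $m\in[M]$ is immediate.

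\textbf{Main obstacle.} The only delicate step is the constant bookkeeping. The Young parameter and the step-size condition must together keep the per-step inflation at $1+O(1/E)$, so that the product over $E$ steps stays bounded by a constant. They must also land the geometric-sum constant at exactly $5$. If the crude choice above gives a slightly larger constant, such as $(E-1)(e^2-1)$, I would sharpen the Young parameter to $1+\frac{1}{4E}$, accepting a correspondingly larger weight $\approx 4E$ on the deterministic part. I would then recheck that $\gamma_L\le 1/(8LE)$ still absorbs the $L^2$ term, so the stated constants $5$ and $30$ are recovered.
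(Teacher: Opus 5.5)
Your proposal is correct and follows the same route as the paper. Both use the four-way split of $g_{i,m}^{t,e}$, Young's inequality with weights $(1+\frac{1}{2E-1},\,2E)$, the relaxation of the per-step factor to $1+\frac{1}{E-1}$ via $\gamma_L\le\frac{1}{8LE}$, and unrolling from $a_0=0$. The fallback in your last paragraph is not needed: the paper uses $(1+\frac{1}{E-1})^E\le 5$ (it is in fact at most $4$ for $E\ge 2$), which gives $(E-1)\bigl[(1+\frac{1}{E-1})^E-1\bigr]\le 5E$ directly, and your $E=1$ remark covers the case the paper's $E>1$ argument leaves implicit.
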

\begin{proof}
    For any client $m\in [M]$, task $i\in[K]$, and $e\in\left\{0,\ldots, E-1\right\}$, we have:
    \begin{align}
        %\begin{split}
            \mathbb{E}&\left[\left\Vert\theta_{i,m}^{t,e} - \theta_{i}^{t}\right\Vert^2\right] = \mathbb{E}\left[\left\Vert\theta_{i,m}^{t,e-1} - \theta_{i}^{t} - \gamma_Lg_{i,m}^{t,e-1}\right\Vert^2\right]\allowdisplaybreaks\nonumber\\
            =& \mathbb{E}\left[\left\Vert\theta_{i,m}^{t,e-1} - \theta_{i}^{t} - \gamma_L\left(g_{i,m}^{t,e-1} - \nabla f_{i,m}\left(\theta_{i,m}^{t,e-1}\right)\right.\right.\right.\allowdisplaybreaks\nonumber\\
            &\left.\left.\left. + \nabla f_{i,m}\left(\theta_{i,m}^{t,e-1}\right)- \nabla f_{i,m}\left(\theta_{i}^{t}\right) + \nabla f_{i,m}\left(\theta_{i}^{t}\right)- \nabla f_{i}\left(\theta_{i}^{t} \right) + \nabla f_{i}\left(\theta_{i}^{t} \right)\right)\right\Vert^2 \right]\allowdisplaybreaks\nonumber\\
            \leq&\left(1+\frac{1}{2E-1}\right)\mathbb{E}\left[\left\Vert\theta_{i,m}^{t,e-1} - \theta_{i}^{t}\right\Vert^2\right] + \gamma_L^2\mathbb{E}\left[\left\Vert g_{i,m}^{t,e-1} - \nabla f_{i,m}\left(\theta_{i,m}^{t,e-1}\right) \right\Vert^2\right]\allowdisplaybreaks\nonumber\\
            &+ 6\gamma_L^2E\cdot \mathbb{E}\left[\left\Vert\nabla f_{i,m}\left(\theta_{i,m}^{t,e-1}\right)- \nabla f_{i,m}\left(\theta_{i}^{t}\right) \right\Vert^2 + \left\Vert\nabla f_{i,m}\left(\theta_{i}^{t}\right)- \nabla f_{i}\left(\theta_{i}^{t} \right)\right\Vert^2 + \left\Vert \nabla f_{i}\left(\theta_{i}^{t} \right)\right\Vert^2\right]\allowdisplaybreaks\nonumber\\
            \overset{(l_1)}{\leq}&\left(1+\frac{1}{2E-1}\right)\mathbb{E}\left[\left\Vert\theta_{i,m}^{t,e-1} - \theta_{i}^{t}\right\Vert^2\right] + \gamma_L^2\sigma_L^2\allowdisplaybreaks\nonumber\\
            &+6\gamma_L^2E\left[L^2\mathbb{E}\left[\left\Vert\theta_{i,m}^{t,e-1} - \theta_{i}^{t}\right\Vert^2\right] + \sigma_G^2 + \left\Vert \nabla f_{i}\left(\theta_{i}^{t} \right)\right\Vert^2\right]\allowdisplaybreaks\nonumber\\
            \overset{(l_2)}{=}&\left(1+\frac{1}{2E-1} + 6\gamma_L^2EL^2\right)\mathbb{E}\left[\left\Vert\theta_{i,m}^{t,e-1} - \theta_{i}^{t}\right\Vert^2\right] + \gamma_L^2\sigma_L^2 + 6\gamma_L^2E\sigma_G^2 + 6\gamma_L^2E\left\Vert \nabla f_{i}\left(\theta_{i}^{t} \right)\right\Vert^2\allowdisplaybreaks\nonumber\\
            \leq& \left(1+\frac{1}{E-1}\right)\mathbb{E}\left[\left\Vert\theta_{i,m}^{t,e-1} - \theta_{i}^{t}\right\Vert^2\right]+ \gamma_L^2\sigma_L^2 + 6\gamma_L^2E\sigma_G^2 + 6\gamma_L^2E\left\Vert \nabla f_{i}\left(\theta_{i}^{t} \right)\right\Vert^2,
        %\end{split}
    \end{align}
    where $(l_1)$ follows Assumption \ref{Assumption Bounded Local Gradient Difference} and $(l_2)$ is due to Assumption \ref{Assumption Bounded Intra-task Gradient Difference}.
    
    Unrolling the recursion, we obtain:
    \begin{equation}
        \begin{split}
            \frac{1}{M}\sum_{m=1}^M\mathbb{E}&\left[\left\Vert\theta_{i,m}^{t,e} - \theta_{i}^{t}\right\Vert^2\right]\leq\sum_{e=0}^{E-1}\left(1+\frac{1}{E-1}\right)^e\left[\gamma_L^2\sigma_L^2 + 6\gamma_L^2E\sigma_G^2 + 6\gamma_L^2E\left\Vert \nabla f_{i}\left(\theta_{i}^{t} \right)\right\Vert^2\right]\\
            \leq&\left(E-1\right)\left[\left(1+\frac{1}{E-1}\right)^E-1\right]\left[\gamma_L^2\sigma_L^2 + 6\gamma_L^2E\sigma_G^2 + 6\gamma_L^2E\left\Vert \nabla f_{i}\left(\theta_{i}^{t} \right)\right\Vert^2\right]\\
            \overset{(l_3)}{\leq}&5\gamma_L^2E\left(\sigma_L^2 + 6E\sigma_G^2\right) + 30\gamma_L^2E^2\left\Vert \nabla f_{i}\left(\theta_{i}^{t} \right)\right\Vert^2,
        \end{split}
    \end{equation}
    where $(l_3)$ follows the fact that $\left(1+\frac{1}{E-1}\right)^E\leq 5$ for $E>1$.
\end{proof}

\begin{lemma}\label{Lemma sum expansion}
    Assume $\boldsymbol{t}_m \in\mathbb{R}^d, m\in[M],\Lambda\in\mathbb{R}^d$, and $G$ is a constant, then we have:
    \begin{equation}
        \begin{split}
            \left\Vert \sum_{m=1}^MG\boldsymbol{t}_m + \Lambda\right\Vert^2 =& \left(MG^2 + G\right)\sum_{m=1}^M\left\Vert\boldsymbol{t}_m\right\Vert^2 - \frac{G^2}{2}\sum_{m\neq n}\left\Vert\boldsymbol{t}_m - \boldsymbol{t}_n\right\Vert^2\\
            &+\left(1+MG\right)\left\Vert\Lambda\right\Vert^2 - G\sum_{m=1}^M\left\Vert\boldsymbol{t}_m - \Lambda\right\Vert^2.
        \end{split}
    \end{equation}
\end{lemma}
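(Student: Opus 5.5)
The identity is purely algebraic, so I will prove it by expanding the left-hand side and rewriting each inner-product term with the polarization identity $2\langle a,b\rangle=\Vert a\Vert^2+\Vert b\Vert^2-\Vert a-b\Vert^2$. I read $\sum_{m\neq n}$ as a sum over \emph{ordered} pairs $(m,n)$ with $m\neq n$, since that is the convention under which the stated coefficient $-\frac{G^2}{2}$ comes out exactly.

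\textbf{Step 1 (expand the left-hand side).} First expand
\begin{equation*}
\Big\Vert \sum_{m=1}^M G\boldsymbol{t}_m+\Lambda\Big\Vert^2 = G^2\Big\Vert\sum_{m=1}^M\boldsymbol{t}_m\Big\Vert^2 + 2G\sum_{m=1}^M\langle \boldsymbol{t}_m,\Lambda\rangle + \Vert\Lambda\Vert^2 .
\end{equation*}
I will then treat the quadratic term and the cross term separately.

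\textbf{Step 2 (the quadratic term).} Write $\Vert\sum_m \boldsymbol{t}_m\Vert^2=\sum_m\Vert\boldsymbol{t}_m\Vert^2+\sum_{m\neq n}\langle\boldsymbol{t}_m,\boldsymbol{t}_n\rangle$. Summing $\Vert\boldsymbol{t}_m-\boldsymbol{t}_n\Vert^2=\Vert\boldsymbol{t}_m\Vert^2+\Vert\boldsymbol{t}_n\Vert^2-2\langle\boldsymbol{t}_m,\boldsymbol{t}_n\rangle$ over ordered pairs gives
\begin{equation*}
\sum_{m\neq n}\Vert\boldsymbol{t}_m-\boldsymbol{t}_n\Vert^2 = 2(M-1)\sum_m\Vert\boldsymbol{t}_m\Vert^2-2\sum_{m\neq n}\langle\boldsymbol{t}_m,\boldsymbol{t}_n\rangle .
\end{equation*}
Solving this for the cross sum and substituting back yields
\begin{equation*}
G^2\Big\Vert\sum_m\boldsymbol{t}_m\Big\Vert^2 = MG^2\sum_m\Vert\boldsymbol{t}_m\Vert^2-\frac{G^2}{2}\sum_{m\neq n}\Vert\boldsymbol{t}_m-\boldsymbol{t}_n\Vert^2 .
\end{equation*}

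\textbf{Step 3 (the cross term).} Apply polarization to each $\langle\boldsymbol{t}_m,\Lambda\rangle$:
\begin{equation*}
2G\sum_m\langle\boldsymbol{t}_m,\Lambda\rangle = G\sum_m\Vert\boldsymbol{t}_m\Vert^2+MG\Vert\Lambda\Vert^2-G\sum_m\Vert\boldsymbol{t}_m-\Lambda\Vert^2 .
\end{equation*}

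\textbf{Step 4 (collect terms).} Adding the results of Steps 2 and 3 to the remaining $\Vert\Lambda\Vert^2$ from Step 1 gives:
\begin{itemize}
\item the coefficient $MG^2+G$ on $\sum_m\Vert\boldsymbol{t}_m\Vert^2$;
\item the pairwise term $-\frac{G^2}{2}\sum_{m\neq n}\Vert\boldsymbol{t}_m-\boldsymbol{t}_n\Vert^2$;
\item the coefficient $1+MG$ on $\Vert\Lambda\Vert^2$;
\item the term $-G\sum_m\Vert\boldsymbol{t}_m-\Lambda\Vert^2$.
\end{itemize}
This is exactly the claimed identity. No inequality is used anywhere, so the sign of $G$ is irrelevant.

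The only real obstacle is bookkeeping. The constant $\frac{G^2}{2}$ holds only under the ordered-pair convention; with unordered pairs it would become $G^2$. I will state this convention explicitly at the start of the proof.
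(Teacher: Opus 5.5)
Your proof is correct and matches the paper's argument: expand the square, apply the polarization identity to the cross terms $\langle \boldsymbol{t}_m,\boldsymbol{t}_n\rangle$ and $\langle \boldsymbol{t}_m,\Lambda\rangle$, then collect coefficients. Reading $\sum_{m\neq n}$ over ordered pairs is the convention the paper uses implicitly (it is also what Lemma~\ref{Lemma sum subtraction} requires), and you state the polarization identity correctly, whereas the paper's justification $(l_4)$ writes it with a typo.
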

\begin{proof}
    \begin{align}
        %\begin{split}
            &\left\Vert \sum_{m=1}^MG\boldsymbol{t}_m + \Lambda\right\Vert^2 G^2\sum_{m=1}^M\left\Vert\boldsymbol{t}_m\right\Vert^2 +G^2\sum_{m\neq n}\left\langle \boldsymbol{t}_m, \boldsymbol{t}_n\right\rangle + \left\Vert\Lambda\right\Vert^2 + 2G\sum_{m=1}^M\left\langle\boldsymbol{t}_m, \Lambda\right\rangle\allowdisplaybreaks\nonumber\\
            \overset{(l_4)}{=}& G^2\sum_{m=1}^M\left\Vert\boldsymbol{t}_m\right\Vert^2 +G^2\sum_{m\neq n}\frac{1}{2}\left[\left\Vert\boldsymbol{t}_m\right\Vert^2 + \left\Vert\boldsymbol{t}_n\right\Vert^2 - \left\Vert \boldsymbol{t}_m - \boldsymbol{t}_n\right\Vert^2\right] \allowdisplaybreaks\nonumber\\
            &+\left\Vert\Lambda\right\Vert^2 + G\sum_{m=1}^M\left[\left\Vert\boldsymbol{t}_m\right\Vert^2 + \left\Vert \Lambda \right\Vert^2 - \left\Vert \boldsymbol{t}_m - \Lambda\right\Vert^2 \right]\allowdisplaybreaks\nonumber\\
            =&\left(MG^2 + G\right)\sum_{m=1}^M\left\Vert\boldsymbol{t}_m\right\Vert^2 - \frac{G^2}{2}\sum_{m\neq n}\left\Vert\boldsymbol{t}_m - \boldsymbol{t}_n \right\Vert^2 \allowdisplaybreaks\nonumber\\
            &+\left(1+MG\right)\left\Vert\Lambda\right\Vert^2 - G\sum_{m=1}^M\left\Vert\boldsymbol{t}_m - \Lambda\right\Vert^2,
        %\end{split}
    \end{align}
    where $(l_4)$ follows that $\left\Vert\boldsymbol{x} - \boldsymbol{y}\right\Vert^2 = \frac{1}{2}\left[\left\Vert\boldsymbol{x}\right\Vert^2 + \left\Vert\boldsymbol{y}\right\Vert^2 - \left\Vert\boldsymbol{x} - \boldsymbol{y}\right\Vert^2\right]$.
\end{proof}

\begin{lemma}\label{Lemma sum expansion with probability} 
    Assume $\boldsymbol{t}_m \in\mathbb{R}^d, m\in[M],\Lambda\in\mathbb{R}^d$, $G$ is a constant. If we randomly select $N$ clients from all $M$ clients, and the $\mathcal{S}$ is the set of the client index, then we have:
    \begin{equation}
        \begin{split}
            \left\Vert \sum_{m=1}^MG\mathbb{P}\{m\in\mathcal{S}\}\boldsymbol{t}_m + \Lambda\right\Vert^2 =& \left(\frac{N^2G^2}{M} + \frac{NG}{M}\right)\sum_{m=1}^M\left\Vert\boldsymbol{t}_m\right\Vert^2 - \frac{N\left(N - 1\right)G^2}{2M\left(M-1\right)}\sum_{m\neq n}\left\Vert\boldsymbol{t}_m - \boldsymbol{t}_n\right\Vert^2 \\
            &+\left(1+NG\right)\left\Vert\Lambda\right\Vert^2 - \frac{NG}{M}\sum_{m=1}^M\left\Vert\boldsymbol{t}_m - \Lambda\right\Vert^2.
        \end{split}
    \end{equation}
\end{lemma}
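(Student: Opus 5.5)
The plan is to read the left-hand side as the second moment of the randomly sampled aggregate, i.e.\ as $\mathbb{E}_{\mathcal{S}}\bigl\Vert G\sum_{m\in\mathcal{S}}\boldsymbol{t}_m+\Lambda\bigr\Vert^2$. This is the quantity the lemma is used for in bounding $B_4$ under partial participation, with the probabilities recording how often each index and each pair of indices appears. The literal reading, with $\mathbb{P}\{m\in\mathcal{S}\}=N/M$ inside the norm, would just be Lemma~\ref{Lemma sum expansion} with $G$ replaced by $NG/M$. That yields the pair coefficient $N^2G^2/(2M^2)$ rather than $N(N-1)G^2/(2M(M-1))$, so the stated constants indicate that the expectation over $\mathcal{S}$ is intended. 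I will prove the identity in that form and follow the same polarization strategy as Lemma~\ref{Lemma sum expansion}.

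\textbf{Step 1 (inclusion probabilities).} Write $\sum_{m\in\mathcal{S}}\boldsymbol{t}_m=\sum_{m=1}^M\mathbb{1}\{m\in\mathcal{S}\}\boldsymbol{t}_m$ and expand the square:
\begin{equation*}
G^2\sum_{m}\mathbb{1}_m\Vert\boldsymbol{t}_m\Vert^2+G^2\sum_{m\neq n}\mathbb{1}_m\mathbb{1}_n\langle\boldsymbol{t}_m,\boldsymbol{t}_n\rangle+\Vert\Lambda\Vert^2+2G\sum_m\mathbb{1}_m\langle\boldsymbol{t}_m,\Lambda\rangle .
\end{equation*}
Taking expectations, uniform sampling without replacement gives $\mathbb{P}\{m\in\mathcal{S}\}=N/M$ and, for $m\neq n$, $\mathbb{P}\{m,n\in\mathcal{S}\}=N(N-1)/(M(M-1))=:q$. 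These are the only probabilistic facts needed.

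\textbf{Step 2 (polarization).} Replace each inner product using $2\langle x,y\rangle=\Vert x\Vert^2+\Vert y\Vert^2-\Vert x-y\Vert^2$, exactly as in step $(l_4)$ of Lemma~\ref{Lemma sum expansion}. The cross-client term becomes
\begin{equation*}
\frac{qG^2}{2}\Bigl[2(M-1)\sum_m\Vert\boldsymbol{t}_m\Vert^2-\sum_{m\neq n}\Vert\boldsymbol{t}_m-\boldsymbol{t}_n\Vert^2\Bigr],
\end{equation*}
since each index appears in $2(M-1)$ ordered pairs. The $\Lambda$-cross term becomes
\begin{equation*}
\frac{NG}{M}\sum_m\bigl[\Vert\boldsymbol{t}_m\Vert^2+\Vert\Lambda\Vert^2-\Vert\boldsymbol{t}_m-\Lambda\Vert^2\bigr].
\end{equation*}

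\textbf{Step 3 (collect coefficients).}
\begin{itemize}
\item The coefficient of $\sum_m\Vert\boldsymbol{t}_m\Vert^2$ is $\frac{NG^2}{M}+\frac{N(N-1)G^2}{M}+\frac{NG}{M}=\frac{N^2G^2}{M}+\frac{NG}{M}$.
\item The pairwise term carries $-\frac{N(N-1)G^2}{2M(M-1)}$.
\item The coefficient of $\Vert\Lambda\Vert^2$ is $1+M\cdot\frac{NG}{M}=1+NG$.
\item The last term carries $-\frac{NG}{M}$.
\end{itemize}
These match the statement term by term.

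The only delicate point is the bookkeeping in Step~2: counting the ordered pairs correctly so that the diagonal and off-diagonal contributions combine into $N^2G^2/M$. Beyond that, the argument is a direct finite computation with no inequalities, so I expect no genuine obstacle. As a sanity check, setting $N=M$ makes the identity reduce to Lemma~\ref{Lemma sum expansion}.
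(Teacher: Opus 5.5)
Your proof is correct and follows the paper's argument. The paper also expands the square using $\mathbb{P}\{m\in\mathcal{S}\}=N/M$ for the diagonal and $\Lambda$-cross terms and $\mathbb{P}\{m,n\in\mathcal{S}\}=N(N-1)/(M(M-1))$ for the off-diagonal terms, then polarizes and collects coefficients as you do. Your explicit reading of the left-hand side as $\mathbb{E}_{\mathcal{S}}\bigl\Vert G\sum_{m\in\mathcal{S}}\boldsymbol{t}_m+\Lambda\bigr\Vert^2$ is what the paper's first expansion step tacitly computes, since the literal expression would instead produce products of marginal probabilities, as you correctly note; it is therefore a clarification rather than a different route.
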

\begin{proof}
    \begin{equation}
        \begin{split}
            &\left\Vert \sum_{m=1}^MG\mathbb{P}\{m\in\mathcal{S}\}\boldsymbol{t}_m + \Lambda\right\Vert^2\\
            =& G^2\sum_{m=1}^M{P}\{m\in\mathcal{S}\}\left\Vert\boldsymbol{t}_m\right\Vert^2 +G^2\sum_{m\neq n}{P}\{m,n \in\mathcal{S}\}\left\langle \boldsymbol{t}_m, \boldsymbol{t}_n\right\rangle + \left\Vert\Lambda\right\Vert^2\\
            &+ 2G\sum_{m=1}^M{P}\{m\in\mathcal{S}\}\left\langle\boldsymbol{t}_m, \Lambda\right\rangle\\
            =&\frac{NG^2}{M}\sum_{m=1}^M\left\Vert\boldsymbol{t}_m\right\Vert^2 + \frac{N\left(N - 1\right)G^2}{M\left(M - 1\right)}\sum_{m\neq n}\left\langle \boldsymbol{t}_m, \boldsymbol{t}_n\right\rangle +\left\Vert\Lambda\right\Vert^2 + \frac{2NG}{M}\sum_{m=1}^M\left\langle\boldsymbol{t}_m, \Lambda\right\rangle\\
            \overset{(l_5)}{=}& \frac{NG^2}{M}\sum_{m=1}^M\left\Vert\boldsymbol{t}_m\right\Vert^2 +\frac{N\left(N - 1\right)G^2}{M\left(M - 1\right)}\sum_{m\neq n}\frac{1}{2}\left[\left\Vert\boldsymbol{t}_m\right\Vert^2 + \left\Vert\boldsymbol{t}_n\right\Vert^2 - \left\Vert \boldsymbol{t}_m - \boldsymbol{t}_n\right\Vert^2\right] +\left\Vert\Lambda\right\Vert^2 \\
            &+ \frac{NG}{M}\sum_{m=1}^M\left[\left\Vert\boldsymbol{t}_m\right\Vert^2 + \left\Vert\Lambda\right\Vert^2 - \left\Vert \boldsymbol{t}_m - \Lambda\right\Vert^2\right]\\
            =&\left(\frac{N^2G^2}{M} + \frac{NG}{M}\right)\sum_{m=1}^M\left\Vert\boldsymbol{t}_m\right\Vert^2 - \frac{N\left(N - 1\right)G^2}{2M\left(M-1\right)}\sum_{m\neq n}\left\Vert\boldsymbol{t}_m - \boldsymbol{t}_n\right\Vert^2\\
            &+\left(1+NG\right)\left\Vert\Lambda\right\Vert^2 - \frac{NG}{M}\sum_{m=1}^M\left\Vert\boldsymbol{t}_m - \Lambda\right\Vert^2,
        \end{split}
    \end{equation}
    where $(l_5)$ follows that $\left\Vert\boldsymbol{x} - \boldsymbol{y}\right\Vert^2 = \frac{1}{2}\left[\left\Vert\boldsymbol{x}\right\Vert^2 + \left\Vert\boldsymbol{y}\right\Vert^2 - \left\Vert\boldsymbol{x} - \boldsymbol{y}\right\Vert^2\right]$.
\end{proof}     

\begin{lemma}\label{Lemma sum subtraction}
    Assume $\boldsymbol{t}_m, \boldsymbol{t}_n \in\mathbb{R}^d, m, n\in[M]$, then we have:
    \begin{equation}
        \sum_{m\neq n}\left\Vert\boldsymbol{t}_m - \boldsymbol{t}_n\right\Vert^2 = 2M\sum_{m=1}^M\left\Vert\boldsymbol{t}_m\right\Vert^2 - 2\left\Vert\sum_{m=1}^M\boldsymbol{t}_m\right\Vert^2.
    \end{equation}
\end{lemma}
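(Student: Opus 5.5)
We prove the identity by expanding the left-hand side as a double sum over ordered pairs and comparing it with the full double sum. Every ordered pair contributes $\Vert\boldsymbol{t}_m-\boldsymbol{t}_n\Vert^2$, and the diagonal terms $m=n$ vanish. So the sum over $m\neq n$ equals the unrestricted sum $\sum_{m=1}^M\sum_{n=1}^M\Vert\boldsymbol{t}_m-\boldsymbol{t}_n\Vert^2$. This reduction is the only point needing care, because the pairs are ordered: each unordered pair is counted twice, which is exactly what produces the factor $2$ in the statement.

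Next we expand each summand as
\begin{equation*}
\Vert\boldsymbol{t}_m-\boldsymbol{t}_n\Vert^2=\Vert\boldsymbol{t}_m\Vert^2+\Vert\boldsymbol{t}_n\Vert^2-2\langle\boldsymbol{t}_m,\boldsymbol{t}_n\rangle
\end{equation*}
and sum over all $m,n\in[M]$. The first term contributes $M\sum_m\Vert\boldsymbol{t}_m\Vert^2$, since each $m$ appears with $M$ choices of $n$. The second term contributes the same amount by symmetry. The cross terms give
\begin{equation*}
-2\sum_{m,n}\langle\boldsymbol{t}_m,\boldsymbol{t}_n\rangle=-2\Big\langle\sum_m\boldsymbol{t}_m,\sum_n\boldsymbol{t}_n\Big\rangle=-2\Big\Vert\sum_{m=1}^M\boldsymbol{t}_m\Big\Vert^2.
\end{equation*}

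Adding the three contributions gives $2M\sum_m\Vert\boldsymbol{t}_m\Vert^2-2\Vert\sum_m\boldsymbol{t}_m\Vert^2$, which is the claimed identity. The argument is pure algebra, uses no assumptions, and has no real obstacle beyond the bookkeeping of ordered versus unordered pairs.
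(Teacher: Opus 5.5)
Your proof is correct and is essentially the paper's argument: both expand $\Vert\boldsymbol{t}_m-\boldsymbol{t}_n\Vert^2$ into squared norms and an inner product and then collect terms. The only difference is bookkeeping. You add the vanishing diagonal terms first and sum over all ordered pairs, while the paper keeps $m\neq n$, obtains $2(M-1)\sum_{m}\Vert\boldsymbol{t}_m\Vert^2$, and then restores the diagonal inside the cross-term sum (your ordering also avoids the factor $2$ that the paper's first displayed line drops from the cross term).
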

\begin{proof}
    \begin{align}
        %\begin{split}
            &\sum_{m\neq n}\left\Vert\boldsymbol{t}_m - \boldsymbol{t}_n\right\Vert^2\allowdisplaybreaks\nonumber\\
            =&\sum_{m\neq n}\left\Vert\boldsymbol{t}_m\right\Vert^2 + \sum_{m\neq n}\left\Vert\boldsymbol{t}_n\right\Vert^2 - \sum_{m\neq n}\left\langle\boldsymbol{t}_m, \boldsymbol{t}_n\right\rangle\allowdisplaybreaks\nonumber\\
            =&2\left(M - 1\right)\sum_{m=1}^M\left\Vert\boldsymbol{t}_m\right\Vert^2 - 2\left(\sum_{m,n}\left\langle \boldsymbol{t}_m, \boldsymbol{t}_n\right\rangle - \sum_{m=1}^M\left\langle \boldsymbol{t}_m, \boldsymbol{t}_m\right\rangle\right)\allowdisplaybreaks\nonumber\\
            =&2\left(M - 1\right)\sum_{m=1}^M\left\Vert\boldsymbol{t}_m\right\Vert^2 - 2\left(\left\langle \sum_{m=1}^M\boldsymbol{t}_m, \sum_{n=1}^M\boldsymbol{t}_n\right\rangle - \sum_{m=1}^M\left\Vert\boldsymbol{t}_m\right\Vert^2\right)\allowdisplaybreaks\nonumber\\
            =&2\left(M - 1\right)\sum_{m=1}^M\left\Vert\boldsymbol{t}_m\right\Vert^2 - 2\left\Vert\sum_{m=1}^M\boldsymbol{t}_m\right\Vert^2 + 2\sum_{m=1}^M\left\Vert\boldsymbol{t}_m\right\Vert^2\allowdisplaybreaks\nonumber\\
            =&2M\sum_{m=1}^M\left\Vert\boldsymbol{t}_m\right\Vert^2 - 2\left\Vert\sum_{m=1}^M\boldsymbol{t}_m\right\Vert^2.
        %\end{split}
    \end{align}
\end{proof}

\section{Corollary of the Assumption}\label{Corollary of the Assumption}
\begin{corollary}\label{corollary}
    Let Assumption \ref{Assumption Bounded Gradients} and \ref{Assumption Bounded Inter-task Gradient Difference} hold. Then, in the case of positive correlation, i.e., $\left\langle \nabla f_i\left(\theta\right), \nabla f_j\left(\theta\right)\right\rangle\geq\varepsilon\left\Vert\nabla f_i\left(\theta\right)\right\Vert\cdot \left\Vert\nabla f_j\left(\theta\right)\right\Vert$, for any $\varepsilon\in(0,1)$ it follows that $\sigma_T^2\leq \left(3-2\varepsilon\right)B^2$.
\end{corollary}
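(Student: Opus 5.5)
We expand the squared difference and bound the inner product from below using the positive-correlation hypothesis. The goal is to reach $\sigma_T^2\le(3-2\varepsilon)B^2$.

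Fix tasks $i,j$ and a point $\theta$, and write $a=\nabla f_i(\theta)$, $b=\nabla f_j(\theta)$. Expanding gives
\[
\|a-b\|^2=\|a\|^2+\|b\|^2-2\langle a,b\rangle .
\]
Positive correlation gives $\langle a,b\rangle\ge\varepsilon\|a\|\|b\|$, so
\[
\|a-b\|^2\le\|a\|^2+\|b\|^2-2\varepsilon\|a\|\|b\| .
\]

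Next we bound $\|a\|$ and $\|b\|$ by $B$. Each $f_i=\frac1M\sum_m f_{i,m}$ has gradient
\[
\nabla f_i(\theta)=\frac1M\sum_m\mathbb{E}_\xi\nabla f_{i,m}(\theta,\xi)
\]
by Assumption~\ref{Assumption Unbiased Local Gradient estimator}. Jensen's inequality and Assumption~\ref{Assumption Bounded Gradients} then give $\|a\|,\|b\|\le B$.

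The obstacle is that the right-hand side $x^2+y^2-2\varepsilon xy$, with $x=\|a\|$ and $y=\|b\|$, cannot be bounded using only $x,y\le B$ through the subtracted term. That term lowers the bound, but we need a product lower bound, and $xy$ may be small. We handle this with a crude split:
\[
x^2+y^2-2\varepsilon xy=(x-y)^2+2(1-\varepsilon)xy\le \max(x,y)^2+2(1-\varepsilon)B^2\le(3-2\varepsilon)B^2 .
\]
Here $(x-y)^2\le\max(x,y)^2\le B^2$ holds because $x,y\ge0$, and $xy\le B^2$.

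Since the bound holds for every pair $i,j$ and every $\theta$, we may take $\sigma_T^2=(3-2\varepsilon)B^2$ in Assumption~\ref{Assumption Bounded Inter-task Gradient Difference}, which proves the corollary. The only care needed is the nonnegativity of $x,y$ in the split and justifying that the bound $B$ passes from stochastic to full gradients. A sharper bound $2(1-\varepsilon)B^2$ is in fact available, since $x^2+y^2-2\varepsilon xy$ is convex and hence maximized at a corner of $[0,B]^2$. Because the stated bound is weaker, it suffices.
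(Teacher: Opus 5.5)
Your argument is correct and is essentially the paper's: expand $\|\nabla f_i(\theta)-\nabla f_j(\theta)\|^2$, apply the correlation inequality, and bound both norms by $B$. The only difference is the last split. The paper writes $\|a\|^2+\|b\|^2-2\varepsilon\|a\|\|b\|=(\|a\|-\varepsilon\|b\|)^2+(1-\varepsilon^2)\|b\|^2$ and asserts, without justification, the bound $\max\{B^2,2(1-\varepsilon)B^2\}$, which is sharper than and implies $(3-2\varepsilon)B^2$. Your termwise split $(x-y)^2+2(1-\varepsilon)xy$ lands exactly on the stated constant with every step justified, including the passage from stochastic to full gradients that the paper uses silently.

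Your closing remark, however, is wrong. The corners $(B,0)$ and $(0,B)$ of $[0,B]^2$ are admissible, since a zero gradient satisfies the correlation condition trivially, and they give the value $B^2$. So the convexity argument yields $\max\{B^2,2(1-\varepsilon)B^2\}$, which is precisely the paper's bound. It does not yield $2(1-\varepsilon)B^2$, and that is not a valid bound once $\varepsilon>1/2$.
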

\begin{proof}
    We have
    \begin{equation*}
        \begin{split}
            \sigma_T^2 &= \left\Vert \nabla f_i\left(\theta\right) - \nabla f_j\left(\theta\right)\right\Vert^2\\
            &=\left\Vert \nabla f_i\left(\theta\right) \right\Vert^2 + \left\Vert \nabla f_j\left(\theta\right)\right\Vert^2 - 2\left\langle \nabla f_i\left(\theta\right), f_j\left(\theta\right)\right\rangle\\
            &\leq \left\Vert \nabla f_i\left(\theta\right) \right\Vert^2 + \left\Vert \nabla f_j\left(\theta\right)\right\Vert^2 - 2\varepsilon\left\Vert \nabla f_i\left(\theta\right) \right\Vert\cdot \left\Vert \nabla f_j\left(\theta\right) \right\Vert\\
            &=\left(\left\Vert \nabla f_i\left(\theta\right) \right\Vert - \varepsilon \left\Vert \nabla f_j\left(\theta\right) \right\Vert\right)^2 + \left(1-\varepsilon^2\right)\left\Vert \nabla f_j\left(\theta\right) \right\Vert^2\\
            &\leq \max\{B^2, 2\left(1-\varepsilon\right)B^2\}.
        \end{split}
    \end{equation*}
\end{proof}

\begin{corollary}[BKT under partial participation]\label{Col: full participation of BKT}
    Let $\theta_K^{t}$ denote the global model after  round $t$ of task $K$, and set $\theta_K^{0}=\theta_{K-1}$. In each round $\tau$, the server samples a subset $\mathcal S_K^\tau\subseteq[M]$ of size $N$ uniformly without replacement and aggregates only those clients. Suppose that for every $\tau\in [t]$, local epoch
    $e\in[E]$, and client $m\in[M]$, the gradients are $\epsilon$-aligned with the earlier-task gradient at the task-$K$ start, i.e., 
    $\left\langle\nabla f_{1:K-1}\left(\theta_K^0\right),\nabla f_{K,m}\left(\theta_{K,m}^{\tau,e}\right)\right\rangle\geq \epsilon\left\Vert\nabla f_{1:K-1}\left(\theta_K^0\right)\right\Vert\cdot\left\Vert\nabla f_{K,m}\left(\theta_{K,m}^{\tau,e}\right)\right\Vert, \epsilon\in(0,1)$, and the local and global learning rates satisfy $\gamma_L\leq\frac{2\epsilon\left\Vert\nabla f_{1:K-1}\left(\theta_K^0\right)\right\Vert}{BLEt\sqrt{ME\cdot\left(\frac{1}{\lambda^2+2\lambda}\right)}}$ and $\gamma_G\leq\frac{1}{{M}\left(K-1\right)}$. Under full participation, for every $t\ge1$, the server-updated model $\theta_K^t$ generated by \SPECIAL (Algorithm \ref{alg:special}) satisfies:
    \begin{equation}
        \mathbb{E}_t\left[f_{1:K-1}\left(\theta_K^t\right)\right]\leq f_{1:K-1}\left(\theta_K^0\right) + \frac{2\epsilon^2\sigma_L^2\left\Vert\nabla f_{1:K-1}\left(\theta_K^0\right)\right\Vert^2}{\left(K-1\right)tEM^2LB^2},
    \end{equation}
    where the expectation is over client sampling and data stochasticity.
\end{corollary}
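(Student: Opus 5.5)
The plan is to obtain this corollary as the special case $N=M$ of Theorem~\ref{Theorem 1}. Under full participation the sampled set is deterministic, $\mathcal S_K^\tau=[M]$ for every round. So every expectation over client sampling collapses, and only data stochasticity remains. I would not redo the descent argument. Instead I would retrace the proof of Theorem~\ref{Theorem 1} and locate the two places where $N$ enters.

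The first place is the aggregation step. The inclusion probability $\mathbb P\{m\in\mathcal S\}=N/M$ becomes $1$, and the pairwise probability $N(N-1)/(M(M-1))$ also becomes $1$. Consequently Lemma~\ref{Lemma sum expansion with probability} reduces exactly to Lemma~\ref{Lemma sum expansion}. Every factor $(M-N)/(N(M-1))$ coming from sampling variance vanishes, so the only noise left is the local term, of order $\sigma_L^2/M$.

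The second place is the global step-size condition. Theorem~\ref{Theorem 1} imposes $\gamma_G\le 1/(\sqrt N(K-1))$, while here we take $\gamma_G\le 1/(M(K-1))$. Since $M\ge\sqrt M$, this choice satisfies the $N=M$ instance of the theorem's condition. The stricter choice is there to produce an extra factor of $M$.

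I would then check the single-round descent inequality for $f_{1:K-1}$ from the proof of Theorem~\ref{Theorem 1}. This inequality combines the $\epsilon$-alignment assumption, $(K-1)L$-smoothness (Lemma~\ref{Lemma multi-L-Smooth}), and the server blend~\eqref{eq:server-update}. In it, the negative first-order term is $-\frac{\gamma_G\gamma_L}{1+\lambda}\epsilon\|\nabla f_{1:K-1}(\theta_K^0)\|\sum_e\|\nabla f_{K,m}\|$. The positive variance term is $\frac{(K-1)L}{2}\mathbb E\|\theta_K^{\tau+1}-\theta_K^\tau\|^2$.

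Next I would use the constraint on $\gamma_L$, which is unchanged from the theorem. It shows that the first-order decrease dominates the deterministic part of the quadratic term. The leftover is a pure variance term proportional to $\gamma_G^2\gamma_L^2 E\sigma_L^2/M$.

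Finally I would telescope over $\tau=0,\dots,t-1$. Substituting the boundary values of $\gamma_L$ and $\gamma_G$ turns the leftover into $\frac{2\epsilon^2\sigma_L^2\|\nabla f_{1:K-1}(\theta_K^0)\|^2}{(K-1)tEM^2LB^2}$. This is the theorem's right-hand side with $N$ replaced by $M$ and with one additional $1/M$ coming from $\gamma_G\le 1/(M(K-1))$ instead of $1/(\sqrt N(K-1))$.

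The main obstacle is bookkeeping rather than new ideas. I must verify that, in the proof of Theorem~\ref{Theorem 1}, the $\sqrt N$ in the $\gamma_G$ constraint enters the final constant exactly as the factor $1/N$ does. Then replacing $\sqrt N$ by $M$ indeed yields the denominator $M^2$ rather than, say, $M^{3/2}$ or $M^3$. If the theorem's constant arises as $\gamma_G\cdot(1/M)\cdot(\text{stuff})$ with $\gamma_G\sqrt N$ absorbed elsewhere, the exponent would change. So I would recompute that final substitution explicitly with $N=M$ and $\gamma_G=1/(M(K-1))$ before concluding.
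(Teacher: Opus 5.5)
Your reduction is correct, and it is how the paper obtains this corollary; no separate proof is given. Theorem~\ref{Theorem 1} with $N=M$ has denominator $(K-1)tEMNLB^2=(K-1)tEM^2LB^2$. Also $\gamma_G\le\frac{1}{M(K-1)}\le\frac{1}{\sqrt{M}(K-1)}$ meets its step-size hypothesis. So the corollary is proved as soon as you note $M\ge\sqrt{M}$.

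Your extra bookkeeping, however, is wrong in ways you should not carry forward.
\begin{itemize}
\item There is no ``additional $1/M$.'' The $M^2$ is the $M$ from the $\sqrt{ME}$ in the $\gamma_L$ constraint times the $N=M$ from $\gamma_G^2\le\frac{1}{N(K-1)^2}$ in step $(a_7)$.
\item If you substitute the stricter $\gamma_G\le\frac{1}{M(K-1)}$ into $(a_7)$, you get $M^3$ in the denominator. That bound is tighter and trivially implies the stated one. So the stricter condition is just more than necessary, and your worry about $M^3$ resolves in your favour.
\item The sampling factors do not affect the final constant in Theorem~\ref{Theorem 1}'s proof. The bound on $T_1$ equals $\frac{1}{M}\sum_m\mathbb{E}_t\|\boldsymbol{t}_m\|^2$ for every $N$, and $N$ enters only through $\gamma_G$.
\item The $1/M$ in the final bound does not come from client-averaging of noise. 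In $T_{22}$ the $1/M$ cancels against $\sum_m$; the surviving $M$ comes from the $\gamma_L$ constraint.
\end{itemize}

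That proof is also not a one-round descent inequality followed by telescoping. It unrolls $\theta_K^t-\theta_K^0=\sum_{\tau=0}^{t-1}(1+\lambda)^{-(\tau+1)}\gamma_G\Delta_K^\tau$ and applies $(K-1)L$-smoothness once, around $\theta_K^0$. This is why the alignment hypothesis only involves $\nabla f_{1:K-1}(\theta_K^0)$. It is also why $t$ appears in the $\gamma_L$ constraint, via Cauchy--Schwarz over the $Et$ summands. A per-round argument expanded at $\theta_K^\tau$ would need two things the hypotheses do not provide: alignment with $\nabla f_{1:K-1}(\theta_K^\tau)$, and control of the anchor term $\frac{\lambda}{1+\lambda}(\theta_K^0-\theta_K^\tau)$ in $\theta_K^{\tau+1}-\theta_K^\tau$. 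So that route would not go through as you describe it. Using Theorem~\ref{Theorem 1} as a black box, as in your opening step, avoids all of this.
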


\begin{corollary}[  {Task-uniform} convergence of \SPECIAL under full participation]\label{Col: full participation}
    Let the local learning rate $\gamma_L$ and global learning rate $\gamma_G$ satisfy $\gamma_G\leq \frac{1}{K-1}$, $\gamma_L\leq\frac{1}{8EL}$ and $\gamma_G\gamma_L\leq\frac{1+ \lambda}{3EL}$.
    Under full participation, the sequence $\left\{\theta_K^t\right\}$ generated by \SPECIAL satisfies:
    \begin{equation}
        \min_{t\in[T]}\mathbb{E}\left\Vert\nabla f_{1:K}\left(\theta_K^t\right)\right\Vert^2\leq \frac{f_{1:K}^0-f_{1:K}^*}{\frac{1-\frac{1}{K}}{2\left(1+ \lambda\right)}E\gamma_G\gamma_LT} + \Psi',
    \end{equation}
    where
    \begin{equation*}
        \begin{split}
           \Psi' =& \frac{2}{1-\frac{1}{K}}\left[\left(\frac{\gamma_L^2E^2L^2}{\lambda^2}+K \right)B^2 + 5\gamma_L^2KEL^2\left(\sigma^2_L + 6E\sigma_G^2\right) \right.\\
            &\left.+ \frac{3\gamma_G\gamma_LL\sigma_L^2}{2M\left(1+ \lambda\right)} + \frac{\left(K - 1\right)^2E}{K}\cdot\frac{3\gamma_G\gamma_LL\sigma_T^2}{2\left(1+\lambda\right)} + \left\Vert\nabla f_{1:K-1}\left(\theta_K^0\right)\right\Vert^2\right].
        \end{split}
    \end{equation*}
\end{corollary}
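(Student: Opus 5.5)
The plan is to obtain Corollary~\ref{Col: full participation} by specializing the proof of Theorem~\ref{Theorem 2} to $N=M$, where $\mathcal S_K^t=[M]$ deterministically. Simply substituting $N=M$ into $\Psi$ does not give $\Psi'$ directly, for two reasons. First, the factor $\frac{M-N}{N(M-1)}$ vanishes, which removes every client-sampling term: the $\sigma_G^2$ term, the $B^2$ correction, the $\gamma_L^3$ term, and the $\frac{4(M-N)}{N(M-1)}$ part of the $\sigma_T^2$ coefficient. Second, the remaining terms must be checked against $\Psi'$. The $B^2$ term becomes $(\gamma_L^2E^2L^2/\lambda^2+K)B^2$, the stochastic term becomes $5\gamma_L^2KEL^2(\sigma_L^2+6E\sigma_G^2)$, and the variance term becomes $\frac{3\gamma_G\gamma_LL\sigma_L^2}{2M(1+\lambda)}$. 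The $\sigma_T^2$ coefficient reduces to $\frac{(K-1)^2E}{K}\cdot\frac{3\gamma_G\gamma_LL}{2(1+\lambda)}$. These all agree with $\Psi'$, so the job is to rerun the descent argument and confirm that each sampling-induced term is identically zero rather than merely small.

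I would start from the smoothness expansion in the proof sketch, using the $KL$-smoothness of $f_{1:K}$ (Lemma~\ref{Lemma multi-L-Smooth}). Under full participation the server update is
\[
\theta_K^{t+1}-\theta_K^t=\frac{1}{1+\lambda}\Big(\gamma_G\Delta_K^t-\lambda(\theta_K^t-\theta_{K-1})\Big),\qquad \Delta_K^t=\frac1M\sum_{m=1}^M\Delta_{K,m}^t .
\]
The terms are then handled in turn:
\begin{itemize}
\item $B_1$ and $B_2$ are treated exactly as in Theorem~\ref{Theorem 2}. Writing $\nabla f_{1:K}=\nabla f_K+\nabla f_{1:K-1}$, the $-\|\nabla f_K\|^2$ and cross terms convert into $-\frac{1-1/K}{2}\|\nabla f_{1:K}\|^2$. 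This step uses Assumption~\ref{Assumption Bounded Inter-task Gradient Difference}, which produces the $\sigma_T^2$ penalty with the $(K-1)^2/K$ factor and the term $\|\nabla f_{1:K-1}(\theta_K^0)\|^2$.
\item $B_3$ splits into two parts. The client-drift part is bounded via $L$-smoothness and Lemma~\ref{Lemma old lemma}. The anchor part $-\lambda(\theta_K^t-\theta_K^0)$ is bounded via Lemma~\ref{Lemma diff between t and 0} and Assumption~\ref{Assumption Bounded Gradients}, which produce the $\gamma_L^2E^2L^2B^2/\lambda^2$ and $KB^2$ terms.
\item $B_4$ is handled with Lemma~\ref{Lemma sum expansion} (with $G=1/M$) in place of Lemma~\ref{Lemma sum expansion with probability}. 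The stochastic noise is split off using Lemma~\ref{Lemma expansion of sum variables with mean 0}, and averaging $M$ independent client noises gives the $\sigma_L^2/M$ term.
\end{itemize}

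The main obstacle is $B_4$. In the partial-participation proof, the gap between the probability-weighted expansion and the full sum is what generates the $\frac{M-N}{N(M-1)}$ terms. I need to verify that with deterministic selection the pairwise-difference term from Lemma~\ref{Lemma sum subtraction} cancels exactly, leaving only the full-participation quantities. The $\gamma_G\gamma_L\le\frac{1+\lambda}{3EL}$ condition should then absorb the remaining $\|\nabla f_K\|^2$ contributions, yielding the $\frac{1}{2}\sigma_T^2$ coefficient.

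To finish, I would sum the per-round inequality over $t\in[T]$, telescope, lower-bound $f_{1:K}(\theta_K^T)$ by $f_{1:K}^*$, divide by $\frac{1-1/K}{2(1+\lambda)}E\gamma_G\gamma_LT$, and bound the minimum over $t$ by the average.
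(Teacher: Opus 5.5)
Your route is the paper's: the corollary is just Theorem~\ref{Theorem 2} at $N=M$, and the paper gives no separate argument. Your opening claim that substituting $N=M$ ``does not give $\Psi'$ directly'' is mistaken, and your own term-by-term check refutes it. Every factor $\frac{M-N}{N(M-1)}$ is literally zero, the surviving terms of $\Psi$ coincide with $\Psi'$ verbatim, and the step-size hypotheses are identical. The partial-participation proof also specializes cleanly: $\mathbb{P}\{m\in\mathcal{S}_K^t\}=1$, Lemma~\ref{Lemma sum expansion with probability} collapses to Lemma~\ref{Lemma sum expansion}, $\frac{N(M-1)}{M(N-1)}=1$, and $T_7=T_8$. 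So citing the theorem is the whole proof, and nothing remains to be ``confirmed identically zero.''

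If you do rerun the descent argument, several of your predictions are off, and one step fails as you describe it.

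\textbf{Bookkeeping.} The $\sigma_T^2$ term comes from $B_4$, not $B_2$. It arises in the piece $T_{63}$ built from $\nabla f_K-\nabla f_i$, and its $\frac12$ is the $\frac{KL}{2}$ smoothness factor. $B_2$ instead yields $\|\nabla f_{1:K-1}(\theta_K^0)\|^2$ and, via $(K-1)L$-smoothness of $f_{1:K-1}$, Lemma~\ref{Lemma diff between t and 0} and $\gamma_G\le\frac{1}{K-1}$, the $\gamma_L^2E^2L^2B^2/\lambda^2$ term. The anchor part of $B_3$ gives only $KB^2$.

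\textbf{Role of the step-size condition.} Under full participation, $\gamma_G\gamma_L\le\frac{1+\lambda}{3EL}$ serves exactly to make $\frac{3\gamma_G^2\gamma_L^2L}{2K(1+\lambda)^2}T_7-\frac{\gamma_G\gamma_L}{2(1+\lambda)EK}T_8=\frac{\gamma_G\gamma_L}{2K(1+\lambda)}(\frac{3\gamma_G\gamma_LL}{1+\lambda}-\frac{1}{E})T_8\le 0$. It cannot absorb $\|\nabla f_K(\theta_K^t)\|^2$.

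\textbf{The failing step.} The term $\|\nabla f_K(\theta_K^t)\|^2$ survives with coefficient $-\frac{\gamma_G\gamma_LE}{1+\lambda}(\frac12-30K\gamma_L^2L^2E^2)$. Discarding it requires $\gamma_L\le\frac{1}{\sqrt{60K}\,EL}$. The stated $\gamma_L\le\frac{1}{8EL}$ only gives $30K\gamma_L^2L^2E^2\le\frac{30K}{64}$, which exceeds $\frac12$ once $K\ge 2$. The paper's proof records this side condition at step $(c_{12})$ but omits it from the stated hypotheses. A faithful rerun must therefore either add the condition or carry an extra term, bounded via $\|\nabla f_K\|\le B$.

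\textbf{The noise term.} The $K$ identical copies inside $\sum_{i=1}^K$ give $T_{61}\le\gamma_G^2\gamma_L^2K^2E\sigma_L^2/M$, not the paper's $\gamma_G^2\gamma_L^2KE\sigma_L^2/M$. Correct bookkeeping therefore puts $\frac{3K\gamma_G\gamma_LL\sigma_L^2}{2M(1+\lambda)}$ in the bracket, instead of the $\frac{3\gamma_G\gamma_LL\sigma_L^2}{2M(1+\lambda)}$ of $\Psi'$.

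Relative to the paper, then, the corollary is immediate from Theorem~\ref{Theorem 2}. A genuine rerun under the stated step sizes would not reproduce $\Psi'$; it would instead surface these defects in the theorem's proof.
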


\section{Complete Proofs}\label{app:proofs}

\subsection{Proof of Theorem \ref{Theorem 1}}\label{Proof of Theorem 1}
\begin{proof}
    According to the server update rule, we have:
    \begin{equation}
        \begin{split}
            \theta_{i}^{t+1} - \theta_i^t &= \frac{\bar{\theta}_i^{t+1} +  \lambda\theta_i^0}{1+ \lambda} - \theta_i^t\\
            &=\frac{\theta_i^t + \gamma_G\Delta_i^t +  \lambda\theta_i^0 - \left(1+ \lambda\right)\theta_i^t}{1+ \lambda}\\
            &=\frac{\gamma_G\Delta_i^t}{1+ \lambda} + \frac{ \lambda\left(\theta_i^0 - \theta_i^t\right)}{1+ \lambda}.
        \end{split}
    \end{equation}
    Then, we can obtain:
    \begin{equation}
        \begin{split}
            \theta_{i}^{t+1} - \theta_i^0 &= \theta_i^{t+1} - \theta_i^t - \left(\theta_i^0 - \theta_i^t\right)\\
            &=\frac{\gamma_G\Delta_i^t}{1+ \lambda} + \frac{ \lambda\left(\theta_i^0 - \theta_i^t\right) - \left(1+ \lambda\right)\left(\theta_i^0 - \theta_i^t\right)}{1+ \lambda}\\
            &=\frac{\gamma_G\Delta_i^t}{1+ \lambda} + \frac{\left(\theta_i^t - \theta_i^0\right)}{1+ \lambda}.
        \end{split}
    \end{equation}
    Unrolling the recursion, we get:
    \begin{equation}\label{Eq: theorem 1 update rule}
        \begin{split}
            \theta_i^t - \theta_i^0 = \sum_{\tau=0}^{t-1}\left(\frac{1}{1+ \lambda}\right)^{\tau+1}\gamma_G\Delta_i^{\tau}.
        \end{split}
    \end{equation}
    Then we can expand $f_{1:K-1}\left(\theta_K^0\right)$ by Lemma \ref{Lemma multi-L-Smooth}, we have:
    \begin{align}\label{Eq: theorem 1 all}
        %\begin{split}
            &\mathbb{E}_t\left[f_{1:K-1}\left(\theta_K^t\right)\right]\leq f_{1:K-1}\left(\theta_K^0\right) + \left\langle \nabla f_{1:K-1}\left(\theta_K^0\right), \mathbb{E}_t\left[\theta_K^t - \theta_K^0\right]\right\rangle\allowdisplaybreaks\nonumber\\
            &+ \frac{\left(K-1\right)}{2}L\mathbb{E}_t\left\Vert\theta_K^t-\theta_K^0\right\Vert^2\allowdisplaybreaks\nonumber\\
            {\leq}&f_{1:K-1}\left(\theta_K^0\right) - \gamma_G\gamma_L\left\langle\nabla f_{1:K-1}\left(\theta_K^0\right), \mathbb{E}_t\left[\frac{1}{N}\sum_{\tau=0}^{t-1}\sum_{m\in\mathcal{S}_K^{\tau}}\sum_{e=0}^{E-1}\left(\frac{1}{  1+ \lambda}\right)^{\tau+1}\left[g_{K,m}^{\tau,e}\right]\right]\right\rangle\allowdisplaybreaks\nonumber\\
            &+\frac{\left(K-1\right)}{2}L\cdot\mathbb{E}_t\left\Vert  \frac{-\gamma_G\gamma_L}{N}\sum_{\tau=0}^{t-1}\left(\frac{1}{1+ \lambda}\right)^{\tau+1}\sum_{m\in\mathcal{S}_K^{\tau}}\sum_{e=0}^{E-1}g_{K,m}^{\tau,e}\right\Vert^2\allowdisplaybreaks\nonumber\\
            {\leq}&f_{1:K-1}\left(\theta_K^0\right)\allowdisplaybreaks\nonumber\\
            &-\frac{\gamma_G\gamma_L}{{N}}\left\langle\nabla f_{1:K-1}\left(\theta_K^0\right), \sum_{\tau=0}^{t-1}\sum_{m=1}^M{\mathbb{P}\{m\in\mathcal{S}_K^\tau\}}\sum_{e=0}^{E-1}\left(\frac{1}{1+ \lambda}\right)^{\tau+1}\nabla f_{K,m}\left(\theta_{K,m}^{\tau,e}\right)\right\rangle\allowdisplaybreaks\nonumber\\
            &+\frac{\left(K-1\right)}{2}L\cdot\mathbb{E}_t\left\Vert  \frac{-\gamma_G\gamma_L}{N}\sum_{\tau=0}^{t-1}\left(\frac{1}{1+ \lambda}\right)^{\tau+1}\sum_{m=1}^M{\mathbb{P}\{m\in\mathcal{S}_K^\tau\}}\sum_{e=0}^{E-1}g_{K,m}^{\tau,e}\right\Vert^2\allowdisplaybreaks\nonumber\\
            {\leq}&f_{1:K-1}\left(\theta_K^0\right) -\frac{\gamma_G\gamma_L}{M}\left\langle\nabla f_{1:K-1}\left(\theta_K^0\right), \sum_{\tau=0}^{t-1}\sum_{m=1}^M\sum_{e=0}^{E-1}\left(\frac{1}{1+ \lambda}\right)^{\tau+1}\nabla f_{K,m}\left(\theta_{K,m}^{\tau,e}\right)\right\rangle\allowdisplaybreaks\nonumber\\
            &+\frac{\left(K-1\right)\gamma_G^2\gamma_L^2}{2}L\cdot\underbrace{\mathbb{E}_t\left\Vert  \frac{-1}{N}\sum_{\tau=0}^{t-1}\left(\frac{1}{1+ \lambda}\right)^{\tau+1}\sum_{m=1}^M{\mathbb{P}\{m\in\mathcal{S}_K^\tau\}}\sum_{e=0}^{E-1}g_{K,m}^{\tau,e}\right\Vert^2}_{T_1}.
        %\end{split}
    \end{align}

    Based on Assumption \ref{Assumption Unbiased Local Gradient estimator} and the Lemma \ref{Lemma sum expansion with probability} with $G = -\frac{1}{N}, \boldsymbol{t}_m = \sum_{\tau = 0}^{t-1}\left(\frac{1}{1+\lambda}\right)^{\tau + 1} \sum_{e=0}^{E-1}g_{K,m}^{\tau,e}$, and $\Lambda = 0$, we can bound term $T_1$ as:
    \begin{equation}\label{Eq: theorem 1 T1}
        \begin{split}
            &\mathbb{E}_t\left\Vert -\frac{1}{{N}}\sum_{\tau=0}^{t-1}\left(\frac{1}{1+ \lambda}\right)^{\tau+1}\sum_{m=1}^M{\mathbb{P}\{m\in\mathcal{S}_K^\tau\}}\sum_{e=0}^{E-1}g_{K,m}^{\tau,e}\right\Vert^2\\
            =&\mathbb{E}_t\left\Vert  \sum_{m=1}^M-\frac{1}{{N}}{\mathbb{P}\{m\in\mathcal{S}_K^\tau\}}\boldsymbol{t}_m\right\Vert^2\\
            =&-\frac{\left(N - 1\right)}{2MN\left(M - 1\right)}\sum_{m\neq n}\mathbb{E}_t\left\Vert \boldsymbol{t}_m - \boldsymbol{t}_n\right\Vert^2 + \frac{1}{M}\sum_{m=1}^M\mathbb{E}_t\left\Vert\boldsymbol{t}_m\right\Vert^2\\
            \overset{(a_1)}{=}&\frac{1}{M}\sum_{m=1}^M\mathbb{E}_t\left\Vert\boldsymbol{t}_m\right\Vert^2 - \frac{\left(N - 1\right)}{2MN\left(M - 1\right)}\left(2M\sum_{m=1}^M\mathbb{E}_t\left\Vert\boldsymbol{t}_m\right\Vert^2 - 2\mathbb{E}_t\left\Vert\sum_{m=1}^M\boldsymbol{t}_m\right\Vert^2\right)\\
            =&\left(\frac{1}{M} - \frac{\left(N - 1\right)}{N\left(M - 1\right)}\right)\sum_{m=1}^M\mathbb{E}_t\left\Vert\boldsymbol{t}_m\right\Vert^2 + \frac{\left(N - 1\right)}{MN\left(M - 1\right)}\mathbb{E}_t\left\Vert\sum_{m=1}^M\boldsymbol{t}_m\right\Vert^2\\
            \overset{(a_2)}{\leq}& \left(\frac{M - N}{MN\left(M - 1\right)} + \frac{N-1}{N\left(M -1\right)}\right)\sum_{m=1}^M\mathbb{E}_t\left\Vert\boldsymbol{t}_m\right\Vert^2\\
            =&\frac{1}{M}\sum_{m=1}^M\mathbb{E}_t\left\Vert\boldsymbol{t}_m\right\Vert^2,
        \end{split}
    \end{equation}
    where $(a_1)$ is due to Lemma \ref{Lemma sum subtraction} and $(a_2)$ follows Lemma \ref{Lemma expansion of sum variables}. Substituting~(\ref{Eq: theorem 1 T1}) back to~(\ref{Eq: theorem 1 all}), we have:
    \begin{equation}\label{Eq: theorem 1 all recombine}
        \begin{split}
            \mathbb{E}_t&\left[f_{1:K-1}\left(\theta_K^t\right)\right]\leq f_{1:K-1}\left(\theta_K^0\right)\\
            &- \frac{\gamma_G\gamma_L}{M}\left\langle\nabla f_{1:K-1}\left(\theta_K^0\right), \sum_{\tau=0}^{t-1}\sum_{m=1}^M\sum_{e=0}^{E-1}\left(\frac{1}{  1+ \lambda}\right)^{\tau+1}\nabla f_{K,m}\left(\theta_{K,m}^{\tau,e}\right)\right\rangle\\
            &+\frac{\left(K-1\right)}{2}L\cdot\frac{\gamma_G^2\gamma_L^2Et}{M}\left[\sum_{\tau=0}^{t-1}\sum_{m=1}^M\sum_{e=0}^{E-1}\mathbb{E}_t\left\Vert\left(\frac{1}{  1+ \lambda}\right)^{\tau+1}g_{K,m}^{\tau,e}\right\Vert^2\right]\\
            \overset{(a_3)}{\leq}&f_{1:K-1}\left(\theta_K^0\right) - \frac{\gamma_G\gamma_L}{M}\cdot \epsilon \left\Vert\nabla f_{1:K-1}\left(\theta_K^0\right)\right\Vert\cdot \left[\sum_{\tau=0}^{t-1}\sum_{m=1}^M\sum_{e=0}^{E-1}\left\Vert \left(\frac{1}{  1+ \lambda}\right)^{\tau+1}\nabla f_{K,m}\left(\theta_{K,m}^{\tau,e}\right)\right\Vert\right]\\
            &+\underbrace{\frac{\left(K-1\right)}{2}L\cdot\frac{\gamma_G^2\gamma_L^2Et}{M}\left[\sum_{\tau=0}^{t-1}\sum_{m=1}^M\sum_{e=0}^{E-1}\mathbb{E}_t\left\Vert\left(\frac{1}{  1+ \lambda}\right)^{\tau+1}g_{K,m}^{\tau,e}\right\Vert^2\right]}_{T_2},
        \end{split}
    \end{equation}
    where $(a_3)$ is due to $\left\langle\nabla f_{1:K-1}\left(\theta_K^0\right),\nabla f_{K,m}\left(\theta_{K,m}^{i,e}\right)\right\rangle\geq \epsilon\left\Vert\nabla f_{1:K-1}\left(\theta_K^0\right)\right\Vert\cdot\left\Vert\nabla f_{K,m}\left(\theta_{K,m}^{i,e}\right)\right\Vert$.

    Term $T_2$ in (\ref{Eq: theorem 1 all recombine}) can be bounded as
    \begin{equation}\label{Eq: theorem 1 T2}
        \begin{split}
            T_2 &\leq \underbrace{\frac{\left(K-1\right)}{2}L\cdot\frac{\gamma_G^2\gamma_L^2Et}{M}\left[\sum_{\tau=0}^{t-1}\sum_{m=1}^M\sum_{e=0}^{E-1}\mathbb{E}_t\left\Vert\left(\frac{1}{  1+ \lambda}\right)^{\tau+1}\nabla f_{K,m}\left(\theta_{K,m}^{\tau,e}\right)\right\Vert^2\right]}_{T_{21}}\\
            &+\underbrace{\frac{\left(K-1\right)}{2}L\cdot\frac{\gamma_G^2\gamma_L^2Et}{M}\left[\sum_{\tau=0}^{t-1}\sum_{m=1}^M\sum_{e=0}^{E-1}\mathbb{E}_t\left\Vert\left(\frac{1}{  1+ \lambda}\right)^{\tau+1}\left(g_{K,m}^{\tau,e} - \nabla f_{K,m}\left(\theta_{K,m}^{\tau,e}\right)\right)\right\Vert^2\right]}_{T_{22}}.
        \end{split}
    \end{equation}
    Since $\gamma_L\leq\frac{2\epsilon\left\Vert\nabla f_{1:K-1}\left(\theta_K^0\right)\right\Vert}{BLEt\sqrt{ME\cdot\left(\frac{1}{\lambda^2+2\lambda}\right)}}$ and $\gamma_G\leq\frac{1}{{ \sqrt{N}}\left(K-1\right)}$, we have:
    \begin{equation}\label{Eq: theorem 1 T21}
        \begin{split}
            T_{21} =& \frac{\gamma_G\gamma_L}{M{ \sqrt{N}}}\cdot{\left(K-1\right)\gamma_G}\cdot\frac{\gamma_LLEt}{2}\left[\sum_{\tau=0}^{t-1}\sum_{m=1}^M\sum_{e=0}^{E-1}\left\Vert\left(\frac{1}{  1+ \lambda}\right)^{\tau+1}\nabla f_{K,m}\left(\theta_{K,m}^{\tau,e}\right)\right\Vert^2\right]\\
            \leq& \frac{\gamma_G\gamma_L}{M{ \sqrt{N}}}\cdot\frac{\epsilon\left\Vert\nabla f_{1:K-1}\left(\theta_K^0\right)\right\Vert}{B\sqrt{ME\cdot\left(\frac{1}{\lambda^2+2\lambda}\right)}}\left[\sum_{\tau=0}^{t-1}\sum_{m=1}^M\sum_{e=0}^{E-1}\left\Vert\left(\frac{1}{  1+ \lambda}\right)^{\tau+1}\nabla f_{K,m}\left(\theta_{K,m}^{\tau,e}\right)\right\Vert^2\right]\\
            \overset{(a_4)}{\leq}&\frac{\gamma_G\gamma_L}{M{ \sqrt{N}}}\cdot\frac{\epsilon\left\Vert\nabla f_{1:K-1}\left(\theta_K^0\right)\right\Vert\cdot\left[\sum_{\tau=0}^{t-1}\sum_{m=1}^M\sum_{e=0}^{E-1}\left\Vert\left(\frac{1}{  1+ \lambda}\right)^{\tau+1}\nabla f_{K,m}\left(\theta_{K,m}^{\tau,e}\right)\right\Vert^2\right]}{\sqrt{\sum_{\tau=0}^{t-1}\left\Vert\left(\frac{1}{\left(  1+ \lambda\right)^2}\right)^{\tau+1}\right\Vert\cdot\sum_{m=1}^{M}\sum_{e=0}^{E-1}\left\Vert\nabla f_{K,m}\left(\theta_{K,m}^{\tau,e}\right)\right\Vert^2}}\\
            \leq& \frac{\gamma_G\gamma_L}{M{ \sqrt{N}}}\cdot\frac{\epsilon\left\Vert\nabla f_{1:K-1}\left(\theta_K^0\right)\right\Vert\cdot\left[\sum_{\tau=0}^{t-1}\sum_{m=1}^M\sum_{e=0}^{E-1}\left\Vert\left(\frac{1}{  1+ \lambda}\right)^{\tau+1}\nabla f_{K,m}\left(\theta_{K,m}^{\tau,e}\right)\right\Vert^2\right]}{\sqrt{\sum_{\tau=0}^{t-1}\sum_{m=1}^M\sum_{e=0}^{E-1}\left\Vert\left(\frac{1}{  1+ \lambda}\right)^{\tau+1}\nabla f_{K,m}\left(\theta_{K,m}^{\tau,e}\right)\right\Vert^2}}\\
            \leq & \frac{\gamma_G\gamma_L}{M{ \sqrt{N}}}\cdot\epsilon\left\Vert\nabla f_{1:K-1}\left(\theta_K^0\right)\right\Vert\sqrt{\sum_{\tau=0}^{t-1}\sum_{m=1}^M\sum_{e=0}^{E-1}\left\Vert\left(\frac{1}{  1+ \lambda}\right)^{\tau+1}\nabla f_{K,m}\left(\theta_{K,m}^{\tau,e}\right)\right\Vert^2}\\
            \leq& \frac{\gamma_G\gamma_L}{M{ \sqrt{N}}}\cdot\epsilon\left\Vert\nabla f_{1:K-1}\left(\theta_K^0\right)\right\Vert \left[\sum_{\tau=0}^{t-1}\sum_{m=1}^M\sum_{e=0}^{E-1}\left\Vert\left(\frac{1}{  1+ \lambda}\right)^{\tau+1}\nabla f_{K,m}\left(\theta_{K,m}^{\tau,e}\right)\right\Vert\right]\\
            \leq& \frac{\gamma_G\gamma_L}{M}\cdot\epsilon\left\Vert\nabla f_{1:K-1}\left(\theta_K^0\right)\right\Vert \left[\sum_{\tau=0}^{t-1}\sum_{m=1}^M\sum_{e=0}^{E-1}\left\Vert\left(\frac{1}{  1+ \lambda}\right)^{\tau+1}\nabla f_{K,m}\left(\theta_{K,m}^{\tau,e}\right)\right\Vert\right],
        \end{split}
    \end{equation}
    where $(a_4)$ follows that 
    \begin{equation}\label{Eq: theorem 1 sum}
        \begin{split}
            \frac{1}{\lambda^2+2\lambda} = \frac{1}{\left(1+\lambda\right)^2}\cdot\frac{1}{1-\frac{1}{\left(1+\lambda\right)^2}} = \sum_{\tau=0}^{\infty}\left(\frac{1}{\left(1+\lambda\right)^2}\right)^{\tau+1} \geq  \sum_{\tau=0}^{t-1}\left(\frac{1}{\left(1+\lambda\right)^2}\right)^{\tau+1}.
        \end{split}
    \end{equation}

    Term $T_{22}$ in (\ref{Eq: theorem 1 T2}) can be bounded as
    \begin{align}\label{Eq: theorem 1 T22}
        %\begin{split}
            T_{22} =& \frac{\left(K-1\right)}{2}L\cdot\frac{\gamma_G^2\gamma_L^2Et}{M}\left[\sum_{\tau=0}^{t-1}\sum_{m=1}^{M}\sum_{e=0}^{E-1}\left(\frac{1}{\left(  1+ \lambda\right)^2}\right)^{\tau+1}\mathbb{E}_t\left\Vert g_{K,m}^{\tau,e} - \nabla f_{K,m}\left(\theta_{K,m}^{\tau,e}\right)\right\Vert^2\right]\allowdisplaybreaks\nonumber\\
            \overset{(a_5)}{\leq}&\frac{\left(K-1\right)}{2}L\cdot\frac{\gamma_G^2\gamma_L^2Et}{M}\left[\sum_{\tau=0}^{t-1}\left(\frac{1}{\left(  1+ \lambda\right)^2}\right)^{\tau+1}\sum_{m=1}^{M}\sum_{e=0}^{E-1}\sigma_L^2\right]\allowdisplaybreaks\nonumber\\
            \leq & \frac{\left(K-1\right)}{2}L\cdot{\gamma_G^2\gamma_L^2E^2\sigma_L^2t}\cdot \sum_{\tau=0}^{t-1}\left(\frac{1}{\left(  1+ \lambda\right)^2}\right)^{\tau+1}\allowdisplaybreaks\nonumber\\
            \overset{(a_6)}{\leq}& \frac{\left(K-1\right)}{2}L\cdot \frac{\gamma_G^2\gamma_L^2E^2\sigma_L^2t}{\lambda^2+2\lambda}\allowdisplaybreaks\nonumber\\
            \overset{(a_7)}{\leq}& \frac{2\epsilon^2\sigma_L^2\left\Vert\nabla f_{1:K-1}\left(\theta_K^0\right)\right\Vert^2}{\left(K-1\right)tEM{ N}LB^2},
        %\end{split}
    \end{align}
    where $(a_5)$ is due to Assumption \ref{Assumption Bounded Local Gradient Difference}, $(a_6)$ follows from (\ref{Eq: theorem 1 sum}), and $(a_7)$ holds since $\gamma_L\leq\frac{2\epsilon\left\Vert\nabla f_{1:K-1}\left(\theta_K^0\right)\right\Vert}{BLEt\sqrt{ME\cdot\left(\frac{1}{\lambda^2+2\lambda}\right)}}$ and $\gamma_G\leq\frac{1}{{ \sqrt{N}}\left(K-1\right)}$.

    Plugging (\ref{Eq: theorem 1 T21}) and (\ref{Eq: theorem 1 T22}) into (\ref{Eq: theorem 1 T22}), we have:
    \begin{equation}\label{Eq: theorem 1 T1 all}
        \begin{split}
            T_2 \leq& \frac{\gamma_G\gamma_L}{M}\cdot\epsilon\left\Vert\nabla f_{1:K-1}\left(\theta_K^0\right)\right\Vert \left[\sum_{\tau=0}^{t-1}\sum_{m=1}^M\sum_{e=0}^{E-1}\left\Vert\left(\frac{1}{  1+ \lambda}\right)^{\tau+1}\nabla f_{K,m}\left(\theta_{K,m}^{\tau,e}\right)\right\Vert\right]\\
            &+\frac{2\epsilon^2\sigma_L^2\left\Vert\nabla f_{1:K-1}\left(\theta_K^0\right)\right\Vert^2}{\left(K-1\right)tEM{ N}LB^2}.
        \end{split}
    \end{equation}

    Plugging (\ref{Eq: theorem 1 T1 all}) back into (\ref{Eq: theorem 1 all recombine}), we have:
    \begin{equation}
        \mathbb{E}_t\left[f_{1:K-1}\left(\theta_K^t\right)\right]\leq f_{1:K-1}\left(\theta_K^0\right) + \frac{2\epsilon^2\sigma_L^2\left\Vert\nabla f_{1:K-1}\left(\theta_K^0\right)\right\Vert^2}{\left(K-1\right)tEM{ N}LB^2}.
    \end{equation}
\end{proof}

\subsection{Proof of Lemma \ref{Lemma diff between t and 0}}\label{Proof of Lemma 1}
\begin{proof}
    According to Eq.~\ref{Eq: theorem 1 update rule} and Lemma \ref{Lemma multi-triangle inequality}, we have:
    \begin{equation}
        \begin{split}
            \mathbb{E}_t\left\Vert\theta_i^t - \theta_i^0\right\Vert =& \mathbb{E}_t\left\Vert\sum_{\tau=0}^{t-1}\left(\frac{1}{1+ \lambda}\right)^{\tau+1}\gamma_G\Delta_i^{\tau}\right\Vert\\
            \leq&\sum_{\tau=0}^{t-1}\left(\frac{1}{  1+ \lambda}\right)^{\tau+1}\mathbb{E}_t\left\Vert\frac{-\gamma_G\gamma_L}{N}\sum_{m\in\mathcal{S}_i^\tau}\sum_{e=0}^{E-1}\nabla f_{i,m}\left(\theta_{i,m}^{\tau,e},\xi_{i,m}^{\tau,e}\right)\right\Vert\\
            {\leq}& \left[\sum_{\tau=0}^{t-1}\left(\frac{1}{1+ \lambda}\right)^{\tau+1}\right]\cdot\mathbb{E}_t\left[\frac{\gamma_G\gamma_L}{N}\sum_{m\in\mathcal{S}_i^\tau}\sum_{e=0}^{E-1}\left\Vert\nabla f_{i,m}\left(\theta_{i,m}^{\tau,e},\xi_{i,m}^{\tau,e}\right)\right\Vert\right]\\
            =& \left[\sum_{\tau=0}^{t-1}\left(\frac{1}{1+ \lambda}\right)^{\tau+1}\right]\cdot \frac{\gamma_G\gamma_L}{N}\sum_{m=1}^M\mathbb{P}\{m\in\mathcal{S}_i^\tau\}\left\Vert\nabla f_{i,m}\left(\theta_{i,m}^{\tau,e},\xi_{i,m}^{\tau,e}\right)\right\Vert\\
            \overset{(b_1)}{\leq}&\frac{1}{ \lambda}\cdot{\gamma_G\gamma_LEB},
        \end{split}
    \end{equation}
    where $(b_1)$ follows the truth that $\mathbb{P}\{m\in\mathcal{S}_i^\tau\} = \frac{N}{M}$Assumption \ref{Assumption Bounded Gradients} and 
    \begin{equation*}
        \sum_{\tau=0}^{t-1}\left(\frac{1}{  1+ \lambda}\right)^{\tau+1}\leq \sum_{\tau=0}^{\infty}\left(\frac{1}{1+ \lambda}\right)^{\tau+1} = \left(\frac{1}{ 1+ \lambda}\right)\cdot\frac{1}{1-\frac{1}{1+ \lambda}} = \frac{1}{ \lambda}.
    \end{equation*} 
    Then, we can bound $\left\Vert\theta_i^t - \theta_i^0\right\Vert^2$ as following:
    \begin{equation}
        \mathbb{E}_t\left\Vert\theta_i^t - \theta_i^0\right\Vert^2\leq\frac{\gamma_G^2\gamma_L^2E^2B^2}{\lambda^2}.
    \end{equation}
\end{proof}

\subsection{Proof of Theorem \ref{Theorem 2}}\label{Proof of Theorem 2}
\begin{proof}
    According to Lemma \ref{Lemma multi-L-Smooth}, $f_{1:K}$ is $KL$-smooth, and we have the following expansion by taking expectation over the randomness during the training process:
    \begin{align}\label{Eq: theorem 3 all}
        %\begin{split}
                \mathbb{E}_t&\left[f_{1:K}\left(\theta_K^{t+1}\right)\right]\leq f_{1:K}\left(\theta_K^t\right) + \left\langle\nabla f_{1:K}\left(\theta_K^t\right), \mathbb{E}_t\left[\theta_K^{t+1} - \theta_K^{t}\right]\right\rangle + \frac{KL}{2}\mathbb{E}_t\left\Vert\theta_K^{t+1} - \theta_K^t\right\Vert^2\allowdisplaybreaks\nonumber\\
                =& f_{1:K}\left(\theta_K^t\right) + \left\langle\nabla f_{1:K}\left(\theta_K^t\right), \mathbb{E}_t\left[\theta_K^{t+1} - \theta_K^{t}\right] - \frac{\gamma_G\gamma_LE}{1+ \lambda}\nabla f_K\left(\theta_K^t\right) + \frac{\gamma_G\gamma_LE}{1+ \lambda}\nabla f_K\left(\theta_K^t\right)\right\rangle \allowdisplaybreaks\nonumber\\
                &+ \frac{KL}{2}\mathbb{E}_t\left\Vert\theta_K^{t+1} - \theta_K^t\right\Vert^2\allowdisplaybreaks\nonumber\\
                \overset{(c_1)}{=}& f_{1:K}\left(\theta_K^t\right) - \frac{\gamma_G\gamma_LE}{1+ \lambda}\left\Vert\nabla f_K\left(\theta_K^t\right)\right\Vert^2 - \frac{\gamma_G\gamma_LE}{1+ \lambda}\underbrace{\left\langle\nabla f_{1:K-1}\left(\theta_K^t\right), \nabla f_K\left(\theta_K^t\right)\right\rangle}_{T_4} \allowdisplaybreaks\nonumber\\
                &+ \underbrace{\left\langle\nabla f_{1:K}\left(\theta_K^t\right), \mathbb{E}_t\left[\theta_K^{t+1} - \theta_K^{t}\right] + \frac{\gamma_G\gamma_LE}{1+ \lambda}\nabla f_K\left(\theta_K^t\right)\right\rangle}_{T_5} + \frac{KL}{2}\underbrace{\mathbb{E}_t\left\Vert\theta_K^{t+1}-\theta_K^t\right\Vert^2}_{T_6},
        %\end{split}
    \end{align}
    where $(c_1)$ holds due to $f_{1:K}\left(\theta_K^t\right) = f_{1:K-1}\left(\theta_K^t\right) + f_{K}\left(\theta_K^t\right)$.

    Then, term $T_4$ in (\ref{Eq: theorem 3 all}) can be expanded as:
    \begin{equation}\label{Eq: theorem 3 T4}
        \begin{split}
            T_4 \overset{(c_2)}{=}& \frac{1}{2}\left[\left\Vert \nabla f_{1:K}\left(\theta_K^t\right)\right\Vert^2 - \left\Vert \nabla f_{1:K-1}\left(\theta_K^t\right)\right\Vert^2 - \left\Vert \nabla f_{K}\left(\theta_K^t\right)\right\Vert^2 \right]\\
            =& \frac{1}{2}\left[\left\Vert \nabla f_{1:K}\left(\theta_K^t\right)\right\Vert^2 - \left\Vert \nabla f_{1:K-1}\left(\theta_K^t\right) - \nabla f_{1:K-1}\left(\theta_K^0\right) + \nabla f_{1:K-1}\left(\theta_K^0\right)\right\Vert^2\right.\\
            &\left.- \left\Vert \nabla f_{K}\left(\theta_K^t\right)\right\Vert^2 \right]\\
            \geq& \frac{1}{2}\left[\left\Vert \nabla f_{1:K}\left(\theta_K^t\right)\right\Vert^2 - 2\left\Vert \nabla f_{1:K-1}\left(\theta_K^t\right) - \nabla f_{1:K-1}\left(\theta_K^0\right)\right\Vert^2 - 2\left\Vert\nabla f_{1:K-1}\left(\theta_K^0\right)\right\Vert^2\right.\\
            &\left. - \left\Vert \nabla f_{K}\left(\theta_K^t\right)\right\Vert^2 \right]\\
            \overset{(c_3)}{\geq}& \frac{1}{2}\left\Vert \nabla f_{1:K}\left(\theta_K^t\right)\right\Vert^2 - \frac{1}{2}\left\Vert \nabla f_{K}\left(\theta_K^t\right)\right\Vert^2 - L^2\left(K-1\right)^2\left\Vert\theta_K^t - \theta_K^0\right\Vert^2 - \left\Vert\nabla f_{1:K-1}\left(\theta_K^0\right)\right\Vert^2\\
            \overset{\left(c_4\right)}{\geq}& \frac{1}{2}\left\Vert \nabla f_{1:K}\left(\theta_K^t\right)\right\Vert^2 - \frac{1}{2}\left\Vert \nabla f_{K}\left(\theta_K^t\right)\right\Vert^2 - \frac{L^2\left(K-1\right)^2\gamma_G^2\gamma_L^2E^2B^2}{\lambda^2} - \left\Vert\nabla f_{1:K-1}\left(\theta_K^0\right)\right\Vert^2,
        \end{split}
    \end{equation}
    where $(c_2)$ holds because $\left\langle x,y \right\rangle = \frac{1}{2}\left[\left\Vert x + y\right\Vert^2 - \left\Vert x\right\Vert^2 - \left\Vert y\right\Vert^2 \right]$, $(c_3)$ is due to the fact that $f_{1:K-1}\left(\theta_K^{t}\right)$ is $(L-1)$-smoothness and Assumption \ref{Assumption L-Smooth}, and $(c_4)$ follows Lemma \ref{Lemma diff between t and 0}.

    Since we can expand the right side of term $T_5$ as:
    \begin{align}
        %\begin{split}
            &\mathbb{E}_t\left[\theta_K^{t+1} - \theta_K^{t}\right] + \frac{\gamma_G\gamma_LE}{1+ \lambda}\nabla f_K\left(\theta_K^t\right)\allowdisplaybreaks\nonumber\\
            =&\left(\frac{1}{1+ \lambda}\right)\mathbb{E}_t\left[\frac{-\gamma_G\gamma_L}{N}\sum_{m\in\mathcal{S}_K^t}\sum_{e=0}^{E-1} g_{K,m}^{t,e} +  \lambda\left(\theta_K^0-\theta_K^t\right) + \gamma_G\gamma_LE\nabla f_K\left(\theta_K^t\right)\right]\allowdisplaybreaks\nonumber\\
            =&\left(\frac{1}{1+ \lambda}\right)\mathbb{E}_t\left[\frac{-\gamma_G\gamma_L}{N}\cdot\frac{N}{M}\sum_{m=1}^M\sum_{e=0}^{E-1}\left(\nabla f_{K,m}\left(\theta_{K,m}^{t,e}\right) - \nabla f_{K,m}\left(\theta_{K,m}^{t,0}\right)\right) +  \lambda\left(\theta_K^0-\theta_K^t\right) \right]\allowdisplaybreaks\nonumber\\
            =&\left(\frac{1}{1+ \lambda}\right)\mathbb{E}_t\left[\frac{-\gamma_G\gamma_L}{M}\sum_{m=1}^M\sum_{e=0}^{E-1}\left(\nabla f_{K,m}\left(\theta_{K,m}^{t,e}\right) - \nabla f_{K,m}\left(\theta_{K,m}^{t,0}\right)\right) +  \lambda\left(\theta_K^0-\theta_K^t\right) \right].
        %\end{split}
    \end{align}
    Then, we can further expand $T_5$ as:
    \begin{align}\label{Eq: theorem 3 T5}
        %\begin{split}
            T_5=&\left(\frac{1}{1+ \lambda}\right)\left\langle\nabla f_{1:K}\left(\theta_K^t\right),\right.\allowdisplaybreaks\nonumber\\
            &\left.\mathbb{E}_t\left[\frac{-\gamma_G\gamma_L}{M}\sum_{m=1}^M\sum_{e=0}^{E-1}\left(\nabla f_{K,m}\left(\theta_{K,m}^{t,e}\right) - \nabla f_{K,m}\left(\theta_{K,m}^{t,0}\right)\right) +  \lambda\left(\theta_K^0-\theta_K^t\right) \right]\right\rangle\allowdisplaybreaks\nonumber\\
            =&\left(\frac{1}{1+ \lambda}\right)\left\langle\sqrt{\frac{\gamma_G\gamma_LE}{K}}\nabla f_{1:K}\left(\theta_K^t\right),\right.\allowdisplaybreaks\nonumber\\
            &\left. \sqrt{\frac{K}{\gamma_G\gamma_LE}}\mathbb{E}_t\left[\frac{-\gamma_G\gamma_L}{M}\sum_{m=1}^M\sum_{e=0}^{E-1}\left(\nabla f_{K,m}\left(\theta_{K,m}^{t,e}\right) - \nabla f_{K,m}\left(\theta_{K,m}^{t,0}\right)\right) +  \lambda\left(\theta_K^0-\theta_K^t\right) \right]\right\rangle\allowdisplaybreaks\nonumber\\
            =&\frac{1}{2\left(1+ \lambda\right)}\left[\frac{\gamma_G\gamma_LE}{K}\left\Vert \nabla f_{1:K}\left(\theta_K^t\right)\right\Vert^2\right.\allowdisplaybreaks\nonumber\\
            +&  \frac{K}{\gamma_G\gamma_LE}\mathbb{E}_t\left\Vert\frac{-\gamma_G\gamma_L}{M}\sum_{m=1}^M\sum_{e=0}^{E-1}\left(\nabla f_{K,m}\left(\theta_{K,m}^{t,e}\right) - \nabla f_{K,m}\left(\theta_{K,m}^{t,0}\right)\right) +  \lambda\left(\theta_K^0-\theta_K^t\right) \right\Vert^2\allowdisplaybreaks\nonumber\\
            -&\left. \mathbb{E}_t\left\Vert\sqrt{\frac{K}{\gamma_G\gamma_LE}}\left[\frac{-\gamma_G\gamma_L}{M}\sum_{m=1}^M\sum_{e=0}^{E-1}\left(\nabla f_{K,m}\left(\theta_{K,m}^{t,e}\right) - \nabla f_{K,m}\left(\theta_{K,m}^{t,0}\right)\right) +  \lambda\left(\theta_K^0-\theta_K^t\right)\right]\right.\right.\allowdisplaybreaks\nonumber\\
            &\left.\left.\quad \quad \quad -\sqrt{\frac{\gamma_G\gamma_LE}{K}}\nabla f_{1:K}\left(\theta_K^t\right)\right\Vert^2\right]\allowdisplaybreaks\nonumber\\
            =& \frac{1}{2\left(1+\lambda\right)}\left[\frac{\gamma_G\gamma_LE}{K}\left\Vert \nabla f_{1:K}\left(\theta_K^t\right)\right\Vert^2 + T_{51} - T_{52}\right].
        %\end{split}
    \end{align}
    
    Term $T_{51}$ in (\ref{Eq: theorem 3 T5}) can be bounded as:
    \begin{equation}\label{Eq: Theorem 3 T51}
        \begin{split}
            T_{51}\leq& 2\mathbb{E}_t\left\Vert\frac{-\gamma_G\gamma_L}{M}\sum_{m=1}^M\sum_{e=0}^{E-1}\left(\nabla f_{K,m}\left(\theta_{K,m}^{t,e}\right) - \nabla f_{K,m}\left(\theta_{K,m}^{t,0}\right)\right)\right\Vert^2 + 2\left\Vert \lambda\left(\theta_K^0-\theta_K^t\right)\right\Vert^2\\
            \overset{(c_5)}{\leq}& \frac{2\gamma_G^2\gamma_L^2E}{M}\sum_{m=1}^M\sum_{e=0}^{E-1}\mathbb{E}_t\left\Vert\nabla f_{K,m}\left(\theta_{K,m}^{t,e}\right) - \nabla f_{K,m}\left(\theta_{K,m}^{t,0}\right)\right\Vert^2 + 2\lambda^2\left\Vert\theta_K^0-\theta_K^t\right\Vert^2\\
            \overset{(c_6)}{\leq}& \frac{2\gamma_G^2\gamma_L^2EL^2}{M}\sum_{m=1}^M\sum_{e=0}^{E-1}\mathbb{E}_t\left\Vert\theta_{K,m}^{t,e} - \theta_{K,m}^{t,0}\right\Vert^2 + 2\gamma_G^2\gamma_L^2E^2B^2\\
            \overset{(c_7)}{\leq}&2\gamma_G^2\gamma_L^2E^2L^2\cdot \left(5\gamma_L^2E\left(\sigma_L^2 + 6E\sigma_G^2\right) + 30\gamma_L^2E^2\left\Vert \nabla f_{K}\left(\theta_{K}^{t} \right)\right\Vert^2\right) + 2\gamma_G^2\gamma_L^2E^2B^2\\
            =&10\gamma_G^2\gamma_L^4E^3L^2\sigma_L^2 + 60\gamma_G^2\gamma_L^4E^4L^2\left(\sigma_G^2+\left\Vert\nabla f_K\left(\theta_K^t\right)\right\Vert^2\right) + 2\gamma_G^2\gamma_L^2E^2B^2,
        \end{split}
    \end{equation}
    where $(c_5)$ is due to Lemma \ref{Lemma expansion of sum variables}, $(c_6)$ holds due to Assumption \ref{Assumption L-Smooth} and Lemma \ref{Lemma diff between t and 0}, and $(c_7)$ follows Lemma \ref{Lemma old lemma}.
    Term $T_{52}$ in (\ref{Eq: theorem 3 T5}) can be expanded as:
    \begin{align}\label{Eq: theorem 3 T52}
        %\begin{split}
            T_{52}=& \frac{1}{\gamma_G\gamma_LEK}\mathbb{E}_t\left\Vert \frac{-\gamma_G\gamma_LK}{M}\sum_{m=1}^M\sum_{e=0}^{E-1}\left(\nabla f_{K,m}\left(\theta_{K,m}^{t,e}\right) - \nabla f_{K,m}\left(\theta_{K,m}^{t,0}\right)\right) \right.\allowdisplaybreaks\nonumber\\
            &\left.+  \lambda K\left(\theta_K^0-\theta_K^t\right) - \gamma_G\gamma_LE\nabla f_{1:K}\left(\theta_K^t\right) \right\Vert^2\allowdisplaybreaks\nonumber\\
            =&\frac{\gamma_G\gamma_L}{EK}\mathbb{E}_t\left\Vert-\frac{1}{M}\sum_{i=1}^K \sum_{m=1}^M\sum_{e=0}^{E-1}\left(\nabla f_{K,m}\left(\theta_{K,m}^{t,e}\right) - \nabla f_{K,m}\left(\theta_{K,m}^{t,0}\right)\right) \right.\allowdisplaybreaks\nonumber\\
            &\left.+ \frac{\lambda K}{\gamma_G\gamma_L}\left(\theta_K^0-\theta_K^t\right)- E\sum_{i=1}^Kf_i\left(\theta_K^t\right)\right\Vert^2\allowdisplaybreaks\nonumber\\
            =&\frac{\gamma_G\gamma_L}{EK}\mathbb{E}_t\left\Vert -\frac{1}{M}\sum_{i=1}^K \sum_{m=1}^M\sum_{e=0}^{E-1}\left(\nabla f_{K,m}\left(\theta_{K,m}^{t,e}\right) - \nabla f_K\left(\theta_K^t\right) + \nabla f_i\left(\theta_K^t\right)\right) \right.\allowdisplaybreaks\nonumber\\
            &\left.+  \frac{\lambda K}{\gamma_G\gamma_L}\left(\theta_K^0-\theta_K^t\right)\right\Vert^2.
        %\end{split}
    \end{align}
    Substituting terms $T_{51}$ and $T_{52}$ into (\ref{Eq: theorem 3 T5}), we have:
    \begin{align}\label{Eq: theorem 3 T5 refined}
        %\begin{split}
            T_5&\leq \frac{\gamma_G\gamma_LE}{2K\left(1+ \lambda\right)}\left\Vert\nabla f_{1:K}\left(\theta_K^t\right)\right\Vert^2 + \frac{5\gamma_G\gamma_L^3KE^2L^2}{\left(1+\lambda\right)}\left(\left(\sigma_L^2+6E\sigma_G^2\right)+6E\left\Vert\nabla f_K\left(\theta_K^t\right)\right\Vert^2\right)\allowdisplaybreaks\nonumber\\
            &+ \frac{\gamma_G\gamma_LKEB^2}{1+ \lambda} -\frac{\gamma_G\gamma_L}{2\left(1+ \lambda\right)EK}\allowdisplaybreaks\nonumber\\
            &\cdot\mathbb{E}_t\left\Vert -\frac{1}{M}\sum_{i=1}^K \sum_{m=1}^M\sum_{e=0}^{E-1}\left(\nabla f_{K,m}\left(\theta_{K,m}^{t,e}\right) - \nabla f_K\left(\theta_K^t\right) + \nabla f_i\left(\theta_K^t\right)\right) +  \frac{\lambda K}{\gamma_G\gamma_L}\left(\theta_K^0-\theta_K^t\right)\right\Vert^2.
        %\end{split}
    \end{align}
    Term $T_6$ in (\ref{Eq: theorem 3 all}) can be expanded as:
    \begin{align}\label{Eq: theorem 3 T6 expansion}
        %\begin{split}
            &T_6 =\mathbb{E}_t\left[\left\Vert \left(\frac{-1}{1+ \lambda}\right)\left(\frac{\gamma_G\gamma_L}{N}\sum_{m\in\mathcal{S}_{K}^t}\sum_{e=0}^{E-1}g_{K,m}^{t,e} +  \lambda\left(\theta_K^t - \theta_K^0\right)\right)\right\Vert^2\right]\allowdisplaybreaks\nonumber\\
            =& \frac{1}{\left(1+ \lambda\right)^2}\mathbb{E}_t\left\Vert \frac{1}{K}\left(\frac{-\gamma_G\gamma_L}{N}\sum_{i=1}^K\sum_{m\in\mathcal{S}_{K}^t}\sum_{e=0}^{E-1} g_{K,m}^{t,e} +  \lambda K\left(\theta_K^0-\theta_K^t\right)\right)  \right\Vert^2\allowdisplaybreaks\nonumber\\
            =&\frac{1}{K^2\left(1+ \lambda\right)^2}\mathbb{E}_t\left\Vert\frac{-\gamma_G\gamma_L}{N}\sum_{i=1}^K\sum_{m\in\mathcal{S}_{K}^t}\sum_{e=0}^{E-1}g_{K,m}^{t,e} +  \lambda K\left(\theta_K^0-\theta_K^t\right)\right\Vert^2\allowdisplaybreaks\nonumber\\
            =&\frac{1}{K^2\left(1+ \lambda\right)^2}\mathbb{E}_t\left\Vert \frac{-\gamma_G\gamma_L}{N}\sum_{i=1}^K\sum_{m\in\mathcal{S}_{K}^t}\sum_{e=0}^{E-1}\left(g_{K,m}^{t,e} - \nabla f_{K,m}\left(\theta_{K,m}^{t,e}\right)  \right. \right.\allowdisplaybreaks\nonumber\\
            &\left.\left. + \nabla f_{K,m}\left(\theta_{K,m}^{t,e}\right) - \nabla f_K\left(\theta_K^t\right) + \nabla f_K\left(\theta_K^t\right)- \nabla f_i\left(\theta_K^t\right) + \nabla f_i\left(\theta_K^t\right)\right)+  \lambda K\left(\theta_K^0-\theta_K^t\right)\right\Vert^2.
        %\end{split}
    \end{align}

    Then, term $T_6$ can be bounded by three terms as:
    \begin{align}\label{Eq: theorem 3 T6}
        %\begin{split}
            T_6& \leq \frac{3}{K^2\left(1+ \lambda\right)^2}\left(\underbrace{\mathbb{E}_t\left\Vert - \frac{\gamma_G\gamma_L}{N}\sum_{i=1}^K\sum_{m\in\mathcal{S}_{K}^t}\sum_{e=0}^{E-1}\left(g_{K,m}^{t,e} - \nabla f_{K,m}\left(\theta_{K,m}^{t,e}\right) \right)\right\Vert^2}_{T_{61}}+\gamma_G^2\gamma_L^2\right.\allowdisplaybreaks\nonumber\\
            &\cdot\underbrace{\mathbb{E}_t\left\Vert -\frac{1}{N}\sum_{i=1}^K\sum_{m\in\mathcal{S}_{K}^t}\sum_{e=0}^{E-1}\left(\nabla f_{K,m}\left(\theta_{K,m}^{t,e}\right) - \nabla f_K\left(\theta_K^t\right) + \nabla f_i\left(\theta_K^t\right)\right) +  \frac{\lambda K}{\gamma_G\gamma_L}\left(\theta_K^0-\theta_K^t\right)\right\Vert^2}_{T_{62}}\allowdisplaybreaks\nonumber\\
            &\left.+\underbrace{\mathbb{E}_t\left\Vert- \frac{\gamma_G\gamma_L}{N}\sum_{i=1}^K\sum_{m\in\mathcal{S}_{K}^t}\sum_{e=0}^{E-1}\left(\nabla f_K\left(\theta_K^t\right) - \nabla f_i\left(\theta_K^t\right) \right)\right\Vert^2}_{T_{63}}\right).
        %\end{split}
    \end{align}

Term $T_{61}$ in~(\ref{Eq: theorem 3 T6}) can be bounded as:
\begin{equation}\label{Eq: theorem 3 T61}
    \begin{split}
        T_{61} = \mathbb{E}_t\left\Vert - \frac{\gamma_G\gamma_L}{N}\sum_{i=1}^K\sum_{m=1}^M\mathbb{P}\{m\in\mathcal{S}_K^t\}\sum_{e=0}^{E-1}\left(g_{K,m}^{t,e} - \nabla f_{K,m}\left(\theta_{K,m}^{t,e}\right) \right)\right\Vert^2\leq \frac{\gamma_G^2\gamma_L^2EK\sigma_L^2}{N}.
    \end{split}
\end{equation}

Term $T_{62}$ in~(\ref{Eq: theorem 3 T6}) can be expanded as:
\begin{equation}\label{Eq: theorem 3 T62}
    \begin{split}
        T_{62} =& \mathbb{E}_t\left\Vert -\frac{1}{N}\sum_{i=1}^K\sum_{m=1}^M\mathbb{P}\{m\in\mathcal{S}_K^t\}\sum_{e=0}^{E-1}\left(\nabla f_{K,m}\left(\theta_{K,m}^{t,e}\right) - \nabla f_K\left(\theta_K^t\right) + \nabla f_i\left(\theta_K^t\right)\right) \right.\\
        &\left.+  \frac{\lambda K}{\gamma_G\gamma_L}\left(\theta_K^0-\theta_K^t\right)\right\Vert^2.
    \end{split}
\end{equation}

Term $T_{63}$ in~(\ref{Eq: theorem 3 T6}) can be expanded as:
\begin{equation}\label{Eq: theorem 3 T63}
    \begin{split}
        T_{63} =& \mathbb{E}_t\left\Vert- \frac{\gamma_G\gamma_L}{N}\sum_{i=1}^K\sum_{m=1}^M\mathbb{P}\{m\in\mathcal{S}_K^t\}\sum_{e=0}^{E-1}\left(\nabla f_K\left(\theta_K^t\right) - \nabla f_i\left(\theta_K^t\right) \right)\right\Vert^2\\
        =& \mathbb{E}_t\left\Vert- \frac{\gamma_G\gamma_L}{M}\sum_{i=1}^K\sum_{m=1}^M\sum_{e=0}^{E-1}\left(\nabla f_K\left(\theta_K^t\right) - \nabla f_i\left(\theta_K^t\right) \right)\right\Vert^2\\
        =&\mathbb{E}_t\left\Vert- \frac{\gamma_G\gamma_L}{M}\sum_{i=1}^{K-1}\sum_{m=1}^M\sum_{e=0}^{E-1}\left(\nabla f_K\left(\theta_K^t\right) - \nabla f_i\left(\theta_K^t\right) \right)\right\Vert^2\\
        \leq&\gamma_G^2\gamma_L^2\left(K-1\right)^2E^2\sigma_T^2.
    \end{split}
\end{equation}
Substituting (\ref{Eq: theorem 3 T61}), (\ref{Eq: theorem 3 T62}), and (\ref{Eq: theorem 3 T63}) back into (\ref{Eq: theorem 3 T6}), we have:
\begin{equation}\label{Eq: theorem 3 T6 all}
    \begin{split}
        T_6\leq&\frac{3\gamma_G^2\gamma_L^2E\sigma_L^2}{NK\left(1+ \lambda\right)^2} + \frac{3\gamma_G^2\gamma_L^2\left(K-1\right)^2E^2\sigma_T^2}{K^2\left(1+\lambda\right)^2}+\frac{3\gamma_G^2\gamma_L^2}{K^2\left(1+\lambda\right)^2}\\
        &\cdot\mathbb{E}_t\left\Vert -\frac{1}{N}\sum_{i=1}^K\sum_{m=1}^M\mathbb{P}\{m\in\mathcal{S}_K^t\}\sum_{e=0}^{E-1}\left(\nabla f_{K,m}\left(\theta_{K,m}^{t,e}\right) - \nabla f_K\left(\theta_K^t\right) + \nabla f_i\left(\theta_K^t\right)\right)\right.\\
        &\left.\quad +  \frac{\lambda K}{\gamma_G\gamma_L}\left(\theta_K^0-\theta_K^t\right)\right\Vert^2.
    \end{split}
\end{equation}

Substituting (\ref{Eq: theorem 3 T4}), (\ref{Eq: theorem 3 T5 refined}), and (\ref{Eq: theorem 3 T6 all}) back into (\ref{Eq: theorem 3 all}), we have:
\begin{align}\label{Eq: theorem 3 recombine}
        %\begin{split}
            &\mathbb{E}_t\left[f_{1:K}\left(\theta_K^{t+1}\right)\right]\leq  f_{1:K}\left(\theta_K^t\right) -\frac{\gamma_G\gamma_L E}{1+ \lambda}\left(\frac{1}{2} - 30K\gamma_L^2L^2E^2\right)\left\Vert\nabla f_K\left(\theta_K^t\right)\right\Vert^2 \allowdisplaybreaks\nonumber\\
            &- \frac{1}{2}\left(\frac{\gamma_G\gamma_LE}{1+ \lambda}\right)\left(1-\frac{1}{K}\right)\left\Vert\nabla f_{1:K}\left(\theta_K^t\right)\right\Vert^2 + \frac{\gamma_G\gamma_LE}{1+ \lambda}\left(\frac{L^2\left(K-1\right)^2\gamma_G^2\gamma_L^2E^2}{\lambda^2}+K\right)B^2\allowdisplaybreaks\nonumber\\
            & + \frac{\gamma_G\gamma_LE}{1+ \lambda}\left(5\gamma_L^2KEL^2\left(\sigma^2_L + 6E\sigma_G^2\right) + \frac{3\gamma_G\gamma_LL}{2\left(1+ \lambda\right)}\left(\frac{\sigma_L^2}{N} + \frac{\left(K-1\right)^2 {E}}{K}\sigma_T^2\right)\right)\allowdisplaybreaks\nonumber\\ 
            &+\frac{\gamma_G\gamma_LE}{1+ \lambda}\left\Vert\nabla f_{1:K-1}\left(\theta_K^0\right)\right\Vert^2 + \frac{3\gamma_G^2\gamma_L^2L}{2K\left(1+\lambda\right)^2}\allowdisplaybreaks\nonumber\\
            &\cdot\underbrace{\mathbb{E}_t\left\Vert -\frac{1}{N}\sum_{i=1}^K\sum_{m=1}^M\mathbb{P}\{m\in\mathcal{S}_K^t\}\sum_{e=0}^{E-1}\left(\nabla f_{K,m}\left(\theta_{K,m}^{t,e}\right) - \nabla f_K\left(\theta_K^t\right) + \nabla f_i\left(\theta_K^t\right)\right) +  \frac{\lambda K}{\gamma_G\gamma_L}\left(\theta_K^0-\theta_K^t\right)\right\Vert^2}_{T_7}\allowdisplaybreaks\nonumber\\
            &- \frac{\gamma_G\gamma_L}{2\left(1+\lambda\right)EK}\allowdisplaybreaks\nonumber\\
            &\cdot\underbrace{\mathbb{E}_t\left\Vert -\frac{1}{M}\sum_{i=1}^K\sum_{m=1}^M\sum_{e=0}^{E-1}\left(\nabla f_{K,m}\left(\theta_{K,m}^{t,e}\right) - \nabla f_K\left(\theta_K^t\right) + \nabla f_i\left(\theta_K^t\right)\right) +  \frac{\lambda K}{\gamma_G\gamma_L}\left(\theta_K^0-\theta_K^t\right)\right\Vert^2}_{T_8}.
        %\end{split}
    \end{align}
According to Lemma \ref{Lemma sum expansion with probability} with $\boldsymbol{t}_m = \sum_{i=1}^K\sum_{e=0}^{E-1}\left(\nabla f_{K,m}\left(\theta_{K,m}^{t,e}\right) - \nabla f_K\left(\theta_K^t\right) + \nabla f_i\left(\theta_K^t\right)\right)$, $G = -\frac{1}{N}$, and $\Lambda = \frac{\lambda K}{\gamma_G\gamma_L}\left(\theta_K^0 - \theta_K^t\right)$, we can expand term $T_7$ as:
\begin{align}\label{Eq: theorem 3 T7}
    %\begin{split}
        T_7 =& \left(\frac{N^2}{MN^2} - \frac{N}{MN}\right)\sum_{m=1}^M\mathbb{E}_t\left\Vert\boldsymbol{t}_m\right\Vert^2 - \frac{N\left(N-1\right)}{2MN^2\left(M - 1\right)}\sum_{m\neq n}\mathbb{E}_t\left\Vert\boldsymbol{t}_m - \boldsymbol{t}_n\right\Vert^2 \allowdisplaybreaks\nonumber\\
        &+ \left(1 - \frac{N}{N}\right)\mathbb{E}_t\left\Vert\Lambda\right\Vert^2 + \frac{N}{MN}\sum_{m=1}^M\mathbb{E}_t\left\Vert\boldsymbol{t}_m - \Lambda\right\Vert^2\allowdisplaybreaks\nonumber\\
        =& - \frac{\left(N-1\right)}{2MN\left(M - 1\right)}\sum_{m\neq n}\mathbb{E}_t\left\Vert\boldsymbol{t}_m - \boldsymbol{t}_n\right\Vert^2 + \frac{1}{M}\sum_{m=1}^M\mathbb{E}_t\left\Vert\boldsymbol{t}_m - \Lambda\right\Vert^2.
    %\end{split}
\end{align}

According to Lemma \ref{Lemma sum expansion} with $\boldsymbol{t}_m = \sum_{i=1}^K\sum_{e=0}^{E-1}\left(\nabla f_{K,m}\left(\theta_{K,m}^{t,e}\right) - \nabla f_K\left(\theta_K^t\right) + \nabla f_i\left(\theta_K^t\right)\right)$, $G = -\frac{1}{M}$, and $\Lambda = \frac{\lambda K}{\gamma_G\gamma_L}\left(\theta_K^0 - \theta_K^t\right)$, we can expand term $T_8$ as:
\begin{equation}\label{Eq: theorem 3 T8}
    \begin{split}
        T_8 =& \left(\frac{M}{M^2} - \frac{1}{M}\right)\sum_{m=1}^M\mathbb{E}_t\left\Vert\boldsymbol{t}_m\right\Vert^2 - \frac{1}{2M^2}\sum_{m\neq n}\mathbb{E}_t\left\Vert\boldsymbol{t}_m - \boldsymbol{t}_n\right\Vert^2 \\
        &+ \left(1 - \frac{M}{M}\right)\left\Vert\Lambda\right\Vert^2 + \frac{1}{M}\sum_{m=1}^M\mathbb{E}_t\left\Vert\boldsymbol{t}_m - \Lambda\right\Vert^2\\
        =& - \frac{1}{2M^2}\sum_{m\neq n}\mathbb{E}_t\left\Vert\boldsymbol{t}_m - \boldsymbol{t}_n\right\Vert^2 + \frac{1}{M}\sum_{m=1}^M\mathbb{E}_t\left\Vert\boldsymbol{t}_m - \Lambda\right\Vert^2.
    \end{split}
\end{equation}

Since $T_7\geq 0$ and $T_8\geq 0$, we have:
\begin{equation}\label{Eq: theorem 3 T7T8}
    \begin{split}
        &\begin{cases}
            T_7 = - \frac{\left(N-1\right)}{2MN\left(M - 1\right)}\sum_{m\neq n}\mathbb{E}_t\left\Vert\boldsymbol{t}_m - \boldsymbol{t}_n\right\Vert^2 + \frac{1}{M}\sum_{m=1}^M\mathbb{E}_t\left\Vert\boldsymbol{t}_m - \Lambda\right\Vert^2\geq 0,\\
            \\
            T_8 = - \frac{1}{2M^2}\sum_{m\neq n}\mathbb{E}_t\left\Vert\boldsymbol{t}_m - \boldsymbol{t}_n\right\Vert^2 + \frac{1}{M}\sum_{m=1}^M\mathbb{E}_t\left\Vert\boldsymbol{t}_m - \Lambda\right\Vert^2\geq 0.
        \end{cases}\\
        \Rightarrow& \sum_{m\neq n}\mathbb{E}_t\left\Vert\boldsymbol{t}_m - \boldsymbol{t}_n\right\Vert^2\leq 2M\sum_{m=1}^M\mathbb{E}_t\left\Vert\boldsymbol{t}_m - \Lambda\right\Vert^2\leq2M\cdot\frac{N\left(M-1\right)}{M\left(N-1\right)}\sum_{m=1}^M\mathbb{E}_t\left\Vert\boldsymbol{t}_m - \Lambda\right\Vert^2
    \end{split}
\end{equation}

For $\boldsymbol{t}_m$, we have:
\begin{align}\label{Eq: theorem 3 sum t_k}
    %\begin{split}
        &\sum_{m=1}^M\mathbb{E}_t\left\Vert \boldsymbol{t}_m\right\Vert^2 = \sum_{m=1}^M\mathbb{E}_t\left\Vert \sum_{i=1}^K\sum_{e=0}^{E-1}\left(\nabla f_{K,m}\left(\theta_{K,m}^{t,e}\right) - \nabla f_K\left(\theta_K^t\right) + \nabla f_i\left(\theta_K^t\right)\right)\right\Vert^2\allowdisplaybreaks\nonumber\\
        =&\sum_{m=1}^M\mathbb{E}_t\left\Vert \sum_{i=1}^K\sum_{e=0}^{E-1}\left(\nabla f_{K,m}\left(\theta_{K,m}^{t,e}\right) - \nabla f_{K,m}\left(\theta_{K}^{t}\right) + \nabla f_{K,m}\left(\theta_{K}^{t}\right) - \nabla f_K\left(\theta_K^t\right)  \right.\right.\allowdisplaybreaks\nonumber\\
        & \left.\left. + \nabla f_i\left(\theta_K^t\right) - \nabla f_{K}\left(\theta_{K}^{t}\right)+ \nabla f_{K}\left(\theta_{K}^{t}\right)\right)\right\Vert^2\allowdisplaybreaks\nonumber\\
        \leq&4\sum_{m=1}^M\mathbb{E}_t\left\Vert\sum_{i=1}^K\sum_{e=0}^{E-1}\left(\nabla f_{K,m}\left(\theta_{K,m}^{t,e}\right) - \nabla f_{K,m}\left(\theta_{K}^{t}\right)\right)\right\Vert^2 \allowdisplaybreaks\nonumber\\
        &+ 4\sum_{m=1}^M\mathbb{E}_t\left\Vert\sum_{i=1}^K\sum_{e=0}^{E-1}\left(\nabla f_{K,m}\left(\theta_{K}^{t}\right) - \nabla f_{K}\left(\theta_{K}^{t}\right)\right)\right\Vert^2\allowdisplaybreaks\nonumber\\
        &+4\sum_{m=1}^M\mathbb{E}_t\left\Vert\sum_{i=1}^K\sum_{e=0}^{E-1}\left(\nabla f_{i}\left(\theta_{K}^{t}\right) - \nabla f_{K}\left(\theta_{K}^{t}\right)\right)\right\Vert^2 + 4\sum_{m=1}^M\mathbb{E}_t\left\Vert\sum_{i=1}^K\sum_{e=0}^{E-1}\nabla f_{K}\left(\theta_{K}^{t}\right) \right\Vert^2\allowdisplaybreaks\nonumber\\
        \overset{\left(c_8\right)}{\leq}&4EK^2\sum_{m=1}^M\sum_{e=0}^{E-1}\mathbb{E}_t\left\Vert\nabla f_{K,m}\left(\theta_{K,m}^{t,e}\right) - \nabla f_{K,m}\left(\theta_{K}^{t}\right)\right\Vert^2 + 4ME^2K^2\left(\sigma_G^2+\left\Vert\nabla f_K\left(\theta_K^t\right)\right\Vert^2\right) \allowdisplaybreaks\nonumber\\
        &+ 4ME^2\left(K-1\right)^2\sigma_T^2\allowdisplaybreaks\nonumber\\
        \overset{\left(c_9\right)}{\leq}& 4EK^2L^2\sum_{m=1}^M\mathbb{E}_t\left\Vert\theta_{K,m}^{t,e} - \theta_K^t\right\Vert^2 + 4ME^2K^2\left(\sigma_G^2+\left\Vert\nabla f_K\left(\theta_K^t\right)\right\Vert^2\right) + 4ME^2\left(K-1\right)^2\sigma_T^2\allowdisplaybreaks\nonumber\\
        \overset{\left(c_{10}\right)}{\leq}& 20ME^3K^2L^2\gamma_L^2\left(\sigma_L^2 + 6E\sigma_G^2\right) + \left(120ME^4L^2K^2\gamma_L^2 +4ME^2K^2\right)\left\Vert\nabla f_K\left(\theta_K^t\right)\right\Vert^2 \allowdisplaybreaks\nonumber\\
        &+ 4ME^2K^2\sigma_G^2 + 4ME^2\left(K-1\right)^2\sigma_T^2,
    %\end{split}
\end{align}
where $(c_8)$ follows Assumption \ref{Assumption Bounded Intra-task Gradient Difference} and Assumption \ref{Assumption Bounded Inter-task Gradient Difference}, $(c_9)$ is due to Assumption \ref{Assumption L-Smooth}, and $(c_{10})$ follows Lemma \ref{Lemma diff between t and 0}.

Based on~(\ref{Eq: theorem 3 T7}), (\ref{Eq: theorem 3 T8}), (\ref{Eq: theorem 3 T7T8}), and (\ref{Eq: theorem 3 sum t_k}), we have:
\begin{equation}
    \begin{split}
        &\frac{3\gamma_G^2\gamma_L^2L}{2K\left(1 +\lambda\right)^2} T_7 - \frac{\gamma_G\gamma_L}{2\left(1 + \lambda\right) EK}T_8\\
        =&\frac{3\gamma_G^2\gamma_L^2L}{2K\left(1 +\lambda\right)^2}\left(- \frac{\left(N-1\right)}{2MN\left(M - 1\right)}\sum_{m\neq n}\mathbb{E}_t\left\Vert\boldsymbol{t}_m - \boldsymbol{t}_n\right\Vert^2 + \frac{1}{M}\sum_{m=1}^M\mathbb{E}_t\left\Vert\boldsymbol{t}_m - \Lambda\right\Vert^2\right) \\
        &- \frac{\gamma_G\gamma_L}{2\left(1 + \lambda\right) EK}\left(- \frac{1}{2M^2}\sum_{m\neq n}\mathbb{E}_t\left\Vert\boldsymbol{t}_m - \boldsymbol{t}_n\right\Vert^2 + \frac{1}{M}\sum_{m=1}^M\mathbb{E}_t\left\Vert\boldsymbol{t}_m - \Lambda\right\Vert^2\right)\\
        =&\left(\frac{\gamma_G\gamma_L}{4\left(1+\lambda\right)EM^2K} - \frac{3\left(N-1\right)\gamma_G^2\gamma_L^2L}{4\left(1+\lambda\right)^2KM\left(M-1\right)N}\right)\sum_{m\neq n}\mathbb{E}_t\left\Vert\boldsymbol{t}_m - \boldsymbol{t}_n\right\Vert^2 \\    
        &+ \left(\frac{3\gamma_G^2\gamma_L^2L}{2MK\left(1+\lambda\right)^2} - \frac{\gamma_G\gamma_L}{2\left(1+\lambda\right)EKM}\right)\sum_{m=1}^M\mathbb{E}_t\left\Vert\boldsymbol{t}_m - \Lambda\right\Vert^2.
    \end{split}
\end{equation}

Since $\sum_{m\neq n}\left\Vert\boldsymbol{t}_m - \boldsymbol{t}_n\right\Vert^2\leq 2M\sum_{m=1}^M\left\Vert\boldsymbol{t}_m - \Lambda\right\Vert^2$, we have:
\begin{align}
    %\begin{split}
        &\frac{3\gamma_G^2\gamma_L^2L}{2K\left(1 +\lambda\right)^2} T_7 - \frac{\gamma_G\gamma_L}{2\left(1 + \lambda\right) EK}T_8\allowdisplaybreaks\nonumber\\
        {\leq}&\sum_{m=1}^M\mathbb{E}_t\left\Vert\boldsymbol{t}_m - \Lambda\right\Vert^2\cdot \left(\frac{\gamma_G\gamma_L}{2\left(1+\lambda\right)EMK} - \frac{3\left(N-1\right)\gamma_G^2\gamma_L^2L}{2\left(1+\lambda\right)^2K\left(M-1\right)N} \right.\allowdisplaybreaks\nonumber\\
        &\left.+ \frac{3\gamma_G^2\gamma_L^2L}{2MK\left(1+\lambda\right)^2} - \frac{\gamma_G\gamma_L}{2\left(1+\lambda\right)EKM}\right)\allowdisplaybreaks\nonumber\\
        =& \left(\frac{3\gamma_G^2\gamma_L^2L}{2MK\left(1+\lambda\right)^2} - \frac{3\left(N-1\right)\gamma_G^2\gamma_L^2L}{2\left(1+\lambda\right)^2K\left(M-1\right)N}\right)\sum_{m=1}^M\mathbb{E}_t\left\Vert\boldsymbol{t}_m - \Lambda\right\Vert^2\allowdisplaybreaks\nonumber\\
        \leq & \frac{3\gamma_G^2\gamma_L^2}{2K\left(1+\lambda\right)^2}\cdot\frac{M - N}{MN\left(M - 1\right)}\left(2\sum_{m=1}^M\mathbb{E}_t\left\Vert\boldsymbol{t}_m\right\Vert^2 + 2M\mathbb{E}_t\left\Vert\Lambda\right\Vert^2\right)\allowdisplaybreaks\nonumber\\
        =&\frac{3\gamma_G^2\gamma_L^2L}{K\left(1+\lambda\right)^2}\cdot\frac{M - N}{MN\left(M - 1\right)}\left(\sum_{m=1}^M\mathbb{E}_t\left\Vert\boldsymbol{t}_m\right\Vert^2 + M\mathbb{E}_t\left\Vert\Lambda\right\Vert^2\right)\allowdisplaybreaks\nonumber\\
        \overset{(c_{11})}{\leq}&\frac{\gamma_G\gamma_LE}{1+\lambda}\cdot\frac{3\gamma_G\gamma_L\left(M - N\right)L}{\left(1+\lambda\right)N\left(M - 1\right)}\cdot\left(20E^2KL^2\gamma_L^2\left(\sigma_L^2 + 6E\sigma_G^2\right) \right.\\
        &\left.+ \left(120E^3KL^2\gamma_L^2 +4EK\right)\left\Vert\nabla f_K\left(\theta_K^t\right)\right\Vert^2 + 4EK\sigma_G^2 + \frac{4E\left(K-1\right)^2}{K}\sigma_T^2 + EKB^2\right),
    %\end{split}
\end{align}
where $\left(c_{11}\right)$ is due to~(\ref{Eq: theorem 3 sum t_k}) and Lemma \ref{Lemma diff between t and 0}.

Then we have:
\begin{align}\label{Eqq: theorem 3 final}
    %\begin{split}
        &\mathbb{E}_t\left[f_{1:K}\left(\theta_K^{t+1}\right)\right]\leq  f_{1:K}\left(\theta_K^t\right)- \frac{1}{2}\left(\frac{\gamma_G\gamma_LE}{1+ \lambda}\right)\left(1-\frac{1}{K}\right)\left\Vert\nabla f_{1:K}\left(\theta_K^t\right)\right\Vert^2 -\frac{\gamma_G\gamma_L E}{1+ \lambda}\allowdisplaybreaks\nonumber\\
        &\cdot\left(\frac{1}{2} - 30K\gamma_L^2L^2E^2 - \frac{3\gamma_G\gamma_L\left(M - N\right)L}{\left(1+\lambda\right)N\left(M - 1\right)}\cdot\left(120E^3L^2K\gamma_L^2 + 4EK\right)\right)\left\Vert\nabla f_K\left(\theta_K^t\right)\right\Vert^2 \allowdisplaybreaks\nonumber\\
        & + \frac{\gamma_G\gamma_LE}{1+ \lambda}\left(\frac{L^2\left(K-1\right)^2\gamma_G^2\gamma_L^2E^2}{\lambda^2}+K + \frac{3\gamma_G\gamma_L\left(M - N\right)EKL}{\left(1+\lambda\right)N\left(M - 1\right)}\right)B^2\allowdisplaybreaks\nonumber\\
        & + \frac{\gamma_G\gamma_LE}{1+ \lambda}\left(\left(5\gamma_L^2KEL^2 + \frac{60\gamma_G\gamma_L^3\left(M - N\right)E^2KL^3}{\left(1+\lambda\right)N\left(M - 1\right)}\right)\left(\sigma^2_L + 6E\sigma_G^2\right) + \frac{3\gamma_G\gamma_L\sigma_L^2L}{2N\left(1+ \lambda\right)} \right.\allowdisplaybreaks\nonumber\\
        &\left.+ \frac{\left(K - 1\right)^2E}{K}\left(\frac{3\gamma_G\gamma_LL}{2\left(1+\lambda\right)} + \frac{12\gamma_G\gamma_L\left(M - N\right)L}{\left(1+\lambda\right)N\left(M - 1\right)}\right)\sigma_T^2+\frac{12\gamma_G\gamma_L\left(M - N\right)EKL\sigma_G^2}{\left(1+\lambda\right)N\left(M - 1\right)} \right) \allowdisplaybreaks\nonumber\\
        &+ \frac{\gamma_G\gamma_LE}{1+ \lambda}\cdot\left\Vert\nabla f_{1:K-1}\left(\theta_K^0\right)\right\Vert^2\allowdisplaybreaks\nonumber\\
        \overset{(c_{12})}{\leq} & f_{1:K}\left(\theta_K^t\right) - \frac{1}{2}\left(\frac{\gamma_G\gamma_LE}{1+ \lambda}\right)\left(1-\frac{1}{K}\right)\left\Vert\nabla f_{1:K}\left(\theta_K^t\right)\right\Vert^2 \allowdisplaybreaks\nonumber\\
        & + \frac{\gamma_G\gamma_LE}{1+ \lambda}\left(\frac{\gamma_L^2E^2L^2}{\lambda^2}+K + \frac{3\gamma_G\gamma_L\left(M - N\right)EKL}{\left(1+\lambda\right)N\left(M - 1\right)}\right)B^2\allowdisplaybreaks\nonumber\\
        & + \frac{\gamma_G\gamma_LE}{1+ \lambda}\left(\left(5\gamma_L^2KEL^2 + \frac{60\gamma_G\gamma_L^3\left(M - N\right)E^2KL^3}{\left(1+\lambda\right)N\left(M - 1\right)}\right)\left(\sigma^2_L + 6E\sigma_G^2\right) + \frac{3\gamma_G\gamma_L\sigma_L^2L}{2N\left(1+ \lambda\right)} \right.\allowdisplaybreaks\nonumber\\
        &\left.+ \frac{\left(K - 1\right)^2E}{K}\cdot\frac{3\gamma_G\gamma_LL}{1+\lambda}\left(\frac{1}{2} + \frac{4\left(M - N\right)}{N\left(M - 1\right)}\right)\sigma_T^2+ \frac{12\gamma_G\gamma_L\left(M - N\right)EKL\sigma_G^2}{\left(1+\lambda\right)N\left(M - 1\right)}\right)\allowdisplaybreaks\nonumber\\
        &+ \frac{\gamma_G\gamma_LE}{1+ \lambda}\cdot \left\Vert\nabla f_{1:K-1}\left(\theta_K^0\right)\right\Vert^2,
    %\end{split}
\end{align}
where $(c_{12})$ holds if $\frac{1}{2} - 30K\gamma_L^2L^2E^2 - \frac{3\gamma_G\gamma_L\left(M - N\right)L}{\left(1+\lambda\right)N\left(M - 1\right)}\cdot\left(120ME^3L^2K\gamma_L^2 + 4MEK\right)\geq 0$ and $\gamma_G\leq \frac{1}{K-1}$.

Rearranging (\ref{Eqq: theorem 3 final}) and summing it from $t=0$ to $T-1$, we have:
    \begin{equation}
        \begin{split}
            \sum_{t=0}^{T-1}&\left(1-\frac{1}{K}\right)\frac{\gamma_G\gamma_LE}{2\left(1+ \lambda\right)}\mathbb{E}\left\Vert\nabla f_{1:K}\left(\theta_K^t\right)\right\Vert^2\leq f_{1:K}\left(\theta_K^0\right) - f_{1:K}\left(\theta_K^T\right) \\
            & + \frac{\gamma_G\gamma_LE}{1+ \lambda}\left(\frac{\gamma_L^2E^2L^2}{\lambda^2}+K + \frac{3\gamma_G\gamma_L\left(M - N\right)EKL}{\left(1+\lambda\right)N\left(M - 1\right)}\right)B^2\\
            & + \frac{\gamma_G\gamma_LE}{1+ \lambda}\left(\left(5\gamma_L^2KEL^2 + \frac{60\gamma_G\gamma_L^3\left(M - N\right)E^2KL^3}{\left(1+\lambda\right)N\left(M - 1\right)}\right)\left(\sigma^2_L + 6E\sigma_G^2\right) + \frac{3\gamma_G\gamma_L\sigma_L^2L}{2N\left(1+ \lambda\right)} \right.\\
            &\left.+ \frac{\left(K - 1\right)^2E}{K}\cdot\frac{3\gamma_G\gamma_LL}{1+\lambda}\left(\frac{1}{2} + \frac{4\left(M - N\right)}{N\left(M - 1\right)}\right)\sigma_T^2+ \frac{12\gamma_G\gamma_L\left(M - N\right)EKL\sigma_G^2}{\left(1+\lambda\right)N\left(M - 1\right)}\right)\\
            &+ \frac{\gamma_G\gamma_LE}{1+ \lambda}\cdot \left\Vert\nabla f_{1:K-1}\left(\theta_K^0\right)\right\Vert^2,
        \end{split}
    \end{equation}
which implies,
\begin{equation*}
        \begin{split}
            \min_{t\in[T]}\mathbb{E}\left\Vert\nabla f_{1:K}\left(\theta_K^t\right)\right\Vert^2\leq& \frac{f_{1:K}^0-f_{1:K}^*}{\frac{1-\frac{1}{K}}{2\left(1+ \lambda\right)}E\gamma_G\gamma_LT} + \Psi,
        \end{split}
    \end{equation*}
    where 
    \begin{align}
        %\begin{split}
            \Psi =& \frac{2}{1-\frac{1}{K}}\left[\left(\frac{\gamma_L^2E^2L^2}{\lambda^2}+K + \frac{3\gamma_G\gamma_L\left(M - N\right)EKL}{\left(1+\lambda\right)N\left(M - 1\right)}\right)B^2\right.\allowdisplaybreaks\nonumber\\
            &+\left.\left(5\gamma_L^2KEL^2 + \frac{60\gamma_G\gamma_L^3\left(M - N\right)E^2KL^3}{\left(1+\lambda\right)N\left(M - 1\right)}\right)\left(\sigma^2_L + 6E\sigma_G^2\right) + \frac{3\gamma_G\gamma_L\sigma_L^2L}{2N\left(1+ \lambda\right)} \right.\allowdisplaybreaks\nonumber\\
            &\left.+ \frac{\left(K - 1\right)^2E}{K}\cdot\frac{3\gamma_G\gamma_LL}{1+\lambda}\left(\frac{1}{2} + \frac{4\left(M - N\right)}{N\left(M - 1\right)}\right)\sigma_T^2+ \frac{12\gamma_G\gamma_L\left(M - N\right)EKL\sigma_G^2}{\left(1+\lambda\right)N\left(M - 1\right)}\right. \allowdisplaybreaks\nonumber\\
            &\left.+ \left\Vert\nabla f_{1:K-1}\left(\theta_K^0\right)\right\Vert^2\right].
        %\end{split}
    \end{align}
\end{proof}

\section{Additional Experimental Details}\label{sec: comparison experiment}
\subsection{Dataset.}\label{dataset} We conduct our experiments on the popular domain shift datasets:
\begin{itemize}
    \item \textbf{Digit-10}: It contains 10 digit categories and consists of 4 datasets: \textbf{MNIST} \cite{lecun2010mnist}, \textbf{USPS} \cite{hull2002database}, \textbf{SVHN} \cite{netzer2011reading}, and \textbf{EMNIST} \cite{cohen2017emnist}. MNIST and EMNIST are handwritten style digits, but they are from different sources. SVHN is a real-world digit dataset from street view hour numbers, and the images in USPS are collected by the U.S. Postal Service. It contains 380,548 images in the training set and 78,039 images in the testing set.

    \item \textbf{VLCS} \cite{torralba2011unbiased}: It contains 5 categories and consists of 4 datasets: VOC2007, LabelMe, Caltech-101, and Sun09. The domains differ in picture style, background and are taken with different shooting parameters. There are 7,486 images in all this dataset. 
    
    \item \textbf{PACS} \cite{li2017deeper}: It contains 7 categories and the four domains are photo, art painting, cartoon, and sketch. The variation across domains lies in the image style. PACS contains a total of 9,991 images.

    \item   {\textbf{DN4IL} \cite{gowda2023dual}: It is a is a subset of the  DomainNet dataset, and it contains $100$ categories and consists of $6$ domains: clipart, infograph, painting, quickdraw, real, sketch. Since the original DomainNet dataset contains $345$ redundancy classes, the DN4IL subtract the most representative and significant classes with size of $100$ to be more effective for continual learning algorithms to train and test the capability.}
\end{itemize}

\subsection{Hyperparameters.}\label{sec: hyperparameters} Our experiments are consist of two parts: the main experiment and the ablation study experiment. 
\begin{itemize}
    \item The main experiment focuses on implementing \SPECIAL in the three datasets and comparing it with baselines in two metrics, i.e., ACC and BWT. In the main experiment, the local learning rate $\gamma_L$ is initialized as $0.001$ and decayed with $0.96$ after each $5$ epochs, the global learning rate $\gamma_G$ is initialized as $1$ at the first task and decayed as $\frac{1}{i}$ at task $i$. In addition, we consider the total number of workers to be $8$ and the number of participated clients to be $4$, and we set communication   {rounds $T = \{20/20/30/30\}$ for $\{\text{Digit-10/VLCS/PACS/DN4IL}\}$} and local epochs $E = 5$ for every dataset to guarantee the model converges in each task.
    \item In the ablation study experiment, the research focus has shifted to compare the performance of \SPECIAL in different hyperparameter setting, we keep all settings consistent with the main experiment. We conduct ablation experiments with varying parameters in Digit-10 to observe the effect: we set communication round $T = \{1, 5, 10, 20, 30\}$, local epoch $E = \{1, 3, 5, 10\}$, and Dirichlet level $\alpha =\{0.05, 0.1, 0.3, 0.5, 1.0, 100\}$. We follow manual grid search to estimate the effects of different regularization coefficients $\lambda$ in all three datasets, using a step size of $0.2$ in the range $[0, 1]$, and further narrow the step size between the two best-performing parameters to search the best coefficient. The result shows that the best coefficient for   {$\{\text{Digit-10/VLCS/PACS}\}$ is between $\{[0.2,0.4] / [0.4,0.6] / [0, 0.2]\}$, and we further estimate the performance on $\{0.25, 0.30, 0.35\} / \{0.45, 0.50, 0.55\} / \{0.05, 0.10, 0.15\}/ \{0.05, 0.10, 0.15\}$ to get the best coefficient as $\{0.25 / 0.40 / 0.05 / 0.10\}$}. 
\end{itemize}

In summary, the configuration details of all datasets is shown in Table \ref{tab: configuration details} and the hyperparameters of baselines are listed in Table \ref{tab: Hyperparameters for baselines}.

\begin{table*}[h]
\centering
\caption{Configuration details}
    \label{tab: configuration details}
\begin{tabular}{c|cccc}
\toprule
                    & Digit-10 & VLCS  & PACS  & DN4IL \\ \midrule
Task size           & 480M     & 3.78G & 180M  & 3.28G \\\
Image number        & 110k     & 7.5k  & 9.9k  & 30k   \\
Task number         & 4        & 4     & 4     & 6     \\\hline
Batch size          & 32       & 32    & 32    & 32    \\
Local learning rate & 0.001    & 0.001 & 0.001 & 0.001 \\
Clients number      & 8        & 8     & 8     & 8     \\
Epoch number        & 5        & 5     & 5     & 5     \\
Round number        & 20       & 20    & 20    & 30    \\ 
Dirichlet degree    & 0.1      & 0.3   & 0.3   & 5.0   \\ \bottomrule
\end{tabular}
\end{table*}

\begin{table*}[h]
\centering
\caption{Hyperparameters for baselines}
    \label{tab: Hyperparameters for baselines}
\begin{tabular}{c|c}
\toprule
Baselines                & Hyperparameters                                                                                          \\ \midrule
FedCIL                   & $\lambda = 0.5$, image size: $32\times 32$ for Digit-10, $224*224$ for others                            \\\hline
MFCL                     & $w_{\text{div}} = 0.5, w_{\text{BN}} = 1, w_{\text{pr}} = 0.01, w_{\text{FT}} = 0.2,w_{\text{KD}} = 0.2$ \\\hline
SR-FDIL                  & $\lambda = 0.5$                                                                                          \\\hline
pFedDIL                  & Threshold ($\lambda$): $0.8$ for Digit-10, $0.5$ for others                                              \\\hline
FLwF-2T                  & $\alpha = 0.5, \beta = 0.25$                                                                             \\\hline
\SPECIALC & $\lambda = 0.1$ for PACS, $\lambda = 0.2$ for others                                                     \\ \bottomrule
\end{tabular}
\end{table*}

\subsection{Detailed Task Specific Performance Curves}\label{sec: detailed task specific performance curves}
In Table \ref{tab: per task-Digit10}-\ref{tab: per task-DN4IL},  we demonstrate the model performance on all tasks for the all $4$ datasets. The performance of FedProx shows the same trend as FedAvg, indicating that the regularizer term in FedProx provides limited improvement in alleviating catastrophic forgetting. 

\begin{table*}[]
\centering
\caption{The worst difference between the earlier-task and the final task}
    \label{tab: worst earlier-task}
\begin{tabular}{c|cccc}
\toprule
Method                   & Digit-10                     & VLCS                         & PACS                         & DN4IL                        \\ 
\midrule
FedAvg                   & $-54.73\pm2.32$ & $-9.64\pm5.83$  & $-30.37\pm3.93$ & $-78.81\pm0.80$ \\
FedProx                   & $-52.60\pm1.49$ & $-12.22\pm4.81$  & $-26.38\pm2.44$ & $-30.49\pm1.27$ \\\hline
FedCIL                   & $-41.14\pm7.39$ & $-8.54\pm6.94$  & $-16.35\pm3.80$ & $-38.38\pm0.18$ \\
MFCL                     & $-26.31\pm3.15$ & $-9.69\pm4.83$  & $-20.73\pm3.90$ & $-37.65\pm3.28$ \\
SR-FDIL                  & $-52.76\pm4.71$ & $-10.47\pm5.06$ & $-29.41\pm2.72$ & $-58.32\pm0.44$ \\ \hline
pFedDIL                  & $-33.86\pm5.47$ & $-12.13\pm3.74$ & $-27.93\pm0.68$ & $-36.76\pm1.01$ \\
FLwF-2T                  & $-38.60\pm4.37$  & $-7.59\pm2.83$  & $-26.29\pm8.47$ & $-36.87\pm0.05$ \\
\SPECIALC & $-28.06\pm1.77$ & $-9.64\pm1.47$  & $-25.10\pm2.13$ & $-25.87\pm0.35$ \\
\textbf{\SPECIAL (ours)}  & $-34.37\pm6.65$ & $-5.84\pm5.26$  & $-20.38\pm3.83$ & $-37.46\pm1.28$ \\
\bottomrule
\end{tabular}
\end{table*}

\begin{table*}[!h]
\centering
\caption{
   Task specific performance curves on Digit-10 ($\alpha = 0.1$)
}
    \label{tab: per task-Digit10}
\begin{tabular}{c|ccccc}
\toprule
Method                   & USPS & SVHN & EMNIST   & \multicolumn{1}{c|}{MNIST} & ACC    \\ \midrule
FedAvg & 0.8034 & 0.2839 & 0.4106 & \multicolumn{1}{c|}{0.8586} & 0.5869\\
FedProx & 0.8064 & 0.3069& 0.3836& \multicolumn{1}{c|}{0.8626}& 0.5869\\\hline
FedCIL                   & 0.6795     & 0.1952  & 0.3057  & \multicolumn{1}{c|}{0.8576}  & 0.5095 \\
MFCL                     & 0.8266     & 0.3446  & 0.4142  & \multicolumn{1}{c|}{0.8607}  & 0.6115 \\
SR-FDIL                  & 0.8181     & \textbf{0.2559}  & 0.3777  & \multicolumn{1}{c|}{0.9335}  & 0.6163 \\ \hline
pFedDIL                  & 0.7816     & 0.1552  & 0.1608  & \multicolumn{1}{c|}{0.7491}  & 0.4617 \\
FLwF-2T                  & 0.6630     & 0.1570  & 0.2321  & \multicolumn{1}{c|}{0.8485}  & 0.4739 \\
\SPECIALC & 0.7873     & 0.1897  & 0.2605 & \multicolumn{1}{c|}{0.7868}  & 0.5061 \\
\textbf{\SPECIAL (ours)}  & \textbf{0.8618}     & 0.2457  & \textbf{0.4797}  & \multicolumn{1}{c|}{\textbf{0.8977}}  & \textbf{0.6212} \\ \bottomrule
\end{tabular}
\end{table*}

\begin{table*}[!h]
\centering
\caption{Task specific performance curves on VLCS ($\alpha = 0.3$)}
    \label{tab: per task-VLCS}
\begin{tabular}{c|ccccc}
\toprule
Method                   & Caltech101 & LabelMe & SUN09  & \multicolumn{1}{c|}{VOC2007} & ACC    \\ \midrule
FedAvg & 0.6126 & 0.4938 & 0.4472 & \multicolumn{1}{c|}{0.4953}& 0.5122\\
FedProx & 0.5679 & 0.4646 & 0.4215 & \multicolumn{1}{c|}{0.4755}& 0.4824\\ \hline
FedCIL                   & 0.5840     & 0.4887  & 0.4358 & \multicolumn{1}{c|}{0.4449}  & 0.4884 \\
MFCL                     & 0.5214     & 0.3932  & 0.3805 & \multicolumn{1}{c|}{0.4208}  & 0.4289 \\
SR-FDIL                  & 0.6192     & 0.4681  & 0.4305 & \multicolumn{1}{c|}{0.4754}  & 0.4938 \\ \hline
pFedDIL                  & 0.6971     & \textbf{0.5530}  & 0.4215 & \multicolumn{1}{c|}{0.4260}  & 0.5244 \\
FLwF-2T                  & 0.6933     & 0.5229  & 0.4532 & \multicolumn{1}{c|}{0.4604}  & 0.5324 \\
\SPECIALC & 0.6477     & 0.4721  & 0.4447 & \multicolumn{1}{c|}{\textbf{0.4822}}  & 0.5117 \\
\textbf{\SPECIAL (ours)}  & \textbf{0.7056}     & 0.5385  & \textbf{0.4813} & \multicolumn{1}{c|}{0.4715}  & \textbf{0.5492} \\ \bottomrule
\end{tabular}
\end{table*}

\begin{table*}[!h]
\centering
\caption{Task specific performance curves on PACS ($\alpha = 0.3$)}
    \label{tab: per task-PACS}
\begin{tabular}{c|ccccc}
\toprule
Method                   & Sketch & Cartoon & Photo  & \multicolumn{1}{c|}{Art Painting} & ACC    \\ \midrule
FedAvg & 0.2986 & 0.3293 & 0.4480 & \multicolumn{1}{c|}{0.4357}& 0.3779\\
FedProx & 0.3015 & 0.3370 & 0.4520 & \multicolumn{1}{c|}{0.4292}& 0.3799\\ \hline
FedCIL                   & 0.2372     & 0.3468  & 0.4539 & \multicolumn{1}{c|}{0.3665}  & 0.3511 \\
MFCL                     & 0.2345     & 0.2576  & 0.4474 & \multicolumn{1}{c|}{0.4476}  & 0.3468 \\
SR-FDIL                  & \textbf{0.3893}     & 0.3935  & 0.5348 & \multicolumn{1}{c|}{0.4791}  & 0.4492 \\\hline
pFedDIL                  & 0.3719     & 0.3878  & 0.4282 & \multicolumn{1}{c|}{0.4024}  & 0.3976 \\
FLwF-2T                  & 0.3401     & 0.3646  & \textbf{0.5718} & \multicolumn{1}{c|}{0.4452}  & 0.4304 \\
\SPECIALC & 0.3624     & 0.4130  & 0.5114 & \multicolumn{1}{c|}{0.4451}  & 0.4330 \\
\textbf{\SPECIAL (ours)}  & 0.3761     & \textbf{0.4255}  & 0.5253 & \multicolumn{1}{c|}{\textbf{0.4848}}  & \textbf{0.4529} \\ \bottomrule
\end{tabular}
\end{table*}

\begin{table*}[!h]
\centering
\caption{Task specific performance curves on DN4IL ($\alpha = 5$)}
    \label{tab: per task-DN4IL}
\begin{tabular}{c|ccccllc}
\toprule
Method                   & Sketch & Infograph & Painting  & Quickdraw &    Real    & \multicolumn{1}{l|}{Clipart}       & ACC    \\ \midrule
FedAvg & 0.1158 & 0.0480 & 0.0611 & 0.0799 & 0.3579  & \multicolumn{1}{c|}{0.3258}& 0.1647\\ \hline
FedCIL                   & 0.0888     & 0.0375  & 0.0831 & 0.1305  & 0.2511 & \multicolumn{1}{l|}{0.2663} & 0.1429 \\
MFCL                     & 0.1688     & 0.0588  & 0.1296 & 0.1045  & 0.3053 & \multicolumn{1}{l|}{0.3729} & 0.1900 \\
SR-FDIL                  & 0.1609     & 0.0108  & 0.1110 & 0.1404  & \textbf{0.4655} & \multicolumn{1}{l|}{0.3956} & 0.2140 \\ \hline
pFedDIL                  & \textbf{0.2902}     & 0.0379  & 0.0818 & 0.0625  & 0.1165 & \multicolumn{1}{l|}{0.1594} & 0.1247 \\
FLwF-2T                  & 0.2450     & 0.0377  & 0.1367 & \textbf{0.2211}  & 0.3202 & \multicolumn{1}{l|}{\textbf{0.4319}} & 0.2321 \\
\SPECIALC & 0.2775     & 0.0622  & 0.1364 & 0.1188  & 0.2465 & \multicolumn{1}{l|}{0.3412} & 0.1988 \\
\textbf{\SPECIAL (ours)}  & 0.2298     & \textbf{0.0681}  & \textbf{0.1754} & \textbf{0.1724}  & 0.4128 & \multicolumn{1}{l|}{0.3997} & \textbf{0.2430} \\ \bottomrule
\end{tabular}
\end{table*}

 \subsection{Additional Ablation Study}
 
Figure~\ref{Fig: ablation_study} explores how communication rounds ($T$), local epochs ($E$), and data-heterogeneity level ($\alpha$) influence performance in Digit-10, thereby testing the claims of Section ``Theoretical Analysis''. (i) \textit{Communication rounds $T$.} As T grows, ACC rises monotonically—more synchronous updates allow better optimization—while BWT exhibits a U-shape: it is weakest when $T$ is very small (the model leaves a task before converging), becomes increasingly negative as partial convergence accentuates forgetting, and finally improves once $T$ is large enough for the proximal anchor to propagate useful gradients back to earlier tasks.  This behavior matches the trade-off predicted by Theorem~\ref{Theorem 1}. (ii) \textit{Local epochs $E$.} Varying $E$ produces a pattern similar to that for $T$: ACC increases until large $E$ values yield diminishing returns, whereas BWT gradually declines.  Longer local training stretches the distance between client updates and the anchor, slowing down the convergence rate in Corollary~\ref{Corollary 1} and thereby reducing backward transfer.  (iii) \textit{Dirichlet parameter $\alpha.$} Smaller $\alpha$ (stronger non-IID partitions) degrades ACC, reflecting the additive variance terms $\sigma_L^2$ and $\sigma_G^2$ in Theorem~\ref{Theorem 2}.  With highly heterogeneous clients, each round introduces greater noise, and more communication is required to attain the same accuracy. Overall, the empirical trends align closely with the theoretical rates: ACC improves with additional communication or computation, while BWT is sensitive to the balance between local plasticity and the stability enforced by the server-side anchor.

\begin{figure}[!t]
   \centering
   \includegraphics[width=0.65\linewidth]{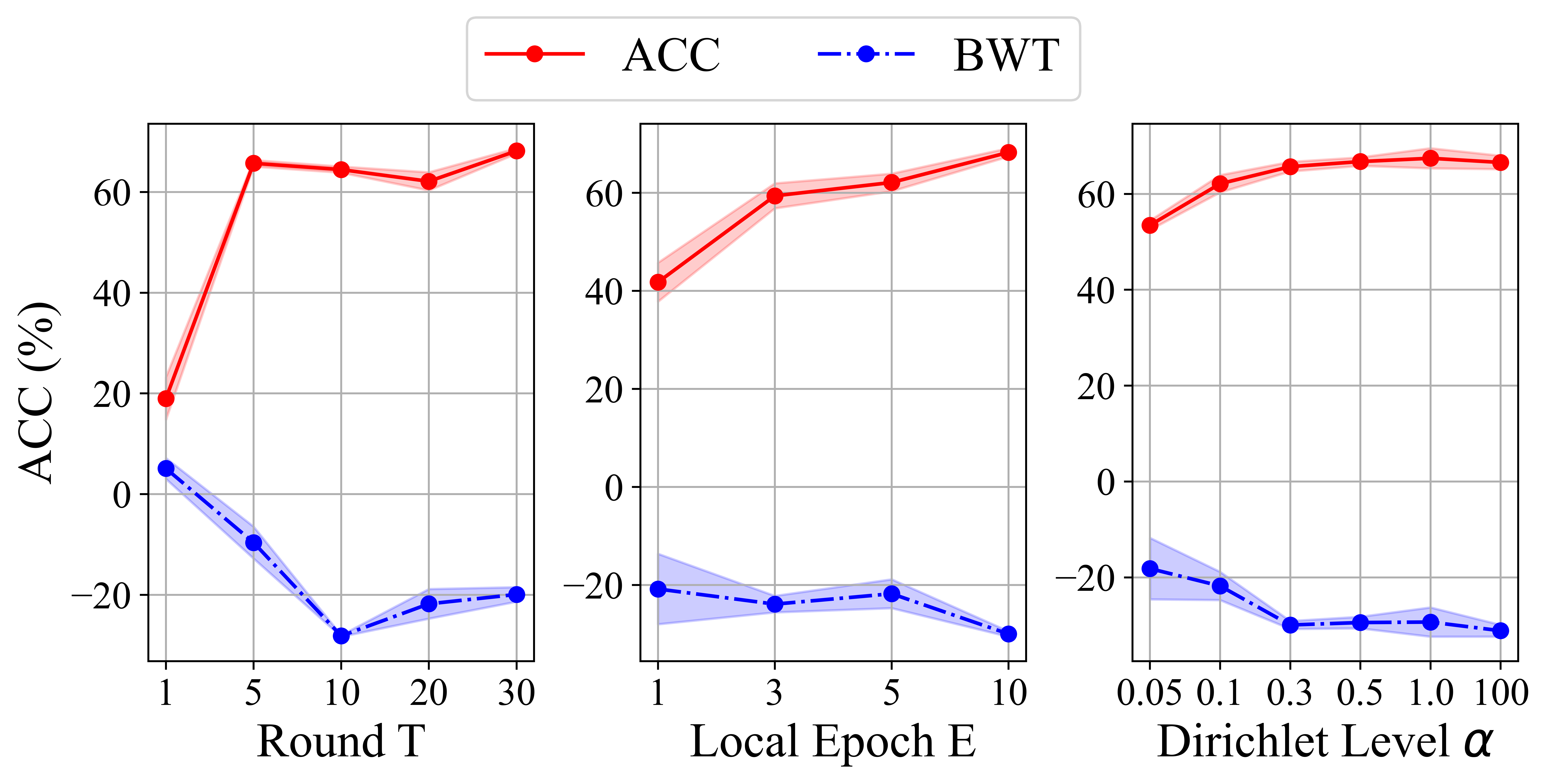} 
   \vspace{-0.1in}
   \caption{Visualized performance w.r.t $T$, $E$, $\alpha$.
   }
   \label{Fig: ablation_study}
\end{figure}

\subsection{Experiment Details of ``Client- vs. Server-side Proximal Terms''}
In Section\ref{sec: C vs S}, we compare the performance of models with client-side and server-side proximal term to explore the effect of different locations of the proximal term, and the model with client-side proximal term is denoted as \SPECIALC in the following sections. The observation is obtained from the model performance in the training process on the second task of Digit-10.

After obtaining the parameter result of each communication round, we estimate the result on three types of test datasets while training on task $2$:
\begin{itemize}
    \item the first task is recognized as the previous task, so the test dataset of task $1$ is used to estimate the memorization ability of two models;
    \item the second task is recognized as the current task, so the test dataset of task $2$ is used to measure the ability of learning new information with different settings;
    \item the combination of the two test datasets is used to estimate the overall performance.
\end{itemize}

The difference in the term location leads to the distinct update rules between \SPECIAL and \SPECIALC. In the task $i, i\geq 2$, the local update rule for client $m$ in each epoch is
\begin{equation*}
    \theta_{i,m}^{t,m} = \text{prox}_{\lambda}\left(\theta_{i,m}^{t,e} - \gamma_G g_{i,m}^{t,e}\right),
\end{equation*}
where $\text{prox}_\lambda\left(x\right) := \arg\min_{\theta\in\mathbb{R}^d}\{\frac{1}{2}\left\Vert \theta - x\right\Vert^2 + \lambda\left\Vert\theta - \theta_{i-1}\right\Vert^2\}$ is the proximal mapping \cite{xiao2014proximal}. After collecting the difference vectors, the server of \SPECIALC averages them without incorporating the previous global parameters. The complete algorithm of \SPECIALC is summarized in Algorithm \ref{Algorithm special-c}. In addition, we observe a similar $\lambda$-controlled stability-plasticity trade-off in \SPECIALC. The results of the $\lambda$-sensitivity of \SPECIALC on Digit-10 is shown in Table \ref{tab: sensitivity of specialc}. Specifically, larger $\lambda$ values encourage the model to preserve previous knowledge more aggressively, thereby improving stability, while smaller $\lambda$ values allow the model to adapt more flexibly to new tasks, leading to stronger plasticity. The best performance is achieved when these two effects are properly balanced, which is consistent with the observation in \SPECIAL.

\begin{table}[!h]
\centering
\caption{The $\lambda$-sensitivity of \SPECIALC on Digit-10 ($\alpha = 0.1$).}
\label{tab: sensitivity of specialc}
\begin{tabular}{lcccccc}
\toprule
$\lambda$ & $0$ & $0.2$& $0.4$& $0.6$& $0.8$& $1.0$ \\
\midrule
ACC (\%) $\uparrow$ & $47.89$ & $50.26$ & $46.73$ & $45.37$ & $42.77$ & $38.01$ \\
BWT (\%) $\uparrow$ & $-10.45$ & $-8.81$ & $-6.22$ & $-2.54$ & $-1.34$ & $2.56$ \\
\bottomrule
\end{tabular}
\end{table}

\begin{algorithm}[tb]
\caption{The \SPECIALC Algorithm}
\label{Algorithm special-c}
\begin{algorithmic}[1]
\STATE Initialize $\theta_0$\\
\FOR{task $i = 1$ to $K$}
    \STATE Set task initial point: $\theta_i^0 = \theta_{i-1}$
    \FOR{round $t = 0$ to $T-1$}
        \STATE The server randomly selects a set $\mathcal{S}_i^t$ of $N$ clients
        \FOR{client $m \in\mathcal{S}_i^t$ \textbf{in parallel}}
            \STATE Download from server: $\theta_{i,m}^{t,0} = \theta_i^t$
            \FOR{epoch $e = 0$ to $E-1$}
                \STATE Calculate: $g_{i,m}^{t,e} = \nabla f_{i,m}\left(\theta_{i,m}^{t,e}, \xi_{i,m}^{t,e}\right)$
                \STATE Local update: $\theta_{i,m}^{t,e+1} = \begin{cases}\theta_{i,m}^{t,e} - \gamma_L g_{i,m}^{t,e}, &i=1\\
                \theta_{i,m}^{t,m} = \text{prox}_{\lambda}\left(\theta_{i,m}^{t,e} - \gamma_L g_{i,e}^{t,e}\right), &i\geq 2\end{cases}$
            \ENDFOR
            \STATE $\Delta_{i,m}^t = \theta_{i,m}^{t,E} - \theta_{i,m}^{t,0} = -\gamma_L \sum_{e=0}^{E-1} g_{i,m}^{t,e}$
            \STATE Send $\Delta_{i,m}^t$ to server
        \ENDFOR
        \STATE Server receives all $\Delta_{i,m}^t$
        \STATE Compute: $\Delta_i^t = \frac{1}{M} \sum_{m=1}^M \Delta_{i,m}^t$
        \STATE Server update: $\theta_i^{t+1} = \theta_i^t + \gamma_G \Delta_i^t$
    \ENDFOR
    \STATE Set final model of task $i$: $\theta_i = \theta_i^T$
\ENDFOR
\end{algorithmic}
\end{algorithm}

\subsection{The generalizability to class-incremental settings}
Although the method and theoretical analysis of \SPECIAL are developed under the domain-incremental setting, we further explore its feasibility in the class-incremental scenario. Compared with domain-incremental learning, class-incremental learning requires certain classifier layers to be dynamically expanded and adjusted according to the number of observed classes, so that the classification objective can evolve as the number of tasks increases. Empirically, we split CIFAR-100 into $5$ tasks with $20$ classes per task under Dirichlet parameter $\alpha = 1$. The result is shown in Table \ref{tab: class-incremental}, and our \SPECIAL achieves the best ACC and BWT among all compared methods.

\begin{table}[!h]
    \centering
    \caption{The results on Split CIFAR-100 under the class-incremental setting.}
\label{tab: class-incremental}
    \begin{tabular}{lcc}
        \toprule
        Method & ACC (\%) $\uparrow$ & BWT (\%) $\uparrow$ \\
        \midrule
         FedAvg   & $37.92$ & $-29.64$ \\
         FedProx  & $37.32$ & $-27.51$ \\
         FedCIL   & $31.73$ & $-21.19$ \\
         MFCL     & $28.37$ & $-27.64$ \\
         SR-FDIL  & $35.43$ & $-32.28$ \\
         pFedDIL  & $31.72$ & $-12.61$ \\
         FLwF-2T  & $38.65$ & $-23.76$ \\
         \SPECIALC & $34.60$ & $-18.68$ \\
         \textbf{\SPECIAL\ (ours)} & $\mathbf{40.68}$ & $\mathbf{-9.00}$ \\
        \bottomrule
    \end{tabular}
\end{table}

\subsection{Additional numerical efficiency comparisons}\label{sec:additional numerical efficiency}
We provide the direct numerical efficiency evidence on Digit-10 dataset in Table \ref{tab: numerical efficiency comparisons}. \emph{Memory overhead (Mem)} is measured by the total size of parameters involved in each client, including the trainable parameters and any dataset buffer. \emph{Peak memory usage (Peak)} is measured by the maximum GPU memory consumption observed during training.

\begin{table*}[!h]
\centering
\caption{The numerical efficiency comparisons on Digit-10}
\label{tab: numerical efficiency comparisons}
{
\begin{tabular}{lcc}
\hline
 &
memory overhead (Mb) &
peak memory usage (Gb)\\

\hline

FedAvg
        & $\textbf{42.69}$ & $\textbf{1.54}$
        \\

FedProx
        & $\textbf{42.69}$ & $1.73$
        \\ \hline

FedCIL
        & $59.05$ & $1.93$
        \\

MFCL
        & $117.8$ & $3.28$
        \\

SR-FDIL
        & $55.82$ & $2.29$
        \\ \hline
        
pFedDIL
        & $130.15$ & $3.65$
        \\
        
FLwF-2T
        & $85.38$ & $1.59$
        \\

\SPECIALC
        & $85.38$ & $1.58$
        \\

\textbf{\SPECIAL (ours)}
        & $\textbf{42.69}$ & $1.56$
       \\

\hline
\end{tabular}}
\end{table*}

\subsection{Detailed results on different partial participation rates}
\subsubsection{$N = 2$}

\begin{table*}[!h]
\centering
\caption{Main FDIL results under partial participation with $N=2$.}
\resizebox{1\textwidth}{!}{
\begin{tabular}{clcccccccc}
\hline
\multirow{2}{*}{Type} & \multirow{2}{*}{Method} &
\multicolumn{2}{c}{Digit-10} &
\multicolumn{2}{c}{VLCS} &
\multicolumn{2}{c}{PACS} &
\multicolumn{2}{c}{DN4IL} \\

\cmidrule(lr){3-4}
\cmidrule(lr){5-6}
\cmidrule(lr){7-8}
\cmidrule(lr){9-10}

& & ACC (\%) $\uparrow$ & BWT (\%) $\uparrow$
& ACC (\%) $\uparrow$ & BWT (\%) $\uparrow$
& ACC (\%) $\uparrow$ & BWT (\%) $\uparrow$
& ACC (\%) $\uparrow$ & BWT (\%) $\uparrow$ \\

\hline

\multirow{2}{*}{Standard-FL}
& FedAvg
& \heatmaplevel{7}$54.68\pm1.95$ & \heatmaplevel{3}$-32.72\pm1.76$
& \heatmaplevel{1}$34.58\pm4.26$ & \heatmaplevel{1}$-16.58\pm7.02$
& \heatmaplevel{8}$32.58\pm2.49$ & \heatmaplevel{4}$-17.15\pm7.17$
& \heatmaplevel{5}$18.07\pm0.29$ & \heatmaplevel{5}$-12.78\pm0.53$ \\

& FedProx
& \heatmaplevel{6}$53.12\pm0.60$ & \heatmaplevel{1}$-35.88\pm0.76$
& \heatmaplevel{7}$47.97\pm0.93$ & \heatmaplevel{2}$-34.38\pm3.44$
& \heatmaplevel{5}$30.04\pm1.29$ & \heatmaplevel{1}$-26.27\pm2.93$
& \heatmaplevel{8}$19.52\pm0.16$ & \heatmaplevel{6}$-12.32\pm0.46$ \\

\hline

\multirow{3}{*}{\shortstack{Replay- \\ based}}

& FedCIL
& \heatmaplevel{1}$22.94\pm1.74$ & \heatmaplevel{5}$-26.14\pm4.40$
& \heatmaplevel{2}$38.40\pm2.62$ & \heatmaplevel{8}$-1.45\pm3.70$
& \heatmaplevel{3}$27.89\pm2.65$ & \heatmaplevel{7}$-13.33\pm4.72$
& \heatmaplevel{2}$13.18\pm0.51$ & \heatmaplevel{9}$-\textbf{8.30}\pm\textbf{0.43}$ \\

& MFCL
& \heatmaplevel{4}$46.68\pm3.98$ & \heatmaplevel{6}$-19.35\pm4.79$
& \heatmaplevel{3}$43.15\pm2.11$ & \heatmaplevel{7}$-3.25\pm1.01$
& \heatmaplevel{2}$23.31\pm1.97$ & \heatmaplevel{8}$-10.58\pm5.08$
& \heatmaplevel{4}$17.47\pm0.63$ & \heatmaplevel{7}$-11.20\pm0.76$ \\

& SR-FDIL
& \heatmaplevel{8}$56.50\pm3.73$ & \heatmaplevel{4}$-27.79\pm3.79$
& \heatmaplevel{6}$45.64\pm3.30$ & \heatmaplevel{4}$-14.31\pm5.22$
& \heatmaplevel{6}$30.94\pm3.17$ & \heatmaplevel{6}$-14.33\pm4.49$
& \heatmaplevel{7}$19.04\pm0.13$ & \heatmaplevel{4}$-12.63\pm0.71$ \\

\hline

\multirow{4}{*}{\shortstack{Replay- \\ free}}

& pFedDIL
& \heatmaplevel{3}$42.15\pm0.71$ & \heatmaplevel{9}$-\textbf{14.72}\pm\textbf{1.52}$
& \heatmaplevel{4}$43.84\pm2.82$ & \heatmaplevel{6}$-3.85\pm3.56$
& \heatmaplevel{1}$21.23\pm1.22$ & \heatmaplevel{3}$-19.16\pm0.92$
& \heatmaplevel{1}$12.41\pm0.23$ & \heatmaplevel{8}$-11.85\pm0.05$ \\

& FLwF-2T
& \heatmaplevel{2}$38.27\pm1.70$ & \heatmaplevel{7}$-17.51\pm1.28$
& \heatmaplevel{8}$48.63\pm3.68$ & \heatmaplevel{5}$-7.98\pm2.51$
& \heatmaplevel{4}$25.15\pm4.44$ & \heatmaplevel{9}$-\textbf{10.14}\pm\textbf{2.45}$
& \heatmaplevel{9}$19.72\pm0.18$ & \heatmaplevel{2}$-14.99\pm0.45$ \\

& \SPECIALC
& \heatmaplevel{5}$48.86\pm2.43$ & \heatmaplevel{8}$-16.18\pm3.66$
& \heatmaplevel{5}$43.93\pm4.99$ & \heatmaplevel{3}$-14.94\pm3.21$
& \heatmaplevel{7}$31.39\pm1.56$ & \heatmaplevel{5}$-15.87\pm4.50$
& \heatmaplevel{6}$18.18\pm0.27$ & \heatmaplevel{1}$-15.93\pm0.60$ \\

& \textbf{\SPECIAL (ours)}
& \heatmaplevel{9}$\textbf{57.57}\pm\textbf{4.08}$ & \heatmaplevel{8}$-16.13\pm2.60$
& \heatmaplevel{9}$\textbf{50.18}\pm\textbf{4.06}$ & \heatmaplevel{9}$-\textbf{0.09}\pm\textbf{0.81}$
& \heatmaplevel{9}$\textbf{33.50}\pm\textbf{2.41}$ & \heatmaplevel{8}$-12.83\pm2.84$
& \heatmaplevel{9}$\textbf{21.20}\pm\textbf{0.27}$ & \heatmaplevel{8}$-10.74\pm0.58$ \\

\hline
\end{tabular}}
\end{table*}

\begin{table*}[!h]
\centering
\caption{
   Task specific performance curves on Digit-10 with $N=2$ ($\alpha = 0.1$).
}
\begin{tabular}{l|ccccc}
\toprule
Method                   & USPS & SVHN & EMNIST   & \multicolumn{1}{c|}{MNIST} & ACC    \\ \midrule

FedAvg
& 0.8219 & 0.2269 & 0.2325 & \multicolumn{1}{c|}{\textbf{0.9059}} & 0.5468\\

FedProx
& \textbf{0.8283} & 0.2457 & 0.1618 & \multicolumn{1}{c|}{{0.8890}} & {0.5312}\\
\hline

FedCIL
& 0.3069 & 0.1083 & 0.1449 & \multicolumn{1}{c|}{0.3573} & 0.2293 \\

MFCL
& 0.6095 & \textbf{0.4552} & 0.2913 & \multicolumn{1}{c|}{0.5113} & 0.4668 \\

SR-FDIL
& 0.8065 & 0.2404 & 0.3121 & \multicolumn{1}{c|}{0.9011} & 0.5650 \\
\hline

pFedDIL
& 0.7579 & 0.1406 & 0.1322 & \multicolumn{1}{c|}{0.6552} & 0.4215 \\

FLwF-2T
& 0.6090 & 0.1243 & 0.1404 & \multicolumn{1}{c|}{0.6572} & 0.3827 \\

\SPECIALC
& 0.7656 & 0.2010 & 0.2740 & \multicolumn{1}{c|}{0.7137} & 0.4886 \\

\textbf{\SPECIAL (ours)}
& 0.8158 & 0.2198 & \textbf{0.4421} & \multicolumn{1}{c|}{0.8252} & \textbf{0.5757} \\

\bottomrule
\end{tabular}
\end{table*}

\begin{table*}[!h]
\centering
\caption{
   Task specific performance curves on VLCS with $N=2$ ($\alpha = 0.3$).
}
\begin{tabular}{l|cccc|c}
\toprule
Method                   & Caltech101 & LabelMe & SUN09  & {VOC2007} & ACC    \\ \midrule

FedAvg
& 0.2441 & 0.3037 & 0.4252 & {0.4102} & 0.3458\\

FedProx
& 0.3762 & 0.3713 & 0.4013 & \textbf{0.7699} & 0.4797\\
\hline

FedCIL
& 0.3884 & 0.4128 & 0.3756 & {0.3595} & 0.3841 \\

MFCL
& 0.4042 & 0.3885 & 0.4638 & {0.4692} & 0.4314 \\

SR-FDIL
& 0.3526 & 0.4157 & 0.5175 & {0.5398} & 0.4564 \\
\hline

pFedDIL
& 0.5852 & 0.3734 & 0.4515 & {0.3434} & 0.4384 \\

FLwF-2T
& 0.6673 & 0.4578 & 0.4074 & {0.4126} & 0.4863 \\

\SPECIALC
& 0.3622 & 0.4028 & \textbf{0.5272} & {0.4651} & 0.4393 \\

\textbf{\SPECIAL (ours)}
& \textbf{0.7066} & \textbf{0.4837} & 0.4386 & {0.3784} & \textbf{0.5018} \\

\bottomrule
\end{tabular}
\end{table*}

\begin{table*}[!h]
\centering
\caption{
   Task specific performance curves on PACS with $N=2$ ($\alpha = 0.3$).
}
\begin{tabular}{l|cccc|c}
\toprule
Method & Sketch & Cartoon & Photo  & \multicolumn{1}{c|}{Art Painting} & ACC \\ \midrule

FedAvg
& 0.2448 & 0.2893 & 0.3994 & {0.3695} & 0.3257 \\

FedProx
& 0.2993 & 0.2996 & 0.3013 & {0.3014} & 0.3004 \\
\hline

FedCIL
& 0.2095 & 0.2385 & 0.3968 & {0.2708} & 0.2789 \\

MFCL
& 0.2147 & 0.2529 & 0.2227 & {0.2423} & 0.2331 \\

SR-FDIL
& 0.2051 & 0.2928 & \textbf{0.4045} & {0.3351} & 0.3094 \\
\hline

pFedDIL
& 0.2470 & 0.1829 & 0.2078 & {0.2116} & 0.2123 \\

FLwF-2T
& 0.2390 & 0.2328 & 0.2699 & {0.2643} & 0.2515 \\

\SPECIALC
& 0.2648 & 0.3014 & 0.3817 & {0.3078} & 0.3139 \\

\textbf{\SPECIAL (ours)}
& \textbf{0.3095} & \textbf{0.3324} & 0.3657 & \textbf{0.3322} & \textbf{0.3349} \\

\bottomrule
\end{tabular}
\end{table*}

% & USPS & SVHN & EMNIST   &
% & Caltech101 & LabelMe & SUN09  &
% & Sketch & Cartoon & Photo  & \multicolumn{1}{c|}{Art Painting} &

\begin{table*}[!h]
\centering
\caption{
   Task specific performance curves on DN4IL with $N=2$ ($\alpha = 5$).
}
\begin{tabular}{l|cccccc|c}
\toprule
Method & Sketch & Infograph & Painting  & Quickdraw &    Real    & \multicolumn{1}{l|}{Clipart}       & ACC \\ \midrule

FedAvg
& 0.1754 & 0.0161 & 0.1160 & 0.1075 & 0.2419 & {0.4271} & 0.1807 \\

FedProx
& 0.1855 & 0.0380 & 0.1329 & 0.1237 & 0.2585 & \textbf{0.4329} & 0.1953 \\
\hline

FedCIL
& 0.0812 & 0.0428 & 0.0871 & 0.1096 & 0.1795 & {0.2906} & 0.1318 \\

MFCL
& 0.1627 & 0.0607 & 0.1221 & 0.0972 & 0.2419 & {0.3639} & 0.1747 \\

SR-FDIL
& 0.1873 & 0.0300 & 0.1303 & 0.1121 & 0.2540 & {0.4285} & 0.1904 \\
\hline

pFedDIL
& \textbf{0.2519} & 0.0435 & 0.0938 & 0.0689 & 0.1249 & {0.1620} & 0.1242 \\

FLwF-2T
& 0.1840 & 0.0300 & 0.1275 & \textbf{0.1743} & 0.2630 & {0.4046} & 0.1972 \\

\SPECIALC
& 0.2010 & 0.0575 & 0.1378 & 0.1126 & 0.2363 & {0.3455} & 0.1818 \\

\textbf{\SPECIAL (ours)}
& 0.2138 & \textbf{0.0649} & \textbf{0.1620} & 0.1404 & \textbf{0.2858} & {0.4048} & \textbf{0.2119} \\

\bottomrule
\end{tabular}
\end{table*}

\subsubsection{$N = 6$}
\begin{table*}[!h]
    \centering
    \caption{Main FDIL results under partial participation with $N=6$.}
    \resizebox{1\textwidth}{!}{%
    \begin{tabular}{clcccccccc}
        \hline
        \multirow{2}{*}{Type} & \multirow{2}{*}{Method}
        & \multicolumn{2}{c}{Digit-10}
        & \multicolumn{2}{c}{VLCS}
        & \multicolumn{2}{c}{PACS}
        & \multicolumn{2}{c}{DN4IL} \\
        \cmidrule(lr){3-4} \cmidrule(lr){5-6} \cmidrule(lr){7-8} \cmidrule(lr){9-10}
        &  & ACC (\%) $\uparrow$ & BWT (\%) $\uparrow$
           & ACC (\%) $\uparrow$ & BWT (\%) $\uparrow$
           & ACC (\%) $\uparrow$ & BWT (\%) $\uparrow$
           & ACC (\%) $\uparrow$ & BWT (\%) $\uparrow$ \\
        \hline

                \multirow{2}{*}{Standard-FL}
        & FedAvg
        & \heatmaplevel{8}$59.87\pm1.70$ & \heatmaplevel{2}$-32.48\pm1.67$
        & \heatmaplevel{3}$44.16\pm2.28$ & \heatmaplevel{4}$-17.08\pm3.74$
        & \heatmaplevel{7}$38.82\pm1.90$ & \heatmaplevel{3}$-22.61\pm3.39$
        & \heatmaplevel{7}$20.90\pm0.24$ & \heatmaplevel{5}$-12.00\pm0.13$ \\

        & FedProx
        & \heatmaplevel{6}$58.96\pm1.37$ & \heatmaplevel{1}$-32.76\pm2.36$
        & \heatmaplevel{5}$47.49\pm1.14$ & \heatmaplevel{1}$-34.49\pm1.32$
        & \heatmaplevel{8}$39.98\pm0.97$ & \heatmaplevel{1}$-27.67\pm4.06$
        & \heatmaplevel{5}$20.52\pm0.22$ & \heatmaplevel{3}$-12.42\pm0.36$ \\
        \hline

        \multirow{3}{*}{\shortstack{Replay-\\based}}
        & FedCIL
        & \heatmaplevel{3}$50.48\pm1.31$ & \heatmaplevel{5}$-28.42\pm1.97$
        & \heatmaplevel{6}$51.70\pm1.66$ & \heatmaplevel{8}$-1.99\pm1.45$
        & \heatmaplevel{4}$32.63\pm2.96$ & \heatmaplevel{7}$-12.56\pm2.041$
        & \heatmaplevel{2}$11.82\pm0.53$ & \heatmaplevel{1}$-13.96\pm0.25$ \\

        & MFCL
        & \heatmaplevel{5}$58.14\pm4.23$ & \heatmaplevel{6}$-25.14\pm2.23$
        & \heatmaplevel{1}$29.24\pm3.37$ & \heatmaplevel{5}$-17.52\pm2.45$
        & \heatmaplevel{5}$35.93\pm1.67$ & \heatmaplevel{5}$-16.41\pm6.99$
        & \heatmaplevel{6}$20.59\pm0.36$ & \heatmaplevel{7}$-11.56\pm1.97$ \\

        & SR-FDIL
        & \heatmaplevel{7}$59.37\pm0.43$ & \heatmaplevel{4}$-28.96\pm2.93$
        & \heatmaplevel{4}$45.82\pm2.61$ & \heatmaplevel{3}$-22.78\pm6.97$
        & \heatmaplevel{4}$35.78\pm0.60$ & \heatmaplevel{2}$-23.09\pm2.61$
        & \heatmaplevel{8}$21.11\pm1.09$ & \heatmaplevel{4}$-12.26\pm2.21$ \\
        \hline

        \multirow{4}{*}{\shortstack{Replay-\\free}}
        & pFedDIL
        & \heatmaplevel{1}$42.77\pm3.02$ & \heatmaplevel{9}$-\textbf{9.35}\pm\textbf{0.42}$
        & \heatmaplevel{6}$51.70\pm4.08$ & \heatmaplevel{7}$-1.42\pm3.49$
        & \heatmaplevel{1}$23.03\pm2.86$ & \heatmaplevel{4}$-19.35\pm3.97$
        & \heatmaplevel{1}$11.20\pm0.89$ & \heatmaplevel{9}$-\textbf{2.77}\pm\textbf{1.66}$ \\

        & FLwF-2T
        & \heatmaplevel{2}$45.09\pm0.71$ & \heatmaplevel{8}$-15.34\pm1.87$
        & \heatmaplevel{7}$52.84\pm1.34$ & \heatmaplevel{9}$-0.29\pm1.15$
        & \heatmaplevel{2}$27.88\pm3.12$ & \heatmaplevel{9}$-\textbf{8.87}\pm\textbf{4.30}$
        & \heatmaplevel{4}$20.20\pm0.88$ & \heatmaplevel{6}$-12.24\pm0.33$ \\

        & \SPECIALC
        & \heatmaplevel{4}$55.07\pm0.76$ & \heatmaplevel{7}$-{23.61}\pm{2.25}$
        & \heatmaplevel{2}$44.46\pm7.47$ & \heatmaplevel{6}$-14.41\pm5.32$
        & \heatmaplevel{6}$37.57\pm2.85$ & \heatmaplevel{6}$-22.14\pm3.95$
        & \heatmaplevel{3}$20.07\pm1.32$ & \heatmaplevel{2}$-11.82\pm2.43$ \\

        & \textbf{\SPECIAL (ours)}
        & \heatmaplevel{9}$\textbf{62.88}\pm\textbf{0.77}$ & \heatmaplevel{5}$-25.15\pm0.41$
        & \heatmaplevel{9}$\textbf{53.61}\pm\textbf{3.41}$ & \heatmaplevel{9}$\textbf{0.00}\pm\textbf{3.54}$
        & \heatmaplevel{9}$\textbf{40.92}\pm\textbf{2.72}$ & \heatmaplevel{8}$-12.41\pm1.41$
        & \heatmaplevel{9}$\textbf{21.28}\pm\textbf{0.65}$ & \heatmaplevel{8}$-10.76\pm0.91$ \\
        \hline
    \end{tabular}}
\end{table*}

\begin{table*}[!h]
\centering
\caption{
   Task specific performance curves on Digit-10  with $N=6$ ($\alpha = 0.1$).
}
\begin{tabular}{l|ccccc}
\toprule
Method & USPS & SVHN & EMNIST & \multicolumn{1}{c|}{MNIST} & ACC \\ \midrule

FedAvg
& 0.8877 & 0.2792 & 0.2548 & \multicolumn{1}{c|}{0.9731} & 0.5987 \\

FedProx
& \textbf{0.8991} & 0.2740 & 0.2351 & \multicolumn{1}{c|}{0.9501} & 0.5896 \\
\hline

FedCIL
& 0.6878 & 0.1906 & 0.2777 & \multicolumn{1}{c|}{0.8631} & 0.5048 \\

MFCL
& 0.6944 & \textbf{0.5203} & 0.3627 & \multicolumn{1}{c|}{0.7481} & 0.5814 \\

SR-FDIL
& 0.8494 & 0.2385 & 0.3510 & \multicolumn{1}{c|}{0.9358} & 0.5937 \\
\hline

pFedDIL
& 0.7351 & 0.1289 & 0.1221 & \multicolumn{1}{c|}{0.7248} & 0.4277 \\

FLwF-2T
& 0.6504 & 0.1397 & 0.1863 & \multicolumn{1}{c|}{0.8273} & 0.4509 \\

\SPECIALC
& 0.8352 & 0.1977 & 0.2793 & \multicolumn{1}{c|}{0.8907} & 0.5507 \\

\textbf{\SPECIAL (ours)}
& 0.8650 & 0.1970 & \textbf{0.5137} & \multicolumn{1}{c|}{0.9396} & \textbf{0.6288} \\

\bottomrule
\end{tabular}
\end{table*}

\begin{table*}[!h]
\centering
\caption{
   Task specific performance curves on VLCS  with $N=6$  ($\alpha = 0.3$).
}
\begin{tabular}{l|ccccc}
\toprule
Method  & Caltech101 & LabelMe & SUN09  &{VOC2007} & ACC \\ \midrule

FedAvg
& 0.3837 & 0.3605 & 0.4378 & \multicolumn{1}{c|}{0.5843} & 0.4416 \\

FedProx
& 0.3831 & 0.3304 & 0.4252 & \multicolumn{1}{c|}{\textbf{0.7608}} & 0.4749 \\
\hline

FedCIL
& 0.6781 & 0.4847 & 0.4309 & \multicolumn{1}{c|}{0.4743} & 0.5170 \\

MFCL
& 0.2716 & 0.2529 & 0.3122 & \multicolumn{1}{c|}{0.3329} & 0.2924 \\

SR-FDIL
& 0.3647 & 0.3816 & 0.4569 & \multicolumn{1}{c|}{0.6294} & 0.4582 \\
\hline

pFedDIL
& \textbf{0.7702} & \textbf{0.5319} & 0.3829 & \multicolumn{1}{c|}{0.3832} & 0.5171 \\

FLwF-2T
& 0.7481 & 0.4441 & 0.4677 & \multicolumn{1}{c|}{0.4535} & 0.5283 \\

\SPECIALC
& 0.3846 & 0.3801 & 0.4549 & \multicolumn{1}{c|}{0.5586} & 0.4446 \\

\textbf{\SPECIAL (ours)}
& 0.7379 & 0.5214 & \textbf{0.4744} & \multicolumn{1}{c|}{0.4106} & \textbf{0.5361} \\

\bottomrule
\end{tabular}
\end{table*}

\begin{table*}[!h]
\centering
\caption{
   Task specific performance curves on PACS  with $N=6$  ($\alpha = 0.3$).
}
\begin{tabular}{l|cccc|c}
\toprule
Method & Sketch & Cartoon & Photo  & \multicolumn{1}{c|}{Art Painting} & ACC \\ \midrule

FedAvg
& 0.2646 & 0.3178 & 0.5006 & {0.4697} & 0.3882 \\

FedProx
& 0.2833 & 0.3167 & 0.4830 & {\textbf{0.5162}} & 0.3998 \\
\hline

FedCIL
& 0.2555 & 0.2989 & 0.4318 & {0.3191} & 0.3263 \\

MFCL
& 0.2578 & 0.3027 & 0.4480 & {0.4287} & 0.3593 \\

SR-FDIL
& 0.2368 & 0.3205 & 0.4641 & {0.4099} & 0.3578 \\
\hline

pFedDIL
& 0.2351 & 0.2184 & 0.2443 & {0.2237} & 0.2304 \\

FLwF-2T
& 0.2499 & 0.2136 & 0.3657 & {0.2862} & 0.2789 \\

\SPECIALC
& 0.2777 & 0.3303 & 0.4655 & {0.4292} & 0.3757 \\

\textbf{\SPECIAL (ours)}
& \textbf{0.3146} & \textbf{0.3582} & \textbf{0.5088} & {0.4551} & \textbf{0.4092} \\

\bottomrule
\end{tabular}
\end{table*}

\begin{table*}[!h]
\centering
\caption{
   Task specific performance curves on DN4IL  with $N=6$  ($\alpha = 5$).
}
\begin{tabular}{l|cccccc|c}
\toprule
Method & Sketch & Infograph & Painting  & Quickdraw &    Real    & \multicolumn{1}{l|}{Clipart}       & ACC \\ \midrule

FedAvg
& 0.2012 & 0.0554 & 0.1481 & 0.1452 & 0.2719 & {\textbf{0.4323}} & 0.2090 \\

FedProx
& 0.1891 & 0.0618 & 0.1487 & 0.1316 & 0.2771 & {0.4227} & 0.2052 \\
\hline

FedCIL
& 0.0662 & 0.0380 & 0.0856 & 0.0800 & 0.1704 & {0.2690} & 0.1182 \\

MFCL
& 0.1869 & 0.0590 & 0.1737 & 0.0903 & \textbf{0.3041} & {0.4214} & 0.2059 \\

SR-FDIL
& 0.1961 & \textbf{0.0683} & 0.1716 & 0.1211 & 0.2822 & {0.4274} & 0.2111 \\
\hline

pFedDIL
& \textbf{0.2352} & 0.0389 & 0.0833 & 0.0534 & 0.1175 & {0.1436} & 0.1120 \\

FLwF-2T
& 0.1888 & 0.0299 & 0.1250 & \textbf{0.1846} & 0.2659 & {0.4176} & 0.2020 \\

\SPECIALC
& 0.2389 & 0.0625 & 0.1508 & 0.1177 & 0.2574 & {0.3767} & 0.2007 \\

\textbf{\SPECIAL (ours)}
& 0.2212 & 0.0556 & \textbf{0.1633} & 0.1414 & 0.2954 & {0.3999} & \textbf{0.2128} \\

\bottomrule
\end{tabular}
\end{table*}

% \clearpage
\subsubsection{$N = 8$}
\begin{table*}[!h]
    \centering
    \caption{Main FDIL results under partial participation  with full participation.}
    \resizebox{1\textwidth}{!}{%
    \begin{tabular}{clcccccccc}
        \hline
        \multirow{2}{*}{Type} & \multirow{2}{*}{Method}
        & \multicolumn{2}{c}{Digit-10}
        & \multicolumn{2}{c}{VLCS}
        & \multicolumn{2}{c}{PACS}
        & \multicolumn{2}{c}{DN4IL} \\
        \cmidrule(lr){3-4} \cmidrule(lr){5-6} \cmidrule(lr){7-8} \cmidrule(lr){9-10}
        &  & ACC (\%) $\uparrow$ & BWT (\%) $\uparrow$
           & ACC (\%) $\uparrow$ & BWT (\%) $\uparrow$
           & ACC (\%) $\uparrow$ & BWT (\%) $\uparrow$
           & ACC (\%) $\uparrow$ & BWT (\%) $\uparrow$ \\
        \hline

                \multirow{2}{*}{Standard-FL}
        & FedAvg
        & \heatmaplevel{7}$61.18\pm1.21$ & \heatmaplevel{3}$-28.78\pm1.02$
        & \heatmaplevel{7}$51.91\pm3.17$ & \heatmaplevel{2}$-26.58\pm4.78$
        & \heatmaplevel{6}$42.09\pm0.41$ & \heatmaplevel{4}$-20.70\pm1.87$
        & \heatmaplevel{7}$21.04\pm0.04$ & \heatmaplevel{4}$-12.15\pm0.34$ \\

        & FedProx
        & \heatmaplevel{4}$55.23\pm1.82$ & \heatmaplevel{4}$-28.32\pm0.88$
        & \heatmaplevel{8}${53.36}\pm{1.04}$ & \heatmaplevel{1}$-34.45\pm1.13$
        & \heatmaplevel{8}${42.91}\pm{1.44}$ & \heatmaplevel{1}$-29.27\pm1.04$
        & \heatmaplevel{6}$20.32\pm2.31$ & \heatmaplevel{7}$-10.03\pm1.25$ \\
        \hline

        \multirow{3}{*}{\shortstack{Replay-\\based}}
        & FedCIL
        & \heatmaplevel{2}$49.25\pm0.88$ & \heatmaplevel{5}$-27.60\pm5.04$
        & \heatmaplevel{3}$46.31\pm3.73$ & \heatmaplevel{6}$-7.81\pm1.66$
        & \heatmaplevel{3}$35.35\pm0.65$ & \heatmaplevel{7}$-14.55\pm1.11$
        & \heatmaplevel{2}$14.33\pm0.22$ & \heatmaplevel{6}$-10.23\pm0.65$ \\

        & MFCL
        & \heatmaplevel{6}${60.17}\pm{3.04}$ & \heatmaplevel{8}$-23.87\pm6.30$
        & \heatmaplevel{4}$47.30\pm3.50$ & \heatmaplevel{5}$-6.40\pm0.05$
        & \heatmaplevel{4}$35.80\pm1.97$ & \heatmaplevel{5}$-18.82\pm5.67$
        & \heatmaplevel{4}$19.21\pm0.97$ & \heatmaplevel{8}$-9.19\pm0.99$ \\

        & SR-FDIL
        & \heatmaplevel{8}$63.57\pm0.61$ & \heatmaplevel{4}$-28.42\pm0.39$
        & \heatmaplevel{5}$44.79\pm1.56$ & \heatmaplevel{3}$-19.04\pm1.47$
        & \heatmaplevel{7}$42.90\pm1.81$ & \heatmaplevel{3}$-23.69\pm1.64$
        & \heatmaplevel{8}$21.46\pm1.07$ & \heatmaplevel{5}$-11.40\pm1.00$ \\
        \hline

        \multirow{4}{*}{\shortstack{Replay-\\free}}
        & pFedDIL
        & \heatmaplevel{1}$43.31\pm1.88$ & \heatmaplevel{9}$-\textbf{9.83}\pm\textbf{1.27}$
        & \heatmaplevel{6}$50.00\pm2.22$ & \heatmaplevel{4}$-4.94\pm1.67$
        & \heatmaplevel{1}$24.11\pm0.95$ & \heatmaplevel{4}$-19.10\pm1.97$
        & \heatmaplevel{1}$10.51\pm0.65$ & \heatmaplevel{9}$-\textbf{3.47}\pm\textbf{1.21}$ \\

        & FLwF-2T
        & \heatmaplevel{7}$61.33\pm1.15$ & \heatmaplevel{7}$-24.10\pm6.31$
        & \heatmaplevel{7}$52.43\pm0.79$ & \heatmaplevel{7}$-{3.35}\pm{2.76}$
        & \heatmaplevel{2}$27.47\pm0.79$ & \heatmaplevel{9}$-\textbf{9.28}\pm\textbf{1.62}$
        & \heatmaplevel{5}${20.00}\pm{0.78}$ & \heatmaplevel{1}$-14.51\pm1.65$ \\

        & \SPECIALC
        & \heatmaplevel{5}$56.25\pm0.55$ & \heatmaplevel{9}$-22.85\pm1.86$
        & \heatmaplevel{2}$44.14\pm2.59$ & \heatmaplevel{4}$-16.00\pm2.38$
        & \heatmaplevel{5}$42.50\pm2.02$ & \heatmaplevel{2}$-21.14\pm1.42$
        & \heatmaplevel{6}$20.14\pm0.93$ & \heatmaplevel{3}$-11.71\pm0.22$ \\

        & \textbf{\SPECIAL (ours)}
        & \heatmaplevel{9}$\textbf{64.37}\pm\textbf{1.21}$ & \heatmaplevel{6}$-24.14\pm2.36$
        & \heatmaplevel{9}$\textbf{55.39}\pm\textbf{2.31}$ & \heatmaplevel{8}$-\textbf{2.22}\pm\textbf{3.49}$
        & \heatmaplevel{9}$\textbf{44.63}\pm\textbf{0.81}$ & \heatmaplevel{6}$-18.65\pm2.39$
        & \heatmaplevel{9}$\textbf{21.70}\pm\textbf{0.74}$ & \heatmaplevel{8}$-8.34\pm1.61$ \\
        \hline
    \end{tabular}}
\end{table*}

\begin{table*}[!h]
\centering
\caption{
   Task specific performance curves on Digit-10  with full participation  ($\alpha = 0.1$).
}
\begin{tabular}{l|ccccc}
\toprule
Method & USPS & SVHN & EMNIST & \multicolumn{1}{c|}{MNIST} & ACC \\ \midrule

FedAvg
& 0.9113 & 0.2635 & 0.2928 & \multicolumn{1}{c|}{0.9794} & 0.6118 \\

FedProx
& 0.6800 & 0.1673 & 0.3891 & \multicolumn{1}{c|}{0.9727} & 0.5523 \\
\hline

FedCIL
& 0.6265 & 0.1876 & 0.2508 & \multicolumn{1}{c|}{0.9049} & 0.4924 \\

MFCL
& 0.7777 & 0.4016 & 0.3603 & \multicolumn{1}{c|}{0.8671} & 0.6017 \\

SR-FDIL
& \textbf{0.9189} & 0.2818 & 0.3607 & \multicolumn{1}{c|}{\textbf{0.9812}} & 0.6357 \\
\hline

pFedDIL
& 0.7535 & 0.1365 & 0.1300 & \multicolumn{1}{c|}{0.7127} & 0.4332 \\

FLwF-2T
& 0.7845 & \textbf{0.5586} & 0.3335 & \multicolumn{1}{c|}{0.7765} & 0.6133 \\

\SPECIALC
& 0.8527 & 0.2006 & 0.2919 & \multicolumn{1}{c|}{0.9050} & 0.5625 \\

\textbf{\SPECIAL (ours)}
& 0.8622 & 0.2171 & \textbf{0.5411} & \multicolumn{1}{c|}{0.9544} & \textbf{0.6437} \\

\bottomrule
\end{tabular}
\end{table*}

\begin{table*}[!h]
\centering
\caption{
   Task specific performance curves on VLCS  with full participation ($\alpha = 0.3$).
}

\begin{tabular}{l|ccccc}
\toprule
Method & Caltech101 & LabelMe & SUN09  &{VOC2007} & ACC \\ \midrule

FedAvg
& 0.4853 & 0.3917 & 0.5037 & \multicolumn{1}{c|}{0.6956} & 0.5191 \\

FedProx
& 0.4330 & 0.3949 & 0.4751 & \multicolumn{1}{c|}{\textbf{0.8313}} & 0.5336 \\
\hline

FedCIL
& 0.5347 & 0.4796 & 0.4171 & \multicolumn{1}{c|}{0.4208} & 0.4631 \\

MFCL
& 0.5536 & 0.4801 & 0.3841 & \multicolumn{1}{c|}{0.4742} & 0.4730 \\

SR-FDIL
& 0.4483 & 0.4731 & 0.4191 & \multicolumn{1}{c|}{0.4509} & 0.4478 \\
\hline

pFedDIL
& \textbf{0.7274} & 0.4932 & 0.3972 & \multicolumn{1}{c|}{0.3824} & 0.5000 \\

FLwF-2T
& 0.7483 & 0.5073 & 0.4159 & \multicolumn{1}{c|}{0.4256} & 0.5243 \\

\SPECIALC
& 0.4701 & 0.4520 & 0.4057 & \multicolumn{1}{c|}{0.4378} & 0.4414 \\

\textbf{\SPECIAL (ours)}
& 0.6990 & \textbf{0.5178} & \textbf{0.5362} & \multicolumn{1}{c|}{0.4628} & \textbf{0.5539} \\

\bottomrule
\end{tabular}
\end{table*}

\begin{table*}[!h]
\centering
\caption{
   Task specific performance curves on PACS  with full participation ($\alpha = 0.3$).
}

\begin{tabular}{l|cccc|c}
\toprule
Method & Sketch & Cartoon & Photo  & \multicolumn{1}{c|}{Art Painting} & ACC \\ \midrule

FedAvg
& 0.3361 & 0.3255 & 0.5074 & \multicolumn{1}{c|}{0.5147} & 0.4209 \\

FedProx
& 0.2838 & 0.3591 & 0.5238 & \multicolumn{1}{c|}{\textbf{0.5495}} & 0.4291 \\
\hline

FedCIL
& 0.2277 & 0.3487 & 0.4791 & \multicolumn{1}{c|}{0.3586} & 0.3535 \\

MFCL
& 0.1959 & 0.3008 & 0.4737 & \multicolumn{1}{c|}{0.4616} & 0.3580 \\

SR-FDIL
& 0.3102 & 0.3541 & \textbf{0.5597} & \multicolumn{1}{c|}{0.4919} & 0.4290 \\
\hline

pFedDIL
& 0.2527 & 0.2165 & 0.2604 & \multicolumn{1}{c|}{0.2346} & 0.2410 \\

FLwF-2T
& 0.2544 & 0.2107 & 0.3738 & \multicolumn{1}{c|}{0.2599} & 0.2747 \\

\SPECIALC
& 0.3270 & \textbf{0.3670} & 0.5508 & \multicolumn{1}{c|}{0.4953} & 0.4350 \\

\textbf{\SPECIAL (ours)}
& \textbf{0.3770} & 0.3554 & 0.5452 & \multicolumn{1}{c|}{0.5077} & \textbf{0.4463} \\

\bottomrule
\end{tabular}
\end{table*}

\begin{table*}[!h]
\centering
\caption{
   Task specific performance curves on DN4IL  with full participation ($\alpha = 5$).
}

\begin{tabular}{l|cccccc|c}
\toprule
Method & Sketch & Infograph & Painting  & Quickdraw &    Real    & \multicolumn{1}{l|}{Clipart}       & ACC \\ \midrule

FedAvg
& 0.1936 & 0.0655 & 0.1569 & 0.1412 & 0.2776 & \multicolumn{1}{c|}{0.4270} & 0.2103 \\

FedProx
& 0.1878 & 0.0621 & 0.1477 & 0.1451 & 0.2721 & \multicolumn{1}{c|}{0.4042} & 0.2032 \\
\hline

FedCIL
& 0.0872 & 0.0488 & 0.0928 & 0.1123 & 0.1834 & \multicolumn{1}{c|}{0.3352} & 0.1433 \\

MFCL
& 0.1679 & 0.0672 & 0.1581 & 0.0943 & 0.2828 & \multicolumn{1}{c|}{0.3823} & 0.1921 \\

SR-FDIL
& 0.2071 & 0.0614 & 0.1575 & 0.1486 & 0.2810 & \multicolumn{1}{c|}{\textbf{0.4317}} & 0.2146 \\
\hline

pFedDIL
& 0.1955 & 0.0406 & 0.0772 & 0.0580 & 0.1215 & \multicolumn{1}{c|}{0.1376} & 0.1051 \\

FLwF-2T
& 0.1858 & 0.0272 & 0.1209 & \textbf{0.1834} & 0.2690 & \multicolumn{1}{c|}{0.4139} & 0.2000 \\

\SPECIALC
& \textbf{0.2325} & 0.0700 & 0.1471 & 0.1146 & 0.2557 & \multicolumn{1}{c|}{0.3886} & 0.2014 \\

\textbf{\SPECIAL (ours)}
& 0.2071 & \textbf{0.0706} & \textbf{0.1688} & 0.1460 & \textbf{0.2997} & \multicolumn{1}{c|}{0.4098} & \textbf{0.2170} \\

\bottomrule
\end{tabular}
\end{table*}

%%%%%%%%%%%%%%%%%%%%%%%%%%%%%%%%%%%%%%%%%%%%%%%%%%%%%%%%%%%%%%%%%%%%%%%%%%%%%%%
%%%%%%%%%%%%%%%%%%%%%%%%%%%%%%%%%%%%%%%%%%%%%%%%%%%%%%%%%%%%%%%%%%%%%%%%%%%%%%%

\end{document}